\definecolor{Bleu}{RGB}{30,144,245}
\definecolor{Red}{HTML}{FF617B}
\theoremstyle{plain}
\newtheorem{remark}{Remark}
\newtheorem{theorem}{Theorem}
\newtheorem{lemma}{Lemma}
\newtheorem{definition}{Definition}
\newtheorem{proposition}{Proposition}
\newcommand{\T}{\intercal}
\DeclareMathOperator{\m}{m}
\DeclareMathOperator{\M}{M}
\newcommand{\bP}{\mathbb{P}}
\newcommand{\bA}{\mathbb{A}}
\newcommand{\bE}{\mathbb{E}}
\newcommand{\bR}{\mathbb{R}}
\newcommand{\bN}{\mathbb{N}}
\newcommand{\bX}{\mathbf{X}}
\newcommand{\cN}{\mathcal{N}}
\newcommand{\cH}{\mathcal{H}}
\DeclareMathOperator{\KL}{KL}
\newcommand{\cS}{\mathcal{S}}
\newcommand{\cA}{\mathcal{A}}
\newcommand{\cB}{\mathcal{B}}
\newcommand{\cL}{\mathcal{L}}
\newcommand{\cF}{\mathcal{F}}
\newcommand{\cD}{\mathcal{D}}
\newcommand{\cO}{\mathcal{O}}
\newcommand{\cP}{\mathcal{P}}
\newcommand{\ind}{\mathbb{I}}
\newcommand{\cE}{\mathcal{E}}
\newcommand{\iid}{{ \it i.i.d }}
\newcommand{\tm}[1]{\textrm{#1}}
\newcommand{\impl}{\Longrightarrow}
\DeclareMathOperator*{\argmax}{argmax}
\DeclareMathOperator*{\argmin}{argmin}
\DeclareMathOperator{\mh}{{{m}}}
\DeclareMathOperator{\Mh}{{{M}}}
\DeclareMathOperator{\OPT}{\mathrm{OPT}}
\newcommand{\muh}{\ensuremath{{\widehat{\theta}}}} 
\newcommand{\vmu}{\ensuremath{\boldsymbol{\theta}}} 
\newcommand{\vmuh}{\ensuremath{\boldsymbol{\widehat{\theta}}}} 
\renewcommand{\complement}{\mathsf{c}}
\renewcommand{\mu}{\theta} 
\renewcommand{\bA}{[K]} 
\newcommand{\ege}{\hyperref[alg:three]{EGE}}
\newcommand{\egesrk}{\hyperref[alg:amp]{EGE-SR-$k$}}
\newcommand{\apefb}{\hyperref[alg:pucbe]{APE-FB}}
\begin{document}

%

%

\twocolumn[

\aistatstitle{Bandit Pareto Set Identification: the Fixed Budget Setting}

\aistatsauthor{Cyrille Kone$^1$ \And Emilie Kaufmann$^1$ \And  Laura Richert$^2$}

\aistatsaddress{$^1$ Univ. Lille, Inria, CNRS, Centrale Lille, UMR 9198-CRIStAL, F-59000 Lille, France \\ 
	$^2$ Univ. Bordeaux, Inserm, Inria, BPH, U1219, Sistm, F-33000 Bordeaux, France} ]
\runningauthor{Cyrille Kone, \;Emilie Kaufmann, \;Laura Richert}
\begin{abstract} 
We study a multi-objective pure exploration problem in a multi-armed bandit model. Each arm is associated to an unknown multi-variate distribution and the goal is to identify the distributions whose mean is not uniformly worse than that of another distribution: the Pareto optimal set. We propose and analyze the first algorithms for the \emph{fixed budget} Pareto Set Identification task. We propose Empirical Gap Elimination, a family of algorithms combining a careful estimation of the ``hardness to classify'' each arm in or out of the Pareto set with a generic elimination scheme. We prove that two particular instances, EGE-SR and EGE-SH, have a probability of error that decays exponentially fast with the budget, with an exponent supported by an information theoretic lower-bound. We complement these findings with an empirical study using real-world and synthetic datasets, which showcase the good performance of our algorithms.\end{abstract}



\section{INTRODUCTION}

The multi-armed bandit problem has been extensively studied in the literature, predominantly as a single-objective stochastic optimization problem. In this framework, an agent sequentially collects samples from a set of $K$ (unknown) probability distributions, called arms. Two common goals are either to maximize the sum of observations (regret minimization, \citep{lattimore_bandit_2020}) or to identify the arm with the largest expected value (best arm identification, \cite{even-dar_pac_2002, audibert_best_2010}). In many real-world problems, however, there are multiple (possibly conflicting) objectives to optimize simultaneously and there might not exist a unique arm that maximizes all these objectives. This leads to a scenario in which the observations are multi-dimensional, and there may be several \emph{Pareto optimal} arms. An arm belongs to the Pareto optimal set $\cS^\star$ if it is not dominated by any other arm (in the sense that the mean values of all the objectives would be larger for the latter). While some line of work aims at minimizing the \emph{Pareto regret} by strategically selecting arms from the Pareto set along the agent's trajectory \citep{drugan_designing_2013}, we are interested in the pure exploration counterpart of this problem, first studied by \cite{auer_pareto_2016}, in which the goal is to identify (relaxations of) the set $\cS^\star$, as quickly and accurately as possible. This multi-objective exploration problem has a wide range of applications including adaptive clinical trials (with multiple indicators of efficacy), software and hardware design (with multiple criteria such as speed and energy consumption), or A/B/n testing for recommender systems (with different metrics of users engagement).

Notably, prior work \citep{auer_pareto_2016, ararat_vector_2021, kone2023adaptive} has studied the Pareto Set Identification (PSI) problem under the \emph{fixed-confidence setting}: given a risk $\delta \in (0, 1)$, the agent stops at a random time $\tau$ and recommends a set $\widehat S_\tau$ such that $\bP(\widehat S_\tau \neq \cS^\star) \leq \delta$, while minimizing its expected stopping time $\bE[\tau]$. However, to the best of our knowledge, no algorithms have been proposed for the dual \emph{fixed-budget setting} in which the agent should output a guess for the Pareto set after a given budget $T$ while minimizing its probability of error $\bP(\widehat S_T \neq \cS^\star)$. For some applications, the fixed-budget setting can be viewed as more practical: e.g., in clinical trials the maximal number of patients is typically defined in advance (due to some financial constraints), and the stopping time used in the fixed-confidence formulation could be prohibitively large. 
  
We fill this gap by proposing the first algorithm(s) for Pareto Set Identification in the fixed-budget setting. Our approach may be viewed as an extension of {Successive~Rejects} (SR) \citep{audibert_best_2010} and {Sequential~Halving} (SH) \citep{karnin_almost_2013}, two state-of-the-art algorithms for fixed-budget best arm identification (i.e. PSI in the uni-dimensional setting), that are based on arm eliminations. We propose a novel arm elimination criterion based on the computation of an \emph{Empirical Gap}. This quantity can be viewed as a measure of the hardness to classify an arm as Pareto optimal or sub-optimal and is inspired by the gaps featured in the analysis of fixed-confidence algorithms \citep{auer_pareto_2016,kone2023adaptive}. While SR and SH successively eliminate arms with smallest empirical means, in Empirical Gap Elimination (EGE), the arms with the largest empirical gaps are eliminated and appropriately classified as optimal or sub-optimal. 

We prove an upper bound on the error probability of a generic version of EGE using any arm allocations satisfying some budget constraints (Theorem~\ref{thm:main-res}). We explicit our bound for two instances using the same arm allocations as SR and SH respectively, called EGE-SR and EGE-SH, showing that their error probability decrease exponentially fast with the budget (Corollary~\ref{thm:sr_sh}). We derive an information-theoretic lower bound showing that these algorithms are essentially optimal in the worse case (Theorem~\ref{thm:thm-lbd-consistent}). On the practical side, we report in Section~\ref{sec:exp} some experimental results showing that EGE-SR/SH outperform uniform sampling and are much more robust than \apefb{}, an adaptation of an existing fixed-confidence algorithm that requires some prior knowledge of the problem complexity as input (discussed in Appendix~\ref{sec:pucbe}). Finally, we propose in Section~\ref{sec:relax}, \egesrk{}, a variant of EGE-SR that tackles a relaxation of PSI first considered by \cite{kone2023adaptive} in which one should identify at most $k$ Pareto optimal arms, with a provably reduced probability of error. 

\paragraph{Related work} 
The Best Arm Identification problem (BAI) is one of the most studied pure exploration problems in the bandit literature. The fixed-confidence formulation of BAI is by now well understood as there are efficient algorithms with guaranteed bounds on their sample complexity, $\bE[\tau]$ (e.g., \citep{kalyanakrishnan_pac_2012,gabillon_best_2012,jamieson_lil_2013}) and some algorithms that attain the minimal sample complexity on any (parametric) problem instance in the asymptotic regime $\delta \rightarrow 0$ (e.g., \citep{garivier_optimal_2016, you23a}). In contrast, the theoretical understanding of fixed-budget BAI is more elusive. While the error probability of any reasonable strategy (e.g. uniform sampling) decays exponentially fast with the budget $T$, \cite{degenne23a} shows that there is no optimal exponent that can be attained by a single algorithm on all bandit instances. 
Still, the Successive~Rejects and Sequential~Halving algorithms mentioned above have their error probability scaling in $\exp(-T/\log(K)H(\nu))$ for some complexity quantity $H(\nu)$ depending on the bandit instance $\nu$ and \cite{carpentier_tight_2016} proved a (worst-case) lower bound showing that there exists instances $\nu$ in which this rate is un-improvable (up to constant factors in the exponent) when the budget $T$ is large.

In recent years, there has been a growing interest in pure exploration problems in a multi-dimensional setting. Adaptive algorithms for Pareto Set Identification were first proposed in the literature on black-box multi-objective optimization. Besides several evolutionary heuristics (e.g., \cite{nsga,knowles_parego_2006}), the works of \cite{zuluaga_active_2013, zuluaga_e-pal_2016} have studied PSI under a Gaussian Process modeling assumption, obtaining some error bounds that decay polynomially in $T$ and depend on some notion of information gain that is specific to the GP setting. In the bandit literature, the first fixed-confidence PSI algorithm proposed by \cite{auer_pareto_2016} is based on uniform sampling and an accept/reject mechanism. \cite{kone2023adaptive} then proposed a more adaptive LUCB-like algorithm for PSI and some relaxations introduced therein. \cite{ararat_vector_2021} studied an extension of PSI to any $\bR^D$ partial order defined by an ordering cone, still in the fixed confidence setting. 

Alongside PSI, other multi-dimensional pure exploration problems have been investigated such as \emph{feasible arm identification} \citep{katz-samuels_feasible_2018} in which the goal is to identify the arms which belong to a given polyhedron $\cP\subset \bR^D$ 
or \emph{top feasible arm identification} \citep{katz-samuels_top_2019} in which one seeks a feasible arm that further maximizes some linear combinations of the different objectives. While the former work considers the fixed-budget setting and also proposes some counterpart of Successive~Rejects, the latter considers the fixed-confidence setting. In the fixed-budget setting, a special case was recently considered by \cite{faizal2022constrained} in a  2-dimensional setting in which the goal is to find the arm maximizing one attribute under the constraint that the second attribute is larger than a given threshold. They propose a SR-like algorithm that is similar in spirit to our Empirical Gap Elimination, but with a very different notion of empirical gap. 

\section{SETTING}
We formalize the fixed budget Pareto Set Identification problem and introduce some notation. 

We are given $K$ distributions (or arms) $\nu_1, \dots, \nu_K$ over $\bR^D$ with means (resp.) $\vmu_1, \dots, \vmu_K$.  
Let $\nu:=(\nu_1, \dots, \nu_K)$ be the bandit instance and $\bm\Theta := (\vmu_1 \dots \vmu_K)^\T$. We use boldfaced symbols for $\bR^D$ elements. As in prior work on multi-objective pure exploration, we assume that the marginal distributions of each arm are all $\sigma$-subgaussian. 
 We recall that a random variable $X$ is $\sigma$-subgaussian if $\forall\; u\in \bR$, $\bE[\exp\left( u (X -\bE[X]) \right)] \leq \exp\left( {u^2\sigma^2}/2\right)$. 
\begin{definition}
Given two arms $i,j \in [K]$, $i$ is weakly (Pareto) dominated by $j$ (denoted by $\vmu_i \leq  \vmu_j$) if for any $d\in \{1, \dots, D\}$, $\mu_{i}^d \leq \mu_j^d$. The arm $i$ is (Pareto) dominated by $j$ ($\vmu_i \preceq \vmu_j$ or $i\preceq j$) if $i$ is weakly dominated by $j$ and there exists $d \in \{1, \dots, D\}$ such that $\mu_i^d < \mu_j^d$. The arm $i$ is strictly (Pareto) dominated by $j$ ($\vmu_i\prec \vmu_j$ or $i \prec j$) if for any $d \in \{1, \dots, D\}$, $\mu_i^d < \mu_j^d$. 
\end{definition}
 The {Pareto set} $\cS^\star(\nu)$ is \begin{equation*}
    \cS^\star(\nu) := \left\{ i \in [K] \tm{ s.t } \nexists j \in [K]: \vmu_i  \prec \vmu_j \right\},
\end{equation*}
and will be denoted by $\cS^\star$ when $\nu$ is clear from the context. Any arm $a\in \cS^\star$ will be called (Pareto) \emph{optimal} and an arm $a\notin \cS^\star$ is called \emph{sub-optimal}.

At each time $t=1, 2, \dots,T$, the agent chooses an arm $a_t$ and observes an independent draw $\bX_t \sim \nu_{a_t}$ with $\bE(\bX_{a_t}) = \vmu_{a_t}$. We denote by $\bP_{\nu}$ the law of the stochastic process $\{\bX_t\}_{t\geq 1}$ and by $\bE_{\nu}$, the expectation under $\bP_{\nu}$. Let $\cF_t:= \sigma(a_1, \bX_1, \dots, a_t, \bX_t, a_{t+1})$ be the $\sigma$-algebra representing the history of the process up to time $t$ and $\cF:= \{\cF_t\}_{t\geq 1}$, the filtration of the process. The agent's strategy is adaptive in the sense that $a_t$ is $\cF_{t-1}$ measurable. At time $T$, the agent outputs a guess $\widehat{S}_T$ (which is $\cF_T$ measurable) for $\cS^\star$ which should minimize the error probability $e_T(\nu):= \bP_{\nu}(\widehat S_T\neq \cS^\star(\nu))$. 

To characterize the difficulty of the problem, we introduce 
below some quantities that allow to measure how much an arm is dominated. For any arms $i,j$, we let 
\begin{eqnarray*}
	\m(i,j) &:=& \min_d \left[\mu_j^d - \mu_i^d\right], \\
	\M(i,j) &:=& \max_d \left[ \mu_i^d - \mu_j^d\right]. 
\end{eqnarray*}
If $\vmu_i\nprec \vmu_j$, $\M(i,j)$ is the smallest uniform increase of $j$ that makes it dominate $i$. If $\vmu_i\prec \vmu_j$ then $\m(i,j)$ is the smallest increase of any component of $i$ which makes it non-dominated by $j$. 
\cite{auer_pareto_2016} proposed an algorithm in the fixed-confidence setting whose sample complexity is characterized by some sub-optimality gaps $\Delta_i$'s defined as follows. For a sub-optimal arm $i\notin \cS^\star$, 
\begin{equation}
\label{eq:def-gap-sub}
	\Delta_i := \Delta_i^\star:= \max_{j \in \cS^\star} \m(i,j), 
\end{equation}
which is the smallest quantity that should be added component-wise to $\vmu_i$ to make $i$ appear Pareto optimal w.r.t $\{\vmu_k: k\in [K]\setminus \{i\} \}$.
For an optimal arm $i \in \cS^\star$,
\begin{equation}
\label{eq:def-gap-opt}
    \Delta_i := 
    \min (\delta_i^+, \delta_i^-)
\end{equation}
where 
\begin{eqnarray*}
	\delta_i^+ &:=& \min_{j\in \cS^\star \setminus \{i\}} \min(\M(i,j), \M(j,i))\;, \\
	\delta_i^- &:=& \min_{j\in [K]\setminus \cS^\star}[(\M(j,i))^+ + \Delta_{j}]\;,
\end{eqnarray*} with the convention $\min_\emptyset = +\infty$. 
$\delta_i^+$ accounts for how much $i$ is close to dominate (or to be dominated by) another optimal arm while $\delta_i^-$ translates in a sense the smallest ``margin" from an optimal arm $i$ to the sub-optimal arms. We illustrate these quantities in Appendix~\ref{sec:lowerBound}. 
\cite{auer_pareto_2016} show that the difficulty (i.e. near-optimal sample complexity) of fixed-confidence PSI is characterized by the complexity term
$$ H(\nu) :=  \sum_{a=1}^K  \frac{1}{\Delta_{a}^{2}}.$$
In this work, we show that it is also a relevant complexity measure for the fixed-budget setting.


\section{ALGORITHMS}
In this section, we introduce the family of Empirical Gap Elimination (EGE) algorithms. 

\subsection{Empirical Gap Elimination}
An Empirical Gap Elimination algorithm uses a round-based structure. The algorithm is parameterized by a number of rounds $R>0$, an arm schedule vector $\bm\lambda = (\lambda_1,\dots,\lambda_R,\lambda_{R+1})\in [K]^{R+1}$ satisfying $\lambda_{r} > \lambda_{r+1}$ where $\lambda_r$ indicates how many arms are active in round $r$, and an allocation vector $\bm t= (t_1,\dots,t_R) \in [T]^{R}$, where $t_r$ is the number of samples gathered from each active arm in round $r$. Given the maximal budget $T$, these vectors should further satisfy the following:
\begin{align}
&\lambda_1 = K \quad \text{and}\quad \lambda_{R+1} \in \{0, 1\} 
\label{eq:cond1} \\
&\text{and} \quad \sum_{r=1}^R \lambda_r t_r \leq T,	\label{eq:cond2}
\end{align}
ideally with an equality.   

In each round $r$, the algorithm maintains a set of active arms, denoted by $A_r$, that is of size $\lambda_r$, and collects $t_r$ new samples from each arm in $A_r$. The total number of samples from each arm in $A_r$ gathered in the first $r$ rounds\footnote{We emphasize that EGE algorithms do not discard samples between rounds} is therefore 
$$n_r = \sum_{s= 1}^r t_s.$$
For each $a\in A_r$, the empirical estimate of $\bm\theta_a$ at the end of round $r$ is denoted by
$\vmuh_a(r) := \frac{1}{n_r}\sum_{s=1}^{n_r} \bX_{a, s}$
where $\bX_{a,s}$ denote the $s$-th observation drawn \!\!\!\iid from distribution $\nu_a$. These estimates are used to carefully decide which arm to explore in the next round, based on an appropriate notion of \emph{empirical gap}. 

These gaps are empirical variants of the (fixed-confidence) gaps introduced in the previous section. We first propose a rewriting of these quantities to remove the explicit dependency on $\cS^\star(\nu)$.

\begin{restatable}{lemma}{lemGapRedef}
\label{lem:lem-gap-redef}
	For any arm $i \in [K]$, 
\[\Delta_i = \left\{\begin{array}{ll}
	               \Delta_i^\star = \max_{j\in [K]}\m(i, j) & \text{ if } i\notin \cS^\star \\
	                \delta_i^\star & \text{ if } i\in \cS^\star    \\
	                    \end{array}\right.\;,
\]		
where $\delta_i^\star := \min_{j\neq i} [\M(i, j)\land (\M(j, i)^+ +(\Delta_j^\star)^+)].$
\end{restatable}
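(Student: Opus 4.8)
The plan is to prove the two branches of the dichotomy separately, using throughout one elementary fact about finite instances: every arm $a$ is weakly dominated by some Pareto optimal arm, i.e.\ there is $\ell\in\cS^\star$ with $\vmu_a\leq\vmu_\ell$. Indeed, starting from $a$ and repeatedly passing to an arm that Pareto-dominates the current one --- possible exactly as long as the current arm lies outside $\cS^\star$ --- yields, by finiteness, an arm $\ell\in\cS^\star$ with $\vmu_a\leq\vmu_\ell$. From this I will repeatedly use the monotonicity relations $\vmu_b\leq\vmu_c\Rightarrow \m(a,b)\leq\m(a,c)$ and $\vmu_b\leq\vmu_c\Rightarrow\M(a,c)\leq\M(a,b)$, together with the sign facts $\M(i,j)\geq 0$ whenever $i\in\cS^\star$ and $\M(i,k)\geq 0$ whenever $i,k\in\cS^\star$ (otherwise $i$, resp.\ $k$, would be strictly below the other arm in every coordinate, hence Pareto-dominated, contradicting membership in $\cS^\star$).

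For a sub-optimal arm $i\notin\cS^\star$ I must show $\max_{j\in\cS^\star}\m(i,j)=\max_{j\in[K]}\m(i,j)$. The inequality ``$\leq$'' is trivial. For ``$\geq$'', take $j_0\in\argmax_{j\in[K]}\m(i,j)$: if $j_0\in\cS^\star$ there is nothing to do, and otherwise pick $\ell\in\cS^\star$ with $\vmu_{j_0}\leq\vmu_\ell$, so $\m(i,\ell)\geq\m(i,j_0)$ by monotonicity. A by-product I record for later use is that $\Delta_j^\star\geq 0$ precisely when $j\notin\cS^\star$: if $j\in\cS^\star$ then $\m(j,k)\leq 0$ for every $k$ (else $\vmu_j$ would be strictly below $\vmu_k$ in all coordinates), while $\m(j,j)=0$, so $\Delta_j^\star=\max_k\m(j,k)=0$; hence $(\Delta_j^\star)^{+}=\Delta_j\,\ind\{j\notin\cS^\star\}$.

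For an optimal arm $i\in\cS^\star$ I must show $\delta_i^\star=\min(\delta_i^+,\delta_i^-)$. Substituting $(\Delta_j^\star)^{+}=\Delta_j\,\ind\{j\notin\cS^\star\}$ and splitting the minimum defining $\delta_i^\star$ according to whether $j\in\cS^\star\setminus\{i\}$ or $j\notin\cS^\star$, the term for $j\in\cS^\star\setminus\{i\}$ becomes $\M(i,j)\wedge\M(j,i)^{+}=\min(\M(i,j),\M(j,i))$ (using $\M(j,i)\geq 0$), which is exactly the $j$-th term of $\delta_i^+$; so $\delta_i^\star=\min(\delta_i^+,P)$ with $P:=\min_{j\notin\cS^\star}\bigl[\M(i,j)\wedge(\M(j,i)^{+}+\Delta_j)\bigr]$. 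Discarding the $\M(i,j)$-clause gives $P\leq\delta_i^-$, hence $\delta_i^\star\leq\min(\delta_i^+,\delta_i^-)$ at once.

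The remaining inequality $P\geq\min(\delta_i^+,\delta_i^-)$ is the crux, and the step I expect to be the main obstacle: the clause $\M(i,j)$ can genuinely push the $j$-th term of $P$ strictly below $\delta_i^-$ (one can exhibit instances with $\M(i,j)<\M(j,i)^{+}+\Delta_j$), so one must argue it never pushes it below $\delta_i^+$. The key move is, for each $j\notin\cS^\star$, to fix a witness $c\in\cS^\star$ with $\Delta_j=\m(j,c)$; since $\Delta_j=\Delta_j^\star\geq 0$ this forces $\vmu_j\leq\vmu_c$, whence $\M(i,j)\geq\M(i,c)$ by monotonicity. If $c\neq i$, this gives $\M(i,j)\geq\M(i,c)\geq\min(\M(i,c),\M(c,i))\geq\delta_i^+$, while $\M(j,i)^{+}+\Delta_j\geq\delta_i^-$ trivially, so the $j$-th term of $P$ is at least $\delta_i^+\wedge\delta_i^-$. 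If $c=i$, then $\Delta_j=\m(j,i)\geq 0$, so $\M(j,i)=-\m(j,i)\leq 0$, the clause $\M(j,i)^{+}+\Delta_j$ collapses to $\m(j,i)$, and since $\M(i,j)\geq\m(j,i)$ always, the $j$-th term equals $\m(j,i)=\M(j,i)^{+}+\Delta_j\geq\delta_i^-$. In both cases every term of $P$ is at least $\min(\delta_i^+,\delta_i^-)$, so $P\geq\min(\delta_i^+,\delta_i^-)$ and, combined with $\delta_i^\star=\min(\delta_i^+,P)$, the proof is complete. Apart from this crux, the only care needed is bookkeeping the positive parts $(\cdot)^{+}$ and the sign facts recorded above; everything else is routine.
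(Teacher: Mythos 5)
Your proof is correct, but it takes a genuinely different route from the paper's for the optimal-arm branch. For sub-optimal arms, the paper simply invokes Lemma~10 of \cite{kone2023adaptive} (the identity $\max_{k\in\cS^\star}\m(i,k)=\max_{k\in[K]}\m(i,k)$), whereas you re-derive it from the elementary fact that every arm is weakly dominated by some Pareto optimal arm --- same content, but self-contained. For an optimal arm $i$, the paper introduces the auxiliary quantity $\delta_i^{-'} := \min\bigl(\delta_i^-,\min_{j\in\cS^\star\setminus\{i\}}\M(j,i)\bigr)$ and then case-splits on whether $\min_{j\in\cS^\star\setminus\{i\}}\M(i,j)$ equals $\min_{j\neq i}\M(i,j)$; in the nontrivial case it leans on the \emph{uniqueness} part of the cited lemma (there is a sub-optimal $k$ with $\Omega(k)=\{i\}$, $\Delta_k^\star=\m(k,i)$ and $\min_{j\neq i}\M(i,j)=\M(i,k)$) to conclude $\Delta_i=\delta_i^{-'}=\delta_i^\star$. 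You instead decompose $\delta_i^\star=\min(\delta_i^+,P)$ with $P$ the minimum over sub-optimal $j$, observe $P\leq\delta_i^-$ for free, and prove $P\geq\min(\delta_i^+,\delta_i^-)$ term by term via the witness $c\in\cS^\star$ with $\Delta_j=\m(j,c)$ and the monotonicity $\M(i,j)\geq\M(i,c)$. This dispenses with the uniqueness statement entirely, isolates exactly why the extra $\M(i,j)$ clause in $\delta_i^\star$ can never push the minimum below $\min(\delta_i^+,\delta_i^-)$, and handles the edge cases ($c=i$ versus $c\neq i$) cleanly; the price is a slightly longer argument than the paper's citation-based one. Both proofs are valid.
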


We introduce the empirical quantities 
\begin{eqnarray*}
	\mh(i,j; r) &:=& \min_{d}\left[\muh_j^d(r) -  \muh_i^d(r)\right], \\ 
	\Mh(i,j; r) &:=& \max_d \left[(\muh_i^d(r) - \muh_j^d(r))\right]. 
\end{eqnarray*}
and the empirical Pareto set
\begin{eqnarray*}S_r &:=& \{i \in A_r : \nexists j \in A_r : \vmuh_i(r) \prec \vmuh_j(r) \},\\
 & = &\{i \in A_r: \forall j  \in A_r \backslash \{i\}, \Mh(i,j;r) > 0 \}\;.
\end{eqnarray*}

Finally, we define for any arm $i \in A_r$
\begin{eqnarray*}
\widehat \Delta_{i, r}^\star &:=& \max_{j\in A_r\backslash\{ i\}}\mh(i,j;r), \\
\widehat \delta_{i,r}^\star &:=& \!\!\min_{j \in A_r \backslash\{i\}} [\Mh(i,j;r) \;\land\; (\Mh(j,i;r)^+ + (\widehat \Delta_{i, r}^\star)^+)]
\end{eqnarray*}
 and for any $i\in A_r$
\begin{eqnarray}
\label{eq:def-emp-gap}
	\widehat \Delta_{i,r} := \begin{cases}
		\widehat \Delta_{i, r}^\star & \text{ if } i \in A_r \backslash{S_r},\\
		\widehat\delta_{i,r}^\star & \text{ else. }
	\end{cases}
\end{eqnarray}

\begin{algorithm}[t]
\caption{Empirical Gap Elimination (EGE)}
\label{alg:three}
\KwResult{Pareto set}
\KwData{parameters $R, \bm t, \bm \lambda$}
\Input{Set $A_1 = \{1, \dots, K\}, B_1 := \emptyset$, $D_1 = \emptyset$} 
 \For{$r=1, 2, \dots, R$}{

  Collect $t_r$ samples from each arm $a \in A_r$
  
  Compute $S_r$ the empirical Pareto set
  
  Let $A_{r+1}$ be the set of $\lambda_{r+1}$ arms in $A_r$ with the smallest empirical gaps $\widehat\Delta_{a,r}$
  
 \tcp{ties broken in favor of arms in $S_r$}
 
 $ B_{r+1} \gets B_r \cup \{ S_r \cap (A_r \backslash A_{r+1})\}$ \; 
 
 $ D_{r+1} \gets D_r \cup \{  (A_r \backslash A_{r+1}) \backslash S_r \}$ \; 
 
 }
 \Output{$\widehat S_T = B_{R+1}  \cup A_{R+1}$}
\end{algorithm}

At the end of round $r$, the algorithm sorts the arms by increasing order of their empirical gaps: $\widehat{\Delta}_{(1),r} \leq \dots \leq  \widehat{\Delta}_{(\lambda_r),r}$ and we define $A_{r+1} = \left\{(1),\dots,(\lambda_{r+1})\right\}$. In case of ties, i.e. if $\widehat{\Delta}_{(\lambda_{r+1}),r} = \widehat{\Delta}_{(\lambda_{r+1} + m),r}$ for some $m$, we first add arms in $S_r$ to $A_{r+1}$. We emphasize that this tie-breaking rule is crucial in our analysis. 
Arms in $A_{r+1}\backslash A_r$ are further classified as optimal (and added to the set $B_{r+1}$) or sub-optimal (and added to the set $D_{r+1}$) based on whether or not they belong to $S_r$. The output of the algorithm is the set $B_{R+1}\cup A_{R+1}$.  

\subsection{Particular Instances} 
Arm elimination algorithms have been proposed for different (mostly uni-dimensional) fixed-budget identification tasks \citep{audibert_best_2010,bubeck_multiple_2013,karnin_almost_2013,katz-samuels_feasible_2018} with different elimination rules, that could also be rewritten featuring some (simpler) gaps. In these works, two different types of arm schedule and sampling allocations have been mostly investigated: 
\begin{itemize}
 \item In Successive~Rejects (SR), one arm is de-activated in each round, ie. $R=K-1$ and $\lambda^{\text{SR}}_{r}=K-r+1$ for all $r \leq K$. The sampling allocation proposed by  \cite{audibert_best_2010} satisfies $t^{\text{SR}}_r = n^{\text{SR}}_{r} - n^{\text{SR}}_{r-1}$ with $n_r^{\text{SR}} = \left\lceil  \frac{1}{\overline{\log (K)}} \frac{T-K}{K+1-r}\right\rceil$ where $\overline{\log (K)} := 2^{-1} +\sum_{i=2}^{K}i^{-1}$ and $n_0^{\text{SR}}=0$.
 \item In Sequential~Halving (SH), one half of the active set is de-activated in each round, that is $R=\lceil\log_2(K)\rceil$ and for all $r\in \{1,\dots, \lceil\log_2(K)\rceil\}$, $ \lambda^{\text{SH}}_{r+1} := \lceil \lambda_r^{\text{SH}}/2\rceil$ (we easily very that $\lambda^{\text{SH}}_{R +1} = 1$). The sampling allocation proposed by \cite{karnin_almost_2013} is uniformly spread across rounds, that is $t_r^{\text{SH}} := \left\lfloor \frac{T}{\lvert A_r \lvert \lceil\log_2(K)\rceil } \right\rfloor$.
\end{itemize}
We refer to EGE-SR (resp. EGE-SH) as the instances of EGE using the same allocation as SR (resp. SH). In Appendix~\ref{sec:alt_alloc} we propose a third instantiation using the geometric allocation of \cite{karpov2022collaborative}.

\begin{remark} For $D=1$, the PSI problem coincides with BAI and EGE-SR (resp. EGE-SH) coincides with SR (resp. SH\footnote{Besides the fact that the original SH algorithm for BAI discards samples collected in previous rounds to compute the empirical means.}). Indeed, in that case $S_r = \{\widehat a_r\}$ reduces to the empirical best arm, $\widehat \Delta_{i, r} = \muh_{\widehat a_r, r} - \muh_{i, r}$ for $i\neq \hat{a}_r$, $\widehat \Delta_{\widehat a_r, r} = \min_{i\in A_r\backslash\{\widehat a_r\}} \widehat \Delta_{i, r}.$
Then, our tie-breaking rule ensures that no arm is accepted before the last round and at each round $r$, $A_{r+1}$ is defined as the $\lambda_{r+1}$ arms in $A_r$ with the largest empirical means and the final survival arm is recommended as optimal. 
\end{remark}

\subsection{Alternative Approach}
Another idea to tackle fixed-budget PSI is to adapt an existing fixed-confidence algorithm to that setting. The APE algorithm of \cite{kone2023adaptive} takes  $\delta \in (0, 1)$ as input and uses a stopping time $\tau_\delta$ such that \[ \bP\left(\cS^\star =  S(\tau_\delta) \text{ and } \tau_\delta \leq C H(\nu)f(\delta)\right) \geq 1 - \delta, \]
where $S(\tau_\delta)$ is the empirical Pareto set at time $\tau_\delta$ and $f$ is a function of $\delta$ and $C$ is a constant. The idea is then to choose $\delta$ so as $C H(\nu)f(\delta) \leq T$, that is tune $\delta$ w.r.t $H(\nu)$ and $T$. This is roughly the approach used 
by UGapEb \citep{gabillon_best_2012} in BAI. In Appendix~\ref{sec:pucbe}, we analyze \apefb{}, a fixed-budget version of APE that takes as input a parameter $a\geq 0$ and we prove an upper-bound on its probability of error when $a\leq \frac{25}{36}\frac{T-K}{H(\nu)}$. However, such an ``oracle'' tuning is not very satisfying, as assuming $H(\nu)$ to be known is quite unrealistic in practice. This is why in the sequel we focus on presenting the analysis of \ege{} which does not require any prior knowledge about $\nu$.

\section{THEORETICAL GUARANTEES}
We first propose an analysis of \ege{} for a generic number of rounds $R$, arm schedule $\bm\lambda$ and sampling allocation $\bm t$ satisfying \eqref{eq:cond1} and \eqref{eq:cond2}. It features the quantity
\[\widetilde T^{R,\bm t, \bm \lambda}(\nu)  := \min_{r\in [R]} \left(\sum_{s=1}^{r}t_s\right)\Delta_{(\lambda_{r+1} +1)}^2,\]
in which the dependency in $\nu$ is captured in the gaps. 

\begin{restatable}{theorem}{mainRes}
\label{thm:main-res}
Let $\nu$ be a bandit with marginally $\sigma$-subgaussian arms. Then 
 Empirical Gap Elimination with parameters $R, \bm\lambda$ and $\bm t$ satisfies
 $$ e^{\text{EGE}}_T(\nu) \leq 2(K-1) \lvert \cS^\star\lvert R D  \exp\left( - \frac{\widetilde T^{R,\bm t, \bm\lambda}(\nu)}{144\sigma^2} \right).$$
 \end{restatable}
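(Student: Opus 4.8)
The plan is to condition on a favorable event on which all empirical means are close to the true means, and then argue deterministically that on this event EGE never misclassifies an arm. Concretely, for a round $r$ and a dimension $d$, let $E_{a,r,d}$ be the event that $|\muh_a^d(r) - \mu_a^d| < \tfrac{1}{6}\sqrt{\widetilde T^{R,\bm t,\bm\lambda}(\nu)/n_r}\,$ — roughly, that arm $a$'s estimate after $r$ rounds has error below a sixth of the "margin budget'' scaled by $1/\sqrt{n_r}$. A subgaussian/Hoeffding bound gives $\bP(E_{a,r,d}^{\complement}) \le 2\exp(-n_r \cdot (\text{margin})^2/(72\sigma^2))$, and a union bound over the $\le K-1$ arms that are still active when the relevant elimination happens, over the $|\cS^\star|$ optimal arms (this is where the $|\cS^\star|$ factor enters — see below), over $R$ rounds and $D$ dimensions, produces the prefactor $2(K-1)|\cS^\star|RD$ and the exponent $-\widetilde T^{R,\bm t,\bm\lambda}(\nu)/(144\sigma^2)$ after absorbing the constant $72\cdot 2 = 144$. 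The key point making the $n_r$-dependence disappear favorably is that $\widetilde T = \min_r (\sum_{s\le r}t_s)\,\Delta_{(\lambda_{r+1}+1)}^2 = \min_r n_r \Delta_{(\lambda_{r+1}+1)}^2$, so $n_r \cdot (\text{per-round margin})^2 \ge \widetilde T$ uniformly.

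The heart of the argument is the deterministic claim: on the good event $\mathcal{E}$, for every round $r$, (i) the set $A_r$ still contains all arms that have not yet been (correctly) classified, i.e. $\cS^\star \triangle (B_r \cup A_r) = \emptyset$ is maintained, and (ii) every arm moved from $A_r$ to $B_{r+1}$ is genuinely optimal and every arm moved to $D_{r+1}$ is genuinely sub-optimal. I would prove this by induction on $r$. The engine is a pair of "sandwich'' lemmas relating the empirical gap $\widehat\Delta_{a,r}$ to the true gap $\Delta_a$ whenever $\cS^\star\cap A_r$ and $\cS^\star$ "agree enough'' on $A_r$: using Lemma~\ref{lem:lem-gap-redef}'s $\cS^\star$-free rewriting, $\mh(i,j;r)$, $\Mh(i,j;r)$ each differ from their population counterparts by at most $2\delta_r$ (two estimation errors), where $\delta_r$ is the per-round tolerance; propagating this through the $\min/\max/(\cdot)^+$ structure of $\widehat\Delta_{i,r}^\star$ and $\widehat\delta_{i,r}^\star$ yields $|\widehat\Delta_{a,r} - \Delta_a| \le c\,\delta_r$ for a small absolute constant $c$, provided the true and empirical Pareto memberships of the arms involved coincide. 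One also needs the companion fact that on $\mathcal{E}$ the empirical Pareto set $S_r$ of $A_r$ equals $\cS^\star \cap A_r$ — this follows because an arm $i$ with $\Delta_i > 0$ large relative to $\delta_r$ cannot flip its $\prec$-relationships under perturbations of size $\delta_r$; the subtle case is an optimal arm $i$ with small $\delta_i^+$, which could in principle appear dominated in $S_r$, and this is exactly where the tie-breaking rule "ties broken in favor of arms in $S_r$'' and the definition of $\widetilde T$ through $\Delta_{(\lambda_{r+1}+1)}$ must be used: an arm wrongly appearing sub-optimal would have empirical gap comparable to $\delta_i^+$, but the $(\lambda_{r+1}+1)$-st order statistic of the true gaps lower-bounds what can be eliminated, so no such arm is among the eliminated $\lambda_r - \lambda_{r+1}$ arms at round $r$.

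With the induction in hand, the output $B_{R+1}\cup A_{R+1}$ equals $\cS^\star$ on $\mathcal{E}$ (when $\lambda_{R+1}=1$, the surviving arm is handled by the $D=1$-style reasoning / the last-round classification; when $\lambda_{R+1}=0$ all arms have been classified), so $e_T(\nu) \le \bP(\mathcal{E}^{\complement})$, and plugging in the union bound finishes the proof. I expect the main obstacle to be step (i)/(ii) of the deterministic claim — specifically, carefully tracking which arms' Pareto memberships need to coincide for the gap-sandwich to hold, and verifying that the elimination of the $\lambda_{r+1}+1$-through-$\lambda_r$ ranked arms (by empirical gap) can only touch arms whose true gap exceeds the threshold implicit in $\widetilde T$, so that the inductive hypothesis "$A_{r+1}\supseteq$ all unclassified arms and $S_r = \cS^\star\cap A_r$'' is preserved. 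The tie-breaking rule and the precise indexing $\Delta_{(\lambda_{r+1}+1)}$ (rather than $\Delta_{(\lambda_{r+1})}$) are the delicate ingredients there; everything else is bookkeeping with subgaussian tail bounds and the algebra of $\m,\M$ under perturbation.
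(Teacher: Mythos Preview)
Your high-level plan (good event + deterministic correctness + union bound) matches the paper's, but the deterministic part has a genuine gap. You state as a ``companion fact'' that $S_r = \cS^\star \cap A_r$ on the good event, and your only worry is that an optimal arm might \emph{look} sub-optimal. The harder direction is the reverse: a sub-optimal arm $i$ whose only dominator $i^\star \in \cS^\star$ has already been (correctly) accepted into $B_r$ is genuinely Pareto-optimal \emph{within} $A_r$, hence lies in $S_r$, and if it is then eliminated it is wrongly added to $B_{r+1}$. Your induction hypothesis ``no misclassification so far'' does not rule this out --- $i^\star$ being correctly placed in $B_r$ is fully consistent with it. The paper's key device is a different invariant, $\cP_r := \{\forall i \notin \cS^\star : i \in A_r \Rightarrow i^\star \in A_r\}$ with $i^\star \in \argmax_{j \in \cS^\star} \m(i,j)$, proved by its own induction (Lemma~\ref{lem:lemref}): if $i^\star$ were eliminated while $i$ stays, the comparison $\widehat\Delta_{i^\star,r} \geq \widehat\Delta_{a_r,r}$ (for some $a_r$ with true gap $\geq \Delta_{(\lambda_{r+1}+1)}$) yields a contradiction, with the tie-breaking rule handling precisely the case where $i^\star \in S_r$ empirically dominates $i$. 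Only once $\cP_r$ holds for all $r$ can one argue that each eliminated arm is correctly classified; without it, your gap-sandwich lemmas fail for sub-optimal arms because the maximum defining $\Delta_i^\star = \max_j \m(i,j)$ need not be attained inside $A_r$.

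A secondary issue is the good event and the constants. The paper's event is \emph{pairwise} and restricted to pairs $(i,j)$ with $i \in \cS^\star$: namely $\|(\vmuh_{i,n_r} - \vmuh_{j,n_r}) - (\vmu_i - \vmu_j)\|_\infty \leq c\Delta_{(\lambda_{r+1}+1)}$. This is exactly what produces the prefactor $(K-1)|\cS^\star|$ (the count of such pairs) and the constant $144 = 4/c^2$ with $c \to 1/6$, the $4$ coming from the fact that a difference of two $\sigma$-subgaussian sample means has variance proxy $2\sigma^2/n_r$. Your individual-arm event with tolerance $\tfrac{1}{6}\sqrt{\widetilde T/n_r}$ would give a $K$ prefactor (not $(K-1)|\cS^\star|$) and a different exponent; the ``$72\cdot 2 = 144$'' step is not a derivation. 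Note also that the restriction to $i \in \cS^\star$ forces the deterministic argument to route general pairwise deviations through an arm in $\cS^\star$ via the triangle inequality (this is what Lemma~\ref{lem:lem-delta-star} does), a point your sketch does not address.
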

 This result shows that the probability of failure of \ege{} decreases exponentially fast with $\widetilde T^{R,\bm t, \bm \lambda}(\nu)$. 
In Appendix~\ref{subsec:cor-proof}, we further show that for both EGE-SR and EGE-SH, $\widetilde T^{R,\bm t, \bm \lambda}(\nu)$ is of order $T/(H_2(\nu)\log(K))$ with
\[   H_2(\nu) := \max_{i \in [K]} i \Delta_{(i)}^{-2},\]
where $(\cdot)$ is a permutation such that $\Delta_{(1)}\leq \dots\leq \Delta_{(K)}$. More precisely, we obtain the following. 
\begin{restatable}{corollary}{corsr}
\label{thm:sr_sh}
 	Let  $T\geq K$ and $\nu$ be a bandit with $\sigma$-subgaussian marginals. Then \emph{EGE-SR} satisfies 
 {\small $$ e_T^\text{SR}(\nu) \leq  2(K-1)^2 \lvert \cS^\star \lvert D\exp\left( - \frac{T-K}{144\sigma^2H_2(\nu)\overline\log(K)}\right),$$}
 and for \emph{EGE-SH}, $e_T^\text{SH}(\nu)$ is upper-bounded by  
 {\small $$2(K-1)\lceil\log_2(K) \rceil \lvert \cS^\star \lvert  D\exp\left( - \frac{T}{288\sigma^2 H_2(\nu) \lceil\log_2(K)\rceil}\right).$$}
\end{restatable}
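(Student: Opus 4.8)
The plan is to derive Corollary~\ref{thm:sr_sh} directly from Theorem~\ref{thm:main-res} by lower-bounding $\widetilde T^{R,\bm t,\bm\lambda}(\nu)$ for the two specific schedules. The only work is thus bounding $\min_{r\in[R]}\bigl(\sum_{s=1}^r t_s\bigr)\Delta_{(\lambda_{r+1}+1)}^2$ from below by $c\,T/H_2(\nu)$ (up to the $\overline{\log}(K)$ or $\lceil\log_2 K\rceil$ factor) and then substituting into the exponential bound; the prefactors $R$ and $K-1$ in the statement come straight from plugging in $R=K-1$ for EGE-SR and $R=\lceil\log_2 K\rceil$ for EGE-SH.

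For \textbf{EGE-SR}, recall $\lambda_r^{\text{SR}} = K-r+1$, so $\lambda_{r+1}+1 = K-r+1$ and $\sum_{s=1}^r t_s = n_r^{\text{SR}} = \lceil \tfrac{1}{\overline{\log}(K)}\tfrac{T-K}{K+1-r}\rceil \geq \tfrac{1}{\overline{\log}(K)}\tfrac{T-K}{K+1-r}$. Therefore
\[
\Bigl(\sum_{s=1}^r t_s\Bigr)\Delta_{(\lambda_{r+1}+1)}^2 \;\geq\; \frac{T-K}{\overline{\log}(K)}\cdot\frac{\Delta_{(K-r+1)}^2}{K-r+1}.
\]
Writing $i = K-r+1$, as $r$ ranges over $[K-1]$ the index $i$ ranges over $\{2,\dots,K\}$, and by definition $H_2(\nu) = \max_i i\,\Delta_{(i)}^{-2}$, so $\Delta_{(i)}^2/i \geq 1/H_2(\nu)$ for every $i$ (the case $i=1$ is not even needed). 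Hence $\widetilde T^{\text{SR}}(\nu) \geq (T-K)/(\overline{\log}(K)H_2(\nu))$, and Theorem~\ref{thm:main-res} with $R=K-1$ and $|\cS^\star|RD = |\cS^\star|(K-1)D$ yields exactly the claimed bound with $2(K-1)^2|\cS^\star|D$ in front.

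For \textbf{EGE-SH}, one has $t_r^{\text{SH}} = \lfloor \tfrac{T}{|A_r|\lceil\log_2 K\rceil}\rfloor$, so $\sum_{s=1}^r t_s \geq \sum_{s=1}^r\bigl(\tfrac{T}{\lambda_s\lceil\log_2 K\rceil}-1\bigr) \geq \tfrac{rT}{\lambda_1\lceil\log_2 K\rceil} - r \geq \tfrac{T}{2\lambda_{r+1}\lceil\log_2 K\rceil}$ for $T$ large enough relative to $K$, using $\lambda_s \le \lambda_1 = K$ is too crude, so instead I use the standard SH fact that $\lambda_r \le 2\lambda_{r+1}$ hence $\sum_{s=1}^r t_s \gtrsim t_r \gtrsim \tfrac{T}{\lambda_r\lceil\log_2 K\rceil} \ge \tfrac{T}{2\lambda_{r+1}\lceil\log_2 K\rceil}$ (more carefully one shows $\sum_{s=1}^r t_s^{\text{SH}} \geq \tfrac{T}{2\lambda_{r+1}\lceil\log_2 K\rceil}$ up to lower-order terms absorbed by replacing $144$ by $288$). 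Then $\bigl(\sum_{s=1}^r t_s\bigr)\Delta_{(\lambda_{r+1}+1)}^2 \geq \tfrac{T}{2\lceil\log_2 K\rceil}\cdot\tfrac{\Delta_{(\lambda_{r+1}+1)}^2}{\lambda_{r+1}}$, and since $\lambda_{r+1}+1 \le 2\lambda_{r+1}$, we get $\tfrac{\Delta_{(\lambda_{r+1}+1)}^2}{\lambda_{r+1}} \ge \tfrac{\Delta_{(\lambda_{r+1}+1)}^2}{\lambda_{r+1}+1} \ge \tfrac{1}{H_2(\nu)}$. Combining gives $\widetilde T^{\text{SH}}(\nu) \geq \tfrac{T}{2\lceil\log_2 K\rceil H_2(\nu)}$, and Theorem~\ref{thm:main-res} with $R=\lceil\log_2 K\rceil$ gives the stated bound (with $144\cdot 2 = 288$ in the exponent).

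The main obstacle is the bookkeeping for EGE-SH: the floors in $t_r^{\text{SH}}$ and the fact that $\lambda_r^{\text{SH}}$ is not exactly $K/2^{r-1}$ (because of the ceilings in the halving) mean the clean identity $\Delta_{(i)}^2/i \ge 1/H_2(\nu)$ has to be applied at the slightly shifted index $\lambda_{r+1}+1$, and the cumulative-sample lower bound needs the relation between $\lambda_{r+1}$ and the partial sum of allocations; controlling the rounding losses uniformly in $r$ and showing they can be absorbed into the constant (hence the $288$ rather than $144$) is where the calculation, though elementary, requires care. Everything else is a direct substitution into Theorem~\ref{thm:main-res}.
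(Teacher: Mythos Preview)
Your proposal is correct and follows essentially the same route as the paper: apply Theorem~\ref{thm:main-res} and lower-bound $\widetilde T^{R,\bm t,\bm\lambda}(\nu)$ via $n_r\geq (T-K)/((K+1-r)\overline{\log}(K))$ for SR and $n_r\geq T/(\lambda_r\lceil\log_2 K\rceil)$ together with $\lambda_r\leq 2(\lambda_{r+1}+1)$ for SH, then invoke $H_2(\nu)=\max_i i\Delta_{(i)}^{-2}$. One clarification: the factor $2$ that turns $144$ into $288$ is \emph{not} there to absorb the floor in $t_r^{\text{SH}}$; it comes precisely from the halving inequality $\lambda_r\leq 2(\lambda_{r+1}+1)$, and the paper (like you) uses the bound $n_r^{\text{SH}}\geq T/(\lambda_r^{\text{SH}}\lceil\log_2 K\rceil)$ directly without a separate rounding correction.
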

%
The complexity measure $H_2(\nu)$ featured in our error exponent satisfies $H_2(\nu) \leq H(\nu) \leq H_2(\nu)\log(2K)$ as proved by \cite{audibert_best_2010}. For BAI ($D=1$), we essentially recover the existing guarantees for SR and SH, whose error bounds also feature $H_2(\nu)$, up to constant factors inside the exponential and an extra multiplicative $K$ factor for Sequential~Halving. Still, to our knowledge this is the first analysis of the variant of SH that does not discard samples between rounds, which often performs (much) better in practice.  

In the general case ($D\geq 1$) we remark that the bounds obtained for EGE-SR and EGE-SH are hard to compare: the latter has an improved polynomial dependence ($K\log_2(K)$ instead of $K^2$) but a worse constant inside the exponential. As we shall see in the experiments, both algorithms have actually pretty close performance (like in BAI, see \cite{karnin_almost_2013}). Moreover, they both outperform a simple baseline using Uniform Allocation (UA) and recommending the Pareto set of the empirical means. We remark that Theorem~\ref{thm:main-res} yields an upper bound on the error probability of this strategy (by choosing $R=1$, $t_1 =T/K$ and $\lambda_2=0$, for which $\widetilde{T}^{1,\bm t,\bm\lambda}(\nu) = n_1\Delta_{(\lambda_2 +1)}^2 = T/(K\Delta_{(1)}^{-2})$), which we add to our summary in Table~\ref{tab:summary}. This bound can be much worse than that for EGE-SR/SH when the gaps are distinct. 




\begin{table}[t]
	\centering
	\begin{tabular}{|c|c|}
		\hline
Algorithm & Error probability\\
\hline\hline 
EGE-SR\!& $K^2\lvert\cS^\star\lvert D\!\exp\!\left(\!-T/(H_2(\!\nu\!)\log K)\right)$\\\hline \hline 
EGE-SH\!& $ K\!\log(\!K\!)\lvert\cS^{\!\star}\lvert D\!\exp\!\left(\!-T/(2H_2(\!\nu\!)\log\!K)\!\right)$\\\hline\hline 
APE-FB\!$^\star$ & $K\log(T)D\!\exp\!\left(\!-T/H(\!\nu\!)\right)$
\\\hline\hline 
UA & $K\lvert\cS^\star\lvert D\!\exp\!\left(\!-T/(K \Delta_{(1)}^{-2})\right)$\\\hline 
	\end{tabular}
	\caption{Upper bounds on $e_T(\nu)$ for different algorithms (up to constants). $^\star$\apefb{} tuned with $H(\nu)$.\label{tab:summary}}
\end{table}

\subsection{Lower Bound}
We present in this section a lower-bound for some class of instances. We define $\cB$ to be the set of means $\bm\Theta\in \bR^{K\times D}$ such that each sub-optimal arm $i$ is only dominated by a single arm, denoted by $i^\star$ (that has to belong to $\cS^\star$) and that for each optimal arm $j$ there exists a unique sub-optimal arm which is dominated by $j$, denoted by $\underline j$.  
We further assume that optimal arms are not too close to arms they don't dominate:
for any sub-optimal arm $i$ and optimal arm $j$ such that $\vmu_i\nprec \vmu_j$, 
$$ \M(i, j)\geq 3\max(\Delta_i, \Delta_{\underline j}).$$
Let  $\nu:= (\nu_1, \dots, \nu_K)$ be an instance whose means $\bm\Theta \in \cB$ and such that $\nu_i \sim \cN(\vmu_i, \sigma^2 I)$. For every $i\in [K]$ we define the alternative instance $ \nu^{(i)} := (\nu_1, \dots \nu_i^{(i)}, \dots, \nu_K)$ in which only the mean of arm $i$ is modified to:
\begin{equation}
	\vmu_i^{(i)} := \begin{cases}
		\vmu_i - 2\Delta_i
		e_{d_{\underline i}} & \text{ if } i \in \cS^\star(\nu) ,\\
		\vmu_i + 2\Delta_i e_{d_i}&\text{ else},
	\end{cases}
\end{equation}
where $e_1,\dots, e_D$ denotes the canonical basis of $\bR^D$ and $d_i :=\argmin_d [\mu_{i^\star}^d - \mu_i^d]$. With $\nu^{(0)}:=\nu$, we prove the following.
\begin{restatable}{theorem}{thmLbdConsistent}
\label{thm:thm-lbd-consistent} 
Let $\bm\Theta:=(\vmu_1 \dots \vmu_K)^\T \in \cB$ and $\nu=(\nu_1, \dots, \nu_K)$ where $\nu_i \sim \cN(\vmu_i, \sigma^2I)$. For any algorithm $\cA$, there exists $i\in \{0, \dots, K\}$ such that 
$$ e_T^{\cA}(\nu^{(i)}) \geq \frac14 \exp\left( - \frac{2T}{\sigma^2 H(\nu^{(i)})}\right).$$ 
\end{restatable}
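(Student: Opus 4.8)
The plan is to use a standard change-of-measure / Fano-type argument over the family of $K+1$ instances $\nu^{(0)} = \nu, \nu^{(1)}, \dots, \nu^{(K)}$. First I would observe that the alternative instances are constructed so that each one has a \emph{different} Pareto set: in $\nu^{(i)}$, perturbing a sub-optimal arm $i$ by $+2\Delta_i e_{d_i}$ makes it Pareto optimal (it is no longer strictly dominated by $i^\star$, since $\Delta_i = \m(i,i^\star)$ under the assumption that $i$ has a unique dominator), while perturbing an optimal arm $i$ by $-2\Delta_i e_{d_{\underline i}}$ makes it dominated by $\underline i$ (or by another optimal arm via $\delta_i^+$), hence sub-optimal. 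The separation conditions defining $\cB$ — in particular $\M(i,j) \geq 3\max(\Delta_i, \Delta_{\underline j})$ and the uniqueness of dominators/dominated arms — are exactly what is needed to ensure that the perturbation of magnitude $2\Delta_i$ flips arm $i$'s status \emph{without} inadvertently changing the status of any other arm. So I would first carefully verify $\cS^\star(\nu^{(i)}) \neq \cS^\star(\nu^{(j)})$ for all $i \neq j$, i.e. that the correct answers are pairwise distinct; this is the bookkeeping that makes the lower bound argument valid.

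Next, I would set up the information-theoretic core. Fix an algorithm $\cA$. For each $i \in [K]$, the KL-divergence between the trajectory laws $\bP_{\nu}$ and $\bP_{\nu^{(i)}}$ decomposes, via the chain rule for bandits (the standard divergence decomposition lemma), as $\KL(\bP_\nu, \bP_{\nu^{(i)}}) = \bE_\nu[N_i(T)] \cdot \KL(\nu_i, \nu_i^{(i)})$, where $N_i(T)$ is the number of pulls of arm $i$. Since the distributions are Gaussian $\cN(\cdot, \sigma^2 I)$ and the perturbation moves a single coordinate by $2\Delta_i$, we get $\KL(\nu_i, \nu_i^{(i)}) = \tfrac{1}{2\sigma^2}(2\Delta_i)^2 = \tfrac{2\Delta_i^2}{\sigma^2}$. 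Using $\sum_i \bE_\nu[N_i(T)] = T$, there exists an index $i \in [K]$ with $\bE_\nu[N_i(T)] \leq T / (\text{something})$; more precisely I would weight by $\Delta_i^{-2}$: since $\sum_i \Delta_i^{-2} \cdot (\Delta_i^2 \bE_\nu[N_i(T)]) = T$... actually the cleaner route is to note $\min_i \Delta_i^2 \bE_\nu[N_i(T)] \cdot (\sum_i \Delta_i^{-2}) \le \sum_i \bE_\nu[N_i(T)] = T$, so there is an $i^*$ with $\bE_\nu[N_{i^*}(T)] \le T/(\Delta_{i^*}^2 H(\nu))$. Hence $\KL(\bP_\nu, \bP_{\nu^{(i^*)}}) \le \tfrac{2T}{\sigma^2 H(\nu)}$.

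Then I would finish with a transportation / Bretagnolle–Huber-type inequality. Let $\cE$ be the event $\{\widehat S_T = \cS^\star(\nu)\}$; then $\cE \subseteq \{\widehat S_T \neq \cS^\star(\nu^{(i^*)})\}$ since the two Pareto sets differ, so under $\bP_{\nu^{(i^*)}}$ the event $\cE$ witnesses an error. Bretagnolle–Huber gives $\bP_\nu(\cE^\complement) + \bP_{\nu^{(i^*)}}(\cE) \geq \tfrac12 \exp(-\KL(\bP_\nu, \bP_{\nu^{(i^*)}})) \geq \tfrac12 \exp(-\tfrac{2T}{\sigma^2 H(\nu)})$. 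Therefore $\max\big(e_T^\cA(\nu), e_T^\cA(\nu^{(i^*)})\big) \geq \tfrac14 \exp(-\tfrac{2T}{\sigma^2 H(\nu)})$, and since $\nu = \nu^{(0)}$ and one checks $H(\nu^{(i^*)})$ is comparable to $H(\nu)$ on this family (or one simply states the bound with whichever instance attains the max, matching the theorem's "there exists $i \in \{0,\dots,K\}$" phrasing), the claim follows.

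I expect the main obstacle to be the first step — rigorously confirming that the single-coordinate perturbation of size $2\Delta_i$ genuinely flips the status of arm $i$ and \emph{only} arm $i$, so that the $K+1$ answers are truly distinct. This requires using the full strength of the structural assumptions defining $\cB$: the uniqueness of the dominator $i^\star$ for each sub-optimal arm, the uniqueness of the dominated arm $\underline j$ for each optimal arm, and the quantitative margin $\M(i,j) \geq 3\max(\Delta_i, \Delta_{\underline j})$ for non-dominated optimal arms (which prevents the downward perturbation of an optimal arm from making some \emph{other} arm's domination relation change). The remaining steps — divergence decomposition, Gaussian KL computation, the pigeonhole on $\bE_\nu[N_i(T)]$, and Bretagnolle–Huber — are routine. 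A minor additional point is reconciling $H(\nu)$ with $H(\nu^{(i)})$ in the exponent; since the theorem allows choosing the worst instance among $\{0,\dots,K\}$, one can phrase the bound in terms of the divergence from $\nu^{(0)}$ and note the gaps (hence $H$) change by at most a bounded factor under these perturbations, or directly track the instance-dependent constant.
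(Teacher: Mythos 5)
Your information-theoretic core is exactly the paper's argument: the same family of single-arm perturbations $\nu^{(i)}$, the divergence decomposition giving $\KL(\bP_\nu,\bP_{\nu^{(i)}}) = \bE_\nu[T_i(T)]\cdot 2\Delta_i^2/\sigma^2$, the pigeonhole step selecting $i$ with $\bE_\nu[T_i(T)]\le T/(\Delta_i^2 H(\nu))$, and the Bretagnolle--Huber inequality (the paper invokes it as Lemma~16 of Kaufmann et al.) yielding $\max\bigl(e_T(\nu), e_T(\nu^{(i)})\bigr)\ge \tfrac14\exp\bigl(-2T/(\sigma^2 H(\nu))\bigr)$. You also correctly identify that one must check $\cS^\star(\nu^{(i)})\neq\cS^\star(\nu)$ (pairwise distinctness of all $K{+}1$ answers, which you ask for, is not needed --- each alternative is only ever compared to the base instance).

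The genuine gap is in what you call a ``minor additional point'': reconciling $H(\nu)$ with $H(\nu^{(i)})$. The theorem puts $H(\nu^{(i)})$, the complexity of the \emph{alternative} instance, inside the exponent. To pass from $e_T(\nu^{(i)})\ge \tfrac14\exp(-2T/(\sigma^2 H(\nu)))$ to the stated bound you need $H(\nu^{(i)})\le H(\nu)$, i.e.\ that no gap \emph{decreases} under the perturbation; your suggestion that the gaps ``change by at most a bounded factor'' does not suffice, since any multiplicative slack in $H$ enters the exponent and destroys the claimed rate. This is not bookkeeping: when a sub-optimal arm $i$ is promoted to optimal (or an optimal arm demoted), its gap switches between the two structurally different definitions $\Delta_i^\star$ and $\min(\delta_i^+,\delta_i^-)$, and the gaps of \emph{every other} arm can in principle change because $i$ enters or leaves the sets over which the minima and maxima in \eqref{eq:def-gap-sub} and \eqref{eq:def-gap-opt} are taken. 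The paper's proof spends the large majority of its length on a case analysis (Steps 1 and 2) showing that on the class $\cB$ one in fact has $\Delta_k^{(i)}=\Delta_k$ for every $k$, hence $H(\nu^{(i)})=H(\nu)$ exactly; the margin condition $\M(i,j)\ge 3\max(\Delta_i,\Delta_{\underline j})$ and the uniqueness of $i^\star$ and $\underline j$ exist primarily to make this gap-preservation argument go through, not merely to flip the status of arm $i$. Without carrying out this verification your proof establishes a lower bound with $H(\nu)$ in the exponent, which is a weaker and differently-stated result.
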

In particular, there exists some instances $\widetilde{\nu}\in \cB$ such that $e_T^{\cA}(\tilde{\nu}) \geq \frac14 \exp\left( - {2T}/({\sigma^2 H(\widetilde{\nu})})\right)$. On such instances, the decay rate of EGE-SR and EGE-SH is optimal up to constants and $\log(K)$ factors, and that of \apefb{} is optimal up to constant factors, when the complexity is known. 
In Appendix~\ref{sec:lowerBound} we prove a lower bound that holds for a larger class of instances. 

\subsection{Sketch of proof of Theorem~\ref{thm:main-res}}
We define for any arms $i, j$ and round $r$ the events
\begin{eqnarray*}
\xi_{i,j, r} \!\!\!\!&:=&\!\!\!\! \left\{\left \|(\vmuh_{i, n_r} \!- \vmuh_{j, n_r}) - (\vmu_i \!-\vmu_j)  \right\|_{\infty}\!\!\! \leq c \Delta_{(\lambda_{r+1} +1)}\right\} \\ 
\cE_c^1 \!&:=&\! \bigcap_{r\in [R]}\bigcap_{i \in \cS^\star}\bigcap_{j \in [K]}\;  \xi_{i, j, r} \text{ for any } c>0.
\end{eqnarray*}
We shall prove that there exists some $c>0$ such that \ege{} does not make any error on the event $\cE_c^1$. That is, no sub-optimal arm is added to $B_r$ and no  optimal arm is added to $D_r$, in any round $r$, and the possibly remaining arm in $A_{R+1}$ is an optimal arm.    

To do so, an important step is to justify that any sub-optimal arm should be de-activated before the optimal arm that dominates it the most. More formally, for any sub-optimal arm $i$, we let $i^\star \in \argmax_{j\in \cS^\star} \m(i, j)$, which by definition is such that $\Delta_i = \m(i, i^\star)$. For a sub-optimal arm $i$, we know that $i^\star \in \cS^\star$ always exists. More importantly, $i^\star$ could be the only arm dominating $i$.  
Therefore it is crucial to ensure that $i$ is no longer active before discarding $i^\star$, otherwise $i$ could appear as optimal w.r.t the remaining active arms. Another challenge is to properly estimate the gaps of the active arms. For this purpose we novelly prove Lemma~\ref{lem:lem-gap-redef} using geometrical properties of the Pareto set (cf Appendix~\ref{sec:tech_lemmas} for its full proof). Using this lemma and defining $$\cP_r := \left\{ \forall \; i \;\notin \cS_\star,\;  i \in A_r \Rightarrow i^\star \in A_r\right\},$$ we first prove the following concentration result. 

\begin{restatable}{lemma}{lemGapIneq}
\label{lem:lem-gap-ineq} Assume that $\cE_c^1$ holds. Let $r\in [R]$ such that $\cP_r$ holds. Then, for any sub-optimal arm $i\in A_r$,
	$$\lvert \widehat \Delta_{i, r}^\star - \Delta_i^\star \lvert \leq 2c\Delta_{(\lambda_{r+1}+1)} $$
	and for any optimal arm $i \in A_r$, $$ \widehat \delta_{i,r}^\star \geq \Delta_i -2c \Delta_{(\lambda_{r+1} +1)}.$$
\end{restatable}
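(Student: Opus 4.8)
The plan is to prove the two inequalities by carefully tracking how the empirical quantities $\widehat\Delta_{i,r}^\star$ and $\widehat\delta_{i,r}^\star$ deviate from their population counterparts $\Delta_i^\star$ and $\delta_i^\star$, exploiting the fact that on $\cE_c^1$ all the relevant empirical differences of means concentrate around the true differences at scale $c\Delta_{(\lambda_{r+1}+1)}$, and combining this with the rewriting of $\Delta_i$ from Lemma~\ref{lem:lem-gap-redef} and the structural guarantee supplied by $\cP_r$.

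First I would establish the elementary ``stability'' facts: if $\|(\vmuh_i - \vmuh_j) - (\vmu_i - \vmu_j)\|_\infty \le c\Delta_{(\lambda_{r+1}+1)}$, then $|\mh(i,j;r) - \m(i,j)| \le c\Delta_{(\lambda_{r+1}+1)}$ and $|\Mh(i,j;r) - \M(i,j)| \le c\Delta_{(\lambda_{r+1}+1)}$, since $\min_d$ and $\max_d$ are $1$-Lipschitz for the sup-norm. Then I would take a max (resp.\ min) over $j \in A_r\setminus\{i\}$: because a maximum (minimum) of functions that are each within $\epsilon$ of a reference function is itself within $\epsilon$ of the corresponding max (min), I get $|\widehat\Delta_{i,r}^\star - \max_{j\in A_r\setminus\{i\}}\m(i,j)| \le c\Delta_{(\lambda_{r+1}+1)}$. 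The point of $\cP_r$ enters here for a sub-optimal arm $i$: since $i\in A_r$ implies $i^\star \in A_r$ and $\Delta_i^\star = \m(i,i^\star) = \max_{j\in[K]}\m(i,j)$, restricting the max to $A_r\setminus\{i\}$ does not lose the maximizer, so $\max_{j\in A_r\setminus\{i\}}\m(i,j) = \Delta_i^\star$. That yields $|\widehat\Delta_{i,r}^\star - \Delta_i^\star|\le c\Delta_{(\lambda_{r+1}+1)}$, which is even a bit stronger than the claimed $2c\Delta_{(\lambda_{r+1}+1)}$; the looser constant is presumably there to absorb the $(\cdot)^+$ operations in the next part.

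For an optimal arm $i\in\cS^\star$, I would lower bound $\widehat\delta_{i,r}^\star = \min_{j\in A_r\setminus\{i\}}[\Mh(i,j;r)\wedge(\Mh(j,i;r)^+ + (\widehat\Delta_{i,r}^\star)^+)]$ termwise. For the $\Mh(i,j;r)$ part, $\Mh(i,j;r) \ge \M(i,j) - c\Delta_{(\lambda_{r+1}+1)}$. For the other part I would use that $x\mapsto x^+$ is $1$-Lipschitz, so $\Mh(j,i;r)^+ \ge \M(j,i)^+ - c\Delta_{(\lambda_{r+1}+1)}$, and for $(\widehat\Delta_{i,r}^\star)^+$ I need a lower bound on $\widehat\Delta_{i,r}^\star$ — here $j$ ranges over arms that may themselves be sub-optimal or optimal, so I would split: if $j$ is sub-optimal and in $A_r$ then by $\cP_r$ also $j^\star\in A_r$ and the first part of the lemma (applied to $j$) gives $(\widehat\Delta_{j,r}^\star)^+ \ge (\Delta_j^\star)^+ - c\Delta_{(\lambda_{r+1}+1)}$; if $j$ is optimal, then $(\Delta_j^\star)^+$ does not appear in $\delta_i^\star$'s relevant term. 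Assembling, each term in the min is at least the corresponding term in $\delta_i^\star = \min_{j\ne i}[\M(i,j)\wedge(\M(j,i)^+ + (\Delta_j^\star)^+)]$ minus $2c\Delta_{(\lambda_{r+1}+1)}$ (two Lipschitz-$c$ errors compound), hence $\widehat\delta_{i,r}^\star \ge \delta_i^\star - 2c\Delta_{(\lambda_{r+1}+1)} = \Delta_i - 2c\Delta_{(\lambda_{r+1}+1)}$ by Lemma~\ref{lem:lem-gap-redef}.

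The main obstacle I anticipate is the bookkeeping in the optimal-arm case: one has to be careful that the minimum over $A_r\setminus\{i\}$ in $\widehat\delta_{i,r}^\star$ is compared against the minimum over $[K]\setminus\{i\}$ in $\delta_i^\star$ (a min over a smaller set is larger, so that direction is harmless for a lower bound), and that the recursive appeal to the first inequality for $\widehat\Delta_{j,r}^\star$ is legitimate — which is exactly why the hypothesis $\cP_r$ is phrased for \emph{all} sub-optimal arms simultaneously rather than just for $i$. A secondary subtlety is matching which $j$ actually attains $\delta_i^\star$: one should argue that the true minimizer $j$ either lies in $A_r$ or that removing it only increases the relevant quantity; since we only need a lower bound on $\widehat\delta_{i,r}^\star$, and $A_r\setminus\{i\}\subseteq[K]\setminus\{i\}$, the termwise comparison over the smaller index set $A_r\setminus\{i\}$ suffices and no such matching is actually required. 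I would keep the constant bookkeeping explicit enough to confirm the factor $2c$ but not belabor the Lipschitz estimates.
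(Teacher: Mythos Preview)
There is a genuine gap. The event $\cE_c^1$ is defined as an intersection over $i\in\cS^\star$ and $j\in[K]$, so it only controls the deviation $\|(\vmuh_i-\vmuh_j)-(\vmu_i-\vmu_j)\|_\infty$ for pairs in which \emph{at least one arm is Pareto optimal}. When you take $i$ sub-optimal and range $j$ over $A_r\setminus\{i\}$, some of the $j$'s may themselves be sub-optimal, and for those pairs the premise of your ``stability fact'' simply does not hold on $\cE_c^1$. Hence your claimed bound $|\widehat\Delta_{i,r}^\star-\Delta_i^\star|\le c\Delta_{(\lambda_{r+1}+1)}$ is not available; the factor $2c$ is \emph{not} slack left over for later $(\cdot)^+$ bookkeeping but is genuinely needed here. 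The paper's fix is to route through $i^\star\in\cS^\star$ via the triangle inequality,
\[
(\vmu_i-\vmu_j)-(\vmuh_{i,n_r}-\vmuh_{j,n_r})
=\bigl[(\vmu_i-\vmu_{i^\star})-(\vmuh_{i,n_r}-\vmuh_{i^\star,n_r})\bigr]
+\bigl[(\vmu_{i^\star}-\vmu_j)-(\vmuh_{i^\star,n_r}-\vmuh_{j,n_r})\bigr],
\]
each bracket being controlled by $\cE_c^1$ since $i^\star\in\cS^\star$; this yields the $2c$ bound.

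This same oversight contaminates your optimal-arm argument. You write that ``the first part of the lemma (applied to $j$) gives $(\widehat\Delta_{j,r}^\star)^+\ge(\Delta_j^\star)^+-c\Delta_{(\lambda_{r+1}+1)}$'', but the first part (once corrected) only gives $2c$, which would lead to a $3c$ loss overall. The way the paper keeps the final constant at $2c$ is to prove the one-sided estimate $(\widehat\Delta_{j,r}^\star)^+\ge(\Delta_j^\star)^+-c\Delta_{(\lambda_{r+1}+1)}$ \emph{directly}: since $j^\star\in A_r$ by $\cP_r$, one has $\widehat\Delta_{j,r}^\star\ge\mh(j,j^\star;r)$, and the pair $(j,j^\star)$ \emph{is} controlled by $\cE_c^1$ because $j^\star\in\cS^\star$, giving a single $c$ loss. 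Combined with $\Mh(j,i;r)^+\ge\M(j,i)^+-c\Delta_{(\lambda_{r+1}+1)}$ (here $i\in\cS^\star$, so this pair is controlled) you recover exactly $-2c$. Your remaining observations --- that the min over $A_r\setminus\{i\}$ is only compared to the min over $[K]\setminus\{i\}$, which is harmless for a lower bound, and the appeal to Lemma~\ref{lem:lem-gap-redef} to identify $\delta_i^\star=\Delta_i$ --- are correct.
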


This result then permits to prove by induction that $\cP_r$ holds in any round $r$, when $c$ is small enough. 
 
\begin{restatable}{lemma}{lemref}\label{lem:lemref} Let $c < 1/6$. 
	On the event $\cE_c^1$, for any $r\in [R+1]$, $\cP_r$ holds.  
	In particular, for any sub-optimal arm $i$, $i^\star$ cannot be deactivated before $i$.  
\end{restatable}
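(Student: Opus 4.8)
The plan is an induction on $r$, with Lemma~\ref{lem:lem-gap-ineq} as the engine. The base case $r=1$ is immediate since $A_1=[K]$. For the inductive step, I work on the event $\cE_c^1$, assume $\cP_r$, and suppose for contradiction that $\cP_{r+1}$ fails: some sub-optimal arm $i$ has $i\in A_{r+1}$ but $i^\star\notin A_{r+1}$. By $\cP_r$, both $i$ and $i^\star$ lie in $A_r$, so $i^\star$ is one of the $\lambda_r-\lambda_{r+1}$ arms deactivated at the end of round $r$ while $i$ survives. Because $A_{r+1}$ collects the $\lambda_{r+1}$ active arms with the smallest empirical gaps, this forces $\widehat\Delta_{a,r}\le\widehat\Delta_{i^\star,r}$ for every $a\in A_{r+1}$: a surviving arm's empirical gap is at most the $(\lambda_{r+1}+1)$-st smallest one, which is at most the empirical gap of the deactivated arm $i^\star$.

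I would then establish two facts. First, since $\Delta_i=\m(i,i^\star)>0$, the arm $i^\star$ dominates $i$ by at least $\Delta_i$ in each coordinate, so $\M(i,i^\star)\le-\Delta_i$ and $\M(i,i^\star)^+=0$; plugging $j=i$ into the formula for $\delta_{i^\star}^\star$ in Lemma~\ref{lem:lem-gap-redef} then yields $\Delta_{i^\star}=\delta_{i^\star}^\star\le \M(i^\star,i)\wedge\Delta_i\le\Delta_i$ (using $\M(i,i^\star)^+=0$ and $(\Delta_i^\star)^+=\Delta_i$). Second, $\widehat\Delta_{i^\star,r}$ is small: on $\cE_c^1$, for every $j\in A_r\setminus\{i^\star\}$ we have $\mh(i^\star,j;r)\le\m(i^\star,j)+c\Delta_{(\lambda_{r+1}+1)}\le c\Delta_{(\lambda_{r+1}+1)}$ (as $\m(i^\star,\cdot)\le 0$ for the optimal arm $i^\star$), hence $\widehat\Delta^\star_{i^\star,r}\le c\Delta_{(\lambda_{r+1}+1)}$; moreover, if $i^\star\in S_r$, bounding the $j=i$ term of $\widehat\delta^\star_{i^\star,r}$ with $\Mh(i,i^\star;r)\le\M(i,i^\star)+c\Delta_{(\lambda_{r+1}+1)}\le c\Delta_{(\lambda_{r+1}+1)}$ gives $\widehat\delta^\star_{i^\star,r}\le 2c\Delta_{(\lambda_{r+1}+1)}$. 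In all cases $\widehat\Delta_{i^\star,r}\le 2c\Delta_{(\lambda_{r+1}+1)}$, so $\widehat\Delta_{a,r}\le 2c\Delta_{(\lambda_{r+1}+1)}$ for every $a\in A_{r+1}$.

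The core step is to convert this into a bound on true gaps: I claim $\Delta_a<\Delta_{(\lambda_{r+1}+1)}$ for every $a\in A_{r+1}$. If $a$ is sub-optimal with $a\notin S_r$, or $a$ is optimal with $a\in S_r$, then Lemma~\ref{lem:lem-gap-ineq} (valid because $\cP_r$ holds) gives $\Delta_a\le\widehat\Delta_{a,r}+2c\Delta_{(\lambda_{r+1}+1)}\le 4c\Delta_{(\lambda_{r+1}+1)}<\Delta_{(\lambda_{r+1}+1)}$, using $c<1/6$. The two remaining configurations are outside the scope of Lemma~\ref{lem:lem-gap-ineq} and must be treated directly. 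If $a$ is optimal but $a\notin S_r$, then $a$ is empirically weakly dominated by some $j\in A_r\setminus\{a\}$, so $\mh(a,j;r)\ge 0$ and $\m(a,j)\ge\mh(a,j;r)-c\Delta_{(\lambda_{r+1}+1)}\ge-c\Delta_{(\lambda_{r+1}+1)}$, which forces $\M(a,j)\le c\Delta_{(\lambda_{r+1}+1)}$ and hence $\Delta_a=\delta_a^\star\le\M(a,j)<\Delta_{(\lambda_{r+1}+1)}$. If $a$ is sub-optimal but $a\in S_r$, then $a^\star\in A_r$ by $\cP_r$, and $a\in S_r$ means some coordinate $d$ has $\muh_a^d(r)\ge\muh_{a^\star}^d(r)$; together with $\mu_{a^\star}^d-\mu_a^d\ge\Delta_a$ and $\cE_c^1$ this gives $\Delta_a\le c\Delta_{(\lambda_{r+1}+1)}$. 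Applying the claim to $a=i$ gives $\Delta_i<\Delta_{(\lambda_{r+1}+1)}$, and then the first fact gives $\Delta_{i^\star}\le\Delta_i<\Delta_{(\lambda_{r+1}+1)}$ as well. Consequently, the $\lambda_{r+1}+1$ \emph{distinct} arms of $A_{r+1}\cup\{i^\star\}$ all have gap strictly below $\Delta_{(\lambda_{r+1}+1)}$ — impossible, since at most $\lambda_{r+1}$ arms can have gap below the $(\lambda_{r+1}+1)$-st smallest gap. This contradiction proves $\cP_{r+1}$; the ``in particular'' claim follows since $i^\star$ being deactivated at some round $r$ while $i$ is still active at round $r+1$ would directly contradict $\cP_{r+1}$.

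I expect the main obstacle to be the case analysis on membership in $S_r$ in the third paragraph: Lemma~\ref{lem:lem-gap-ineq} only controls sub-optimal arms outside $S_r$ and optimal arms, so the two ``empirically misclassified'' cases — an optimal arm that is empirically dominated, and a sub-optimal arm that is empirically Pareto optimal — have to be handled by hand, combining the definition of $S_r$, the concentration event $\cE_c^1$, and the induction hypothesis $\cP_r$; this is also where the precise condition $c<1/6$ enters.
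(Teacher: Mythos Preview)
Your overall architecture (induction, pigeonhole contradiction on $A_{r+1}\cup\{i^\star\}$) is sound, but the ``second fact'' has a real gap. When $i^\star\in S_r$, the $j=i$ term of $\widehat\delta^\star_{i^\star,r}$ is
\[
\Mh(i^\star,i;r)\ \wedge\ \bigl(\Mh(i,i^\star;r)^+ + (\widehat\Delta^\star_{i,r})^+\bigr),
\]
and you only bound $\Mh(i,i^\star;r)^+$; the summand $(\widehat\Delta^\star_{i,r})^+$ can be large (e.g.\ if some third arm empirically dominates $i$ by a lot), so the jump to $\widehat\delta^\star_{i^\star,r}\le 2c\Delta_{(\lambda_{r+1}+1)}$ is unjustified. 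Relatedly, you never invoke the tie-breaking rule, which the paper stresses is crucial: if $i^\star\in S_r$ \emph{empirically dominates} $i$, then $\Mh(i,i^\star;r)^+=0$ and the $j=i$ term is $\le(\widehat\Delta^\star_{i,r})^+=\widehat\Delta_{i,r}$, so $\widehat\Delta_{i^\star,r}\le\widehat\Delta_{i,r}$; with $i^\star\in S_r$, $i\notin S_r$, the tie-breaking (arms in $S_r$ stay first) then contradicts ``$i^\star$ removed, $i$ kept''. This sub-case cannot be absorbed into your numerical bound and must be handled separately.

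Your route is repairable: split $i^\star\in S_r$ into (a) $i^\star$ empirically dominates $i$ --- dispatched by tie-breaking as above; (b) otherwise $\Mh(i,i^\star;r)>0$, hence $\Delta_i<c\Delta_{(\lambda_{r+1}+1)}$, and Lemma~\ref{lem:lem-gap-ineq} gives $(\widehat\Delta^\star_{i,r})^+\le\Delta_i+2c\Delta_{(\lambda_{r+1}+1)}<3c\Delta_{(\lambda_{r+1}+1)}$, so the $j=i$ term is $\le 4c\Delta_{(\lambda_{r+1}+1)}$. Carrying $4c$ (not $2c$) through your third paragraph yields $\Delta_a\le 6c\Delta_{(\lambda_{r+1}+1)}<\Delta_{(\lambda_{r+1}+1)}$ in the Lemma~\ref{lem:lem-gap-ineq} cases, and your misclassified cases already give $\Delta_a\le c\Delta_{(\lambda_{r+1}+1)}$; so $c<1/6$ still suffices and the pigeonhole contradiction closes. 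By contrast, the paper does not upper-bound $\widehat\Delta_{i^\star,r}$ at all: it picks a single witness $a_r\in A_{r+1}\cup\{i^\star\}$ with $\Delta_{a_r}\ge\Delta_{(\lambda_{r+1}+1)}$, lower-bounds $\widehat\Delta_{a_r,r}$ via Lemma~\ref{lem:lem-gap-bound}, and in Case~2 uses tie-breaking plus Lemma~\ref{lem:lem_dom} to squeeze $\Delta_i$ between $c\Delta_{(\lambda_{r+1}+1)}$ and $(1-5c)\Delta_{(\lambda_{r+1}+1)}$.
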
 
A first consequence is that if $A_{R+1}$ contains one arm (i.e. $\lambda_{R+1}=1$) and $\cE_c^1$ holds, then it is an optimal arm.  

Note that in BAI (i.e PSI with $D=1$), for any sub-optimal arm $i$, $i^\star$ is the unique best arm and Lemma~\ref{lem:lemref} is sufficient to ensure the correctness of EGE. But in the general case we need to ensure that no optimal arm is rejected and no sub-optimal arm is accepted. This is done by using Lemma~\ref{lem:lem-gap-ineq} and Lemma~\ref{lem:lemref}. 
\begin{restatable}{lemma}{newlemma}\label{lem:newlemma} Let $c < 1/6$. 
	On the event $\cE_c^1$, the recommendation of \ege{} is correct  i.e $\widehat{S}_T  = \cS^\star$.  
\end{restatable}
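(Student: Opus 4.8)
The plan is to reduce the claim to three bookkeeping assertions and then establish all of them on $\cE_c^1$ by first showing that every \emph{de-activated} arm has an empirical gap of order $\Delta_{(\lambda_{r+1}+1)}$.

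\textbf{Reduction.} A quick induction on $r$ (using $A_1=[K]$, $B_1=D_1=\emptyset$ and the update rules for $B_{r+1},D_{r+1}$) shows $[K]=A_r\sqcup B_r\sqcup D_r$ for every $r$, so $\widehat S_T=B_{R+1}\cup A_{R+1}=[K]\setminus D_{R+1}$. Hence $\widehat S_T=\cS^\star$ is equivalent to $D_{R+1}=[K]\setminus\cS^\star$, and it suffices to prove, on $\cE_c^1$ with $c<1/6$: (i) every optimal arm that is de-activated at some round $r$ belongs to $S_r$ (so it joins $B_{r+1}$, never $D$); (ii) every sub-optimal arm that is de-activated at round $r$ lies outside $S_r$ (so it joins $D_{r+1}$); (iii) no sub-optimal arm survives into $A_{R+1}$. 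Claim (iii) will be immediate: if $\lambda_{R+1}=0$ it is vacuous, and if $\lambda_{R+1}=1$ then a sub-optimal $i\in A_{R+1}$ would force $i^\star\in A_{R+1}$ (by $\cP_{R+1}$, which holds by \autoref{lem:lemref}), contradicting $|A_{R+1}|=1$. Combining (i), (ii), (iii) with the partition of $[K]$ yields $D_{R+1}=[K]\setminus\cS^\star$.

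\textbf{Key step: de-activated arms have a large empirical gap.} Fix a round $r$ and abbreviate $\bar\Delta:=\Delta_{(\lambda_{r+1}+1)}$. Since at most $\lambda_{r+1}$ arms have true gap $<\bar\Delta$, the set $\mathcal G:=\{a\in A_r:\Delta_a\ge\bar\Delta\}$ has $|\mathcal G|\ge\lambda_r-\lambda_{r+1}$, and I will show $\widehat\Delta_{a,r}\ge(1-2c)\bar\Delta$ for every $a\in\mathcal G$. If $a\in\cS^\star$, then \autoref{lem:lem-gap-redef} gives $\M(a,j)\ge\Delta_a\ge\bar\Delta$ for all $j\ne a$, which on $\cE_c^1$ forces $\Mh(a,j;r)\ge(1-c)\bar\Delta>0$ for every $j\in A_r\setminus\{a\}$, hence $a\in S_r$, and then \autoref{lem:lem-gap-ineq} (applicable since $\cP_r$ holds by \autoref{lem:lemref}) gives $\widehat\Delta_{a,r}=\widehat\delta_{a,r}^\star\ge\Delta_a-2c\bar\Delta\ge(1-2c)\bar\Delta$. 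If $a\notin\cS^\star$, then $\cP_r$ gives $a^\star\in A_r$ and, componentwise, $\mu_a^d-\mu_{a^\star}^d\le-\m(a,a^\star)=-\Delta_a^\star\le-\bar\Delta$, so on $\cE_c^1$, $\Mh(a,a^\star;r)\le-(1-c)\bar\Delta<0$, hence $a\notin S_r$, and \autoref{lem:lem-gap-ineq} gives $\widehat\Delta_{a,r}=\widehat\Delta_{a,r}^\star\ge\Delta_a^\star-2c\bar\Delta\ge(1-2c)\bar\Delta$. Since EGE de-activates exactly the $\lambda_r-\lambda_{r+1}$ arms with the largest empirical gaps and $\mathcal G$ already contains that many arms with empirical gap $\ge(1-2c)\bar\Delta$, every de-activated arm $i$ satisfies $\widehat\Delta_{i,r}\ge(1-2c)\bar\Delta$; the tie-breaking rule is irrelevant here since tied arms share this value.

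\textbf{Finishing (i) and (ii).} Let $i$ be de-activated at round $r$, so $\widehat\Delta_{i,r}\ge(1-2c)\bar\Delta$. If $i$ is sub-optimal and, for contradiction, $i\in S_r$, then $\widehat\Delta_{i,r}=\widehat\delta_{i,r}^\star\le\Mh(i,i^\star;r)$ (the index $j=i^\star$ is admissible since $i^\star\in A_r$ by $\cP_r$), while componentwise $\mu_i^d-\mu_{i^\star}^d\le-\Delta_i^\star\le0$ together with $\cE_c^1$ give $\Mh(i,i^\star;r)\le c\bar\Delta$; hence $1-2c\le c$, impossible for $c<1/6$, so $i\notin S_r$. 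If $i$ is optimal and, for contradiction, $i\notin S_r$, then $\widehat\Delta_{i,r}=\widehat\Delta_{i,r}^\star=\max_{k\in A_r\setminus\{i\}}\mh(i,k;r)$, so some $k\in A_r\setminus\{i\}$ has $\mh(i,k;r)\ge(1-2c)\bar\Delta$, and $\cE_c^1$ then yields $\m(i,k)\ge(1-3c)\bar\Delta>0$, i.e.\ $\mu_k$ strictly dominates $\mu_i$, whence $i\notin\cS^\star$, a contradiction, so $i\in S_r$. This proves (i) and (ii), and with (iii) we conclude $\widehat S_T=\cS^\star$. The hard part is the key step: controlling $\widehat\Delta_{i,r}$ for a de-activated arm forces one to compare the empirical Pareto set $S_r$ with $\cS^\star$, which differ in general — the resolution is that for the ``pigeonhole'' arms $\mathcal G$ (true gap $\ge\bar\Delta$) the two notions of optimality \emph{do} coincide on $\cE_c^1$, and, crucially, the resulting bound $\widehat\Delta_{i,r}\ge(1-2c)\bar\Delta$ is obtained \emph{before} deciding whether $i$ itself is correctly classified, which breaks the apparent circularity with (i)–(ii).
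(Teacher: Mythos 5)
Your proof is correct and takes essentially the same route as the paper's (the paper establishes this statement inside the proof of Theorem~\ref{thm:main-res}): both rely on Lemma~\ref{lem:lemref} to guarantee $\cP_r$ at every round, a pigeonhole on the true gaps to show that every de-activated arm has empirical gap at least $(1-2c)\Delta_{(\lambda_{r+1}+1)}$, and the concentration results (Lemmas~\ref{lem:lem-ineq} and~\ref{lem:lem-gap-ineq}) to turn any misclassification into a contradiction for $c<1/6$. Your set $\mathcal{G}$ of active arms with true gap at least $\Delta_{(\lambda_{r+1}+1)}$ is only a cosmetic repackaging of the paper's witness arm $a_r\in A_{r+1}\cup\{i\}$ satisfying $\Delta_{a_r}\geq\Delta_{(\lambda_{r+1}+1)}$ and having a smaller empirical gap than the de-activated arm.
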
 
Theorem~\ref{thm:main-res} then follows by upper-bounding the probability of $\bar{\cE_c^1}$ using Hoeffding's inequality.

\section{RELAXING PSI}\label{sec:relax}
In this section we explain how EGE-SR can be slighlty adapted to tackle the ``at most $k$ optimal arms'' relaxation (or PSI-$k$) first introduced by \cite{kone2023adaptive} in the fixed confidence setting. 

In this problem the goal is to return a subset $\widehat{S}_T$ of $\cS^\star$ of size $k$ , or $\cS^\star$ itself if its size is smaller than $k$. In the fixed budget setting, we define the $k$-relaxed expected loss as $e_{T, k}(\nu) := \bE_\nu[\cL(\widehat S_T, k)]$ where   
\begin{eqnarray*}
	\cL(\widehat S, k) := \begin{cases}
		\ind\{\widehat S \subset \cS^\star\} & \text{if} \quad  \lvert \widehat S \lvert = k, \\
		\ind\{\widehat S = \cS^\star\} & \text{else}. 
	\end{cases}
\end{eqnarray*}
To minimize $e_{T, k}(\nu)$ for any parameter $k$ and budget $T$, we propose \egesrk{}, a variant of EGE-SR which may stop at some round $r < K-1$ if $\lvert B_{r+1}\lvert = k$ and recommend $B_{r+1}$ (see pseudocode in algorithm~\ref{alg:amp}). 


In Appendix~\ref{subsec:relax}, we prove an upper-bound on the expected loss $e_{T, k}(\nu)$. To introduce it, we define $\omega_{(k)}$ to be the $k$-th largest gap among the optimal arms : 
$ \omega_{(k)}:= \max_{i\in \cS^\star}^{k} \Delta_i$ with $\omega_{(k)} = 0$ if $|\cS^\star| < k$. Our bound features the complexity measure $H_2^{(k)}(\nu) :=\max_{i\in [K]} i(\Delta^{(k)}_{(i)})^{-2}$ with the $k$-relaxed gaps 
\begin{equation}
\Delta_i^{(k)} := 
	\begin{cases}
		\max(\Delta_i, \omega_{(k)}) & \text{if} \quad i \in \cS^\star \\
		\Delta_i &\text{else.} 
	\end{cases}
\end{equation}


\begin{algorithm}[t]
\caption{EGE-SR-$k$}\label{alg:amp}
\KwResult{At most $k$ Pareto optimal arms}
\Input{$k\in [K]$, Set $A_1= [K]$, $B_1 = \emptyset$, $D_1 = \emptyset$} 

 \For{$r=1, 2, \dots, K-1$}{
 
 Collect $t_r$ samples from each arm $a \in A_r$
  
 Compute $S_r$ the empirical Pareto set
  
 {Choose} $i_r \in \argmax_{i\in A_r} \widehat \Delta_{i, r}$  \tcp{ties broken in favor of arms in $S_r$}
  
  $A_{r+1} \gets A_r \backslash\{ i_r\}$\;
  
  $B_{r+1} \gets B_r \cup \{ S_r \cap \{i_r\}\}$ \; 
  
  $D_{r+1} \gets D_r \cup \{\{i_r\} \cap (A_r\backslash S_r) \}$ \; 
  
  \If{$\lvert B_{r+1} \lvert=k$}{
  \textbf{break and return } $\widehat S_T = B_{r+1}$  \; 
}  
 }
 \Output{$\widehat S_T = B_{K}  \cup A_K$}
\end{algorithm}

\begin{restatable}{theorem}{mainAMP}
\label{thm:main-amp}
Let $k\in [K]$. \egesrk{} satisfies {
$$e_{T, k}(\nu) \!\leq 2(K-1)^{2}\lvert \cS^\star \!\lvert D\exp\!\left(\!- \frac{T-K}{144\sigma^2H_2^{(k)}(\nu)\overline\log(K)}\!\right)\!.$$ }
\end{restatable}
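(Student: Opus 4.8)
The plan is to follow the proof of the EGE-SR bound in \autoref{thm:sr_sh}, carried out with the $k$-relaxed gaps $\Delta_i^{(k)}$ in place of the $\Delta_i$ and with the early-stopping rule of \egesrk\! incorporated. Note first that \egesrk\! run without ever triggering its stopping test is exactly EGE-SR (same schedule, same removal rule, same accept/reject labelling). The starting point is the relaxed concentration event, defined for $c>0$ by
\[
\cE_c^{(k)} := \bigcap_{r=1}^{K-1}\,\bigcap_{i\in\cS^\star}\,\bigcap_{j\in[K]}\ \Big\{\ \big\|(\vmuh_i(r)-\vmuh_j(r))-(\vmu_i-\vmu_j)\big\|_\infty\ \leq\ c\,\Delta^{(k)}_{(K-r+1)}\ \Big\},
\]
where $\Delta^{(k)}_{(1)}\leq\dots\leq\Delta^{(k)}_{(K)}$ are the sorted relaxed gaps (for the SR schedule $K-r+1=\lambda_{r+1}+1$, the index featured in \autoref{thm:main-res}; samples of rounds that are not executed are defined by a harmless coupling). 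Since $\Delta_i^{(k)}=\Delta_i$ for every sub-optimal arm and $\Delta_i^{(k)}\geq\Delta_i$ for every optimal arm, the proofs of \autoref{lem:lem-gap-ineq} and \autoref{lem:lemref} transfer, with $\Delta$ replaced by $\Delta^{(k)}$: on $\cE_c^{(k)}$ with $c<1/6$, $\cP_r$ holds for every round $r$, so every sub-optimal arm is deactivated before the optimal arm dominating it the most, and, reasoning as in the proof of \autoref{lem:newlemma}, every removed sub-optimal arm is placed in $D$. In particular \emph{no sub-optimal arm is ever placed in $B$} on $\cE_c^{(k)}$.

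Next I examine the output on $\cE_c^{(k)}$. If $|\cS^\star|<k$, then $\omega_{(k)}=0$, so $\Delta_i^{(k)}=\Delta_i$ and $H_2^{(k)}(\nu)=H_2(\nu)$; moreover $|B_{r+1}|\leq|\cS^\star|<k$ at every round, so the stopping test never fires, \egesrk\! coincides with EGE-SR, and the analysis behind \autoref{thm:sr_sh} gives $\widehat S_T=\cS^\star$, whence $\cL(\widehat S_T,k)=0$. If $|\cS^\star|\geq k$, at most $K-|\cS^\star|$ of the $K-1$ removals are of sub-optimal arms, so the $k$-th removal of an optimal arm occurs no later than round $K-|\cS^\star|+k\leq K-1$; since on $\cE_c^{(k)}$ every removed optimal arm is placed in $B$, either the stopping test fires with $|B_{r+1}|=k$ and $\widehat S_T=B_{r+1}$ is a size-$k$ subset of $\cS^\star$, or (in the boundary case $|\cS^\star|=k$) the algorithm reaches round $K-1$ with $|B_K|=k-1$ and a single optimal arm left active, so $\widehat S_T=B_K\cup A_K=\cS^\star$. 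In all cases $\cL(\widehat S_T,k)=0$ on $\cE_c^{(k)}$, hence
\[
e_{T,k}(\nu)\ =\ \bP_\nu\!\big(\cL(\widehat S_T,k)=1\big)\ \leq\ \bP_\nu\!\big(\,\overline{\cE_c^{(k)}}\,\big).
\]

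It remains to bound $\bP_\nu(\overline{\cE_c^{(k)}})$, which is done exactly as for EGE-SR. The difference of two $n_r$-sample empirical means is $\sigma\sqrt{2/n_r}$-subgaussian coordinate-wise, so Hoeffding's inequality and a union bound over the $K-1$ rounds, the $|\cS^\star|$ optimal arms $i$, the $K-1$ other arms $j$ and the $D$ coordinates give
\[
\bP_\nu\!\big(\,\overline{\cE_c^{(k)}}\,\big)\ \leq\ 2(K-1)^2|\cS^\star|\,D\,\exp\!\left(-\frac{c^2}{4\sigma^2}\,\min_{1\leq r\leq K-1}n_r\,(\Delta^{(k)}_{(K-r+1)})^2\right).
\]
Using the SR allocation $n_r\geq\frac{1}{\overline{\log(K)}}\frac{T-K}{K+1-r}$, we get $n_r(\Delta^{(k)}_{(K-r+1)})^2\geq\frac{T-K}{\overline{\log(K)}}\cdot\frac{(\Delta^{(k)}_{(K+1-r)})^2}{K+1-r}\geq\frac{T-K}{\overline{\log(K)}\,H_2^{(k)}(\nu)}$ by definition of $H_2^{(k)}(\nu)=\max_i i(\Delta^{(k)}_{(i)})^{-2}$; taking $c$ as in the proof of \autoref{thm:sr_sh} (for which $c^2/4=1/144$) yields the announced bound.

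The main obstacle is the part of the second step glossed over above: showing that the \emph{coarser} concentration of $\cE_c^{(k)}$ — at scale $c\max(\Delta_i,\omega_{(k)})$ for an optimal arm $i$, which may be much larger than $c\Delta_i$ — still suffices to guarantee $\cL(\widehat S_T,k)=0$. The point is that the loss does not penalize failing to finely separate an optimal arm with $\Delta_i<\omega_{(k)}$, and that, when $|\cS^\star|\geq k$, on $\cE_c^{(k)}$ the arms with relaxed gap $\geq\omega_{(k)}$ (in particular the $k$ optimal arms with the largest gaps) are removed and correctly labelled \emph{before} any arm with relaxed gap $<\omega_{(k)}$ is processed, so that the stopping test fires while only well-separated arms have been handled. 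Formalizing this requires redoing, with the relaxed gaps, the elimination-order bookkeeping underlying the proof of \autoref{thm:sr_sh} — in particular that the arm removed in round $r$ has relaxed gap at least $\Delta^{(k)}_{(K-r+1)}$, and that such an arm, when optimal, is empirically classified as Pareto-optimal, which amounts to controlling its margin $\M(i,j)$ to every still-active arm $j$ — and then reconciling it with the accept/stop logic of \egesrk\!.
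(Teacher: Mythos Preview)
Your overall route is the paper's: define the relaxed concentration event $\cE_c^{(k)}$ at scale $c\,\Delta^{(k)}_{(K+1-r)}$, argue correctness on it, then bound its complement by Hoeffding and the SR allocation, and let $c\to 1/6$. The Hoeffding and allocation computations are fine.

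The genuine gap is precisely where you say it is, and your first step already commits to a claim that is not justified. The assertion that ``the proofs of \autoref{lem:lem-gap-ineq} and \autoref{lem:lemref} transfer, with $\Delta$ replaced by $\Delta^{(k)}$, since $\Delta_i^{(k)}\geq\Delta_i$'' does not go through: that inequality points the wrong way. Those proofs need, at round $r$, an active arm $a$ whose \emph{true} gap satisfies $\Delta_a\geq\Delta^{(k)}_{(K+1-r)}$, so that \eqref{eq:trick2} becomes $\widehat\Delta_{a,r}\geq\Delta_a-2c\,\Delta^{(k)}_{(K+1-r)}\geq(1-2c)\,\Delta^{(k)}_{(K+1-r)}$ and drives the contradictions. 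Pigeonhole only gives an arm with $\Delta^{(k)}_a\geq\Delta^{(k)}_{(K+1-r)}$; if $a$ is optimal with $\Delta_a<\omega_{(k)}$ this is useless. Consequently your blanket claims that ``$\cP_r$ holds for every round'' and ``every removed optimal arm is placed in $B$'' are unsupported at this level of concentration.

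The paper closes this via a short auxiliary step woven into the induction (its Lemma~\ref{lem:induc-exist}): as long as fewer than $\tilde k=\min(k,|\cS^\star|)$ optimal arms have been accepted and no error has yet occurred, at least one of the $\tilde k$ optimal arms with the largest gaps is still active, and for that arm $\Delta_{a'}^{(k)}=\Delta_{a'}\geq\omega_{(k)}$; combined with pigeonhole this yields the required active arm with $\Delta_a=\Delta_a^{(k)}\geq\Delta^{(k)}_{(K+1-r)}$. This observation must sit \emph{inside} the induction because the existence of such an arm and the no-error hypothesis are intertwined. As a consequence, the paper only proves correctness up to the round where the $k$-th acceptance occurs (its $\tau_k^t$), not for all rounds as you assert --- which is enough, since the algorithm halts there. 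Once you insert this lemma into your inductive step, the rest of your argument (case split on $|\cS^\star|\lessgtr k$, Hoeffding, SR allocation) matches the paper.
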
  

This result is particularly insightful when there are many optimal arms and some of them are easy to identify as such (large gaps). 
Indeed when $\lvert \cS^\star \lvert \approx K$ and $k\ll \lvert \cS^\star\lvert$, $H_2^{(k)}(\nu)$ can be an order of magnitude smaller than $H_2(\nu)$. 
We also note that when  $k>\lvert \cS^\star \lvert$ (then PSI-$k$ reduces to PSI), $H_2^{(k)}(\nu) = H_2(\nu)$ and we recover the result of Corollary~\ref{thm:sr_sh}.  

The stopping time of \egesrk{} is $$ \tau := \inf\left\{r: \lvert B_{r+1}\lvert  = k\right\} \land (K-1).$$ 
Letting $N_{\tau}$ denote the total number of samples used at termination, we upper-bound $\bE_\nu[\tau]$ and $\bE_\nu[N_\tau]$ in Appendix~\ref{subsec:relax}, showing that when the budget is large, we essentially have $\bE_\nu[\tau]\leq q$ and $\bE_\nu[N_\tau] \leq N_q$, with $q:=K-\lvert \cS^\star \lvert + k$. Intuitively it suggests that in the worst case the $(K-\lvert \cS^\star \lvert)$ sub-optimal arms are discarded before $k$ optimal arms are accepted as they might be needed to dominate some sub-optimal arms. Moreover, we remark that $q$ can be way smaller than $(K-1)$. For instance when $[K]= \cS^\star$, we have $q=k$. 

\section{EXPERIMENTAL STUDY}\label{sec:exp}
We evaluate our algorithms on synthetic and real-world tasks.
We compare to Uniform Allocation (UA) and \apefb{} for three parameters 
of the form $a_c = c \frac{25}{36}\frac{T -K}{H(\nu)}$, $c\in \{1/10, 1, 10\}$. Our guarantees on \apefb{} are only valid for $c\leq 1$ and are optimal for $c=1$. We consider a heuristic version of \apefb{} which adaptively estimates the hardness $H(\nu)$. We refer to this algorithm as APE-FB-ADAPT. We run the experiments 4000 times with different seeds and we report the $\log_{10}$ of the average mis-identification rate.  
 
\begin{figure*}[]
\centering
 \begin{minipage}{0.95\textwidth}
      \centering
  \begin{minipage}{0.3\linewidth}
          \begin{figure}[H]
              \includegraphics[width=\linewidth]{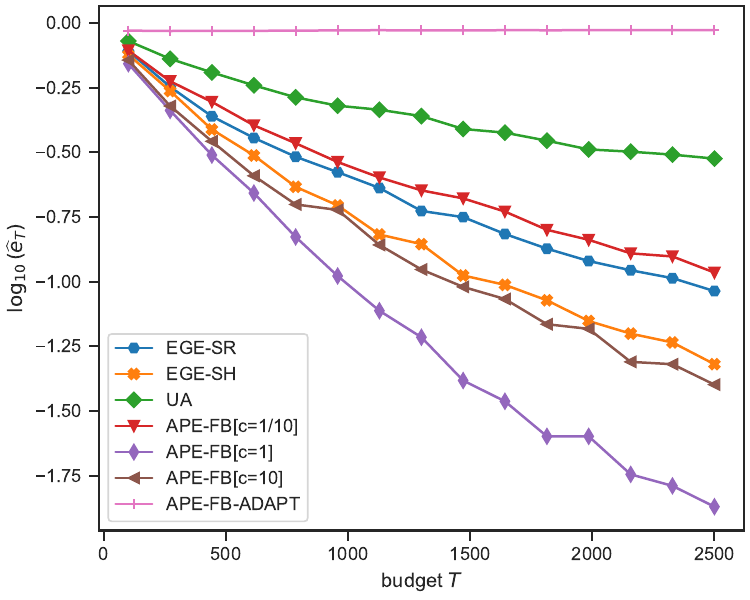}	
              \caption{Application 1: COV-BOOST trial}
          \end{figure}
      \end{minipage}
\hspace{0.5cm}
      \begin{minipage}{0.3\linewidth}
          \begin{figure}[H]
              \includegraphics[width=\linewidth]{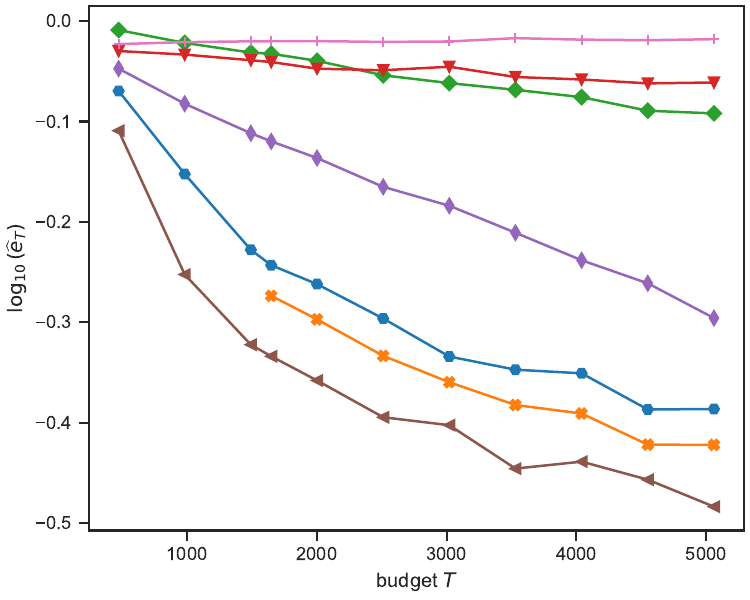}
              \caption{Application 2: Sorting Networks dataset.}
              \label{fig:sorting_net}
          \end{figure}
      \end{minipage}
      \hspace{0.5cm} 
      \begin{minipage}{0.3\linewidth}
          \begin{figure}[H]
              \includegraphics[width=\linewidth]{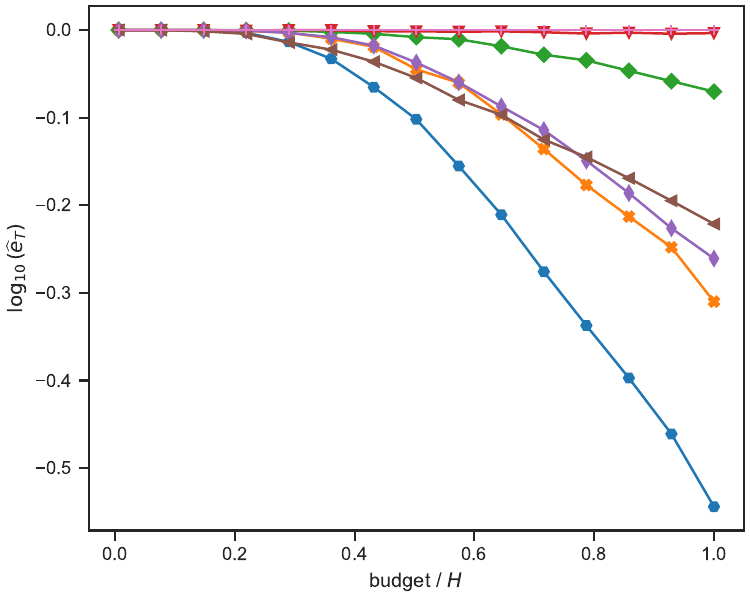}
              \caption{Arms on a convex Pareto set.}
          \end{figure}
      \end{minipage}
      \vfill  
     \begin{minipage}{0.3\linewidth}
          \begin{figure}[H]
        \includegraphics[width=\linewidth]{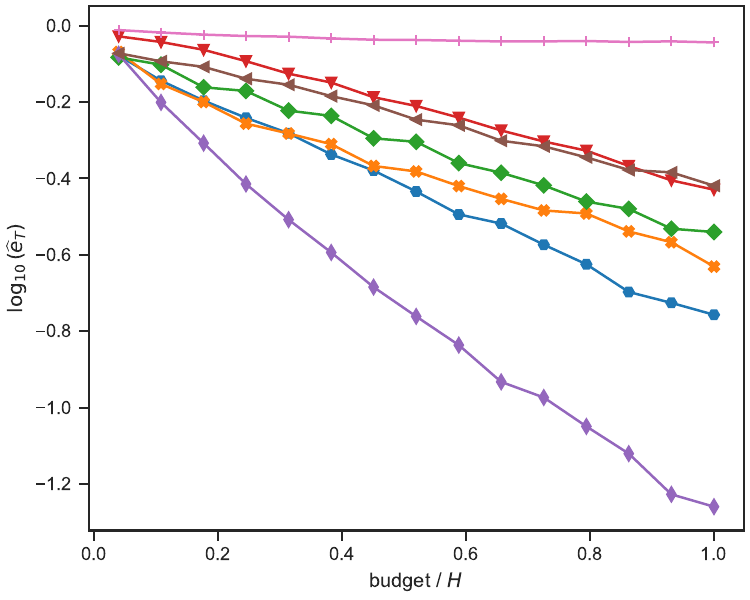}
              \caption{Each sub-optimal $i$ is only dominated by $i^\star$.}
          \end{figure}
      \end{minipage}
 \hspace{0.5cm}
            \begin{minipage}{0.3\linewidth}
          \begin{figure}[H]
              \includegraphics[width=\linewidth]{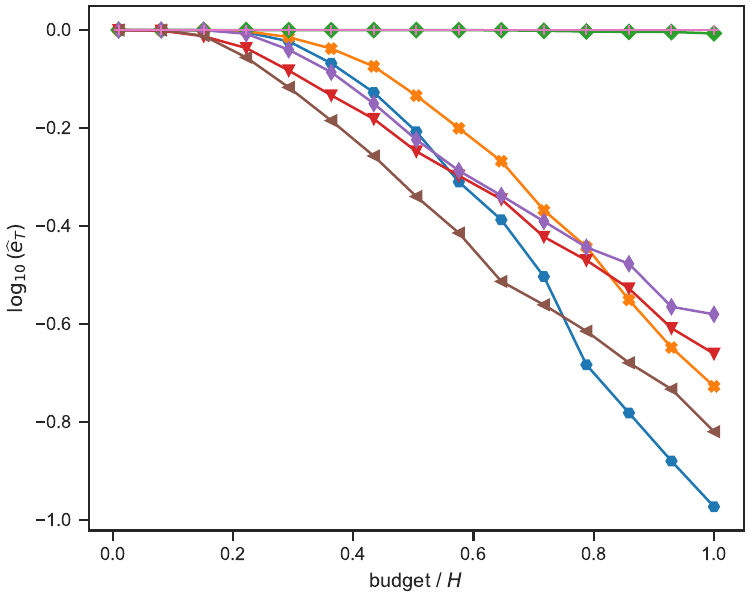}
              \caption{$K=200$ arms on the unit circle.}
          \end{figure}
      \end{minipage}
       \hspace{0.5cm} 
      \begin{minipage}{0.3\linewidth}
          \begin{figure}[H]
              \includegraphics[width=\linewidth]{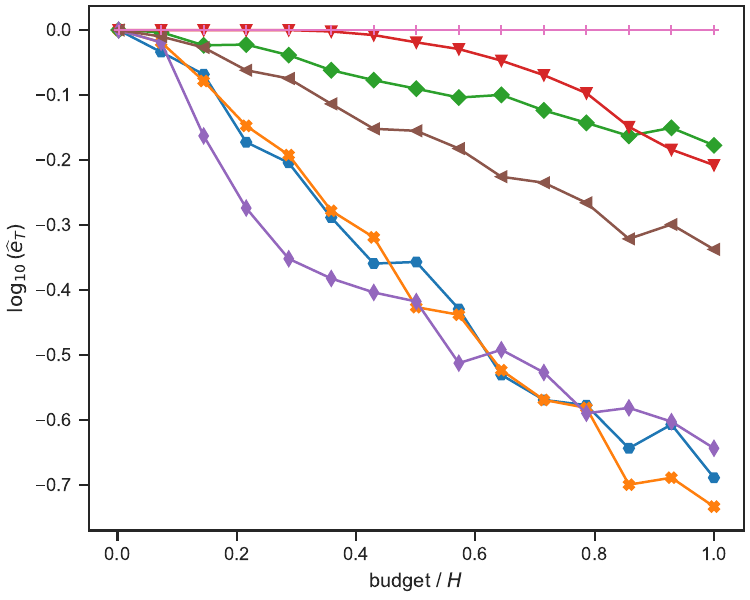}
              \caption{High dimension ($D=10$) with 2 group of arms. }
          \end{figure}
      \end{minipage} 
      \end{minipage}
   \end{figure*}

\subsection{Real-world Datasets} 

\paragraph{COV-BOOST} \!\!\citep{munro_safety_2021} is a phase II vaccine clinical trial conducted on 2883 participants to measure the immunogenicity of different Covid-19 vaccines as a booster (third dose). Combining the first two doses received and the third dose investigated in the trial, there were $K=20$ arms and the authors reported the sample mean response and confidence intervals (based on log-normal assumption of the data) of each arm to a bunch of immunogenicity indicators. \cite{kone2023adaptive} further extracted and processed the responses to 3 indicators (cellular response, anti-spike IgG and NT$_{50}$) to generate a multivariate normal bandit with $K=20, D=3$ and a diagonal covariance matrix. We use their dataset in our simulations with a total budget of $T=2500$. 
\paragraph{Hardware Design}
 We use the SNW (Sorting Network Width) dataset \citep{sort_dataset} to generate a bandit model. The dataset is made of 206 different sorting networks (devices used to sort items). The authors reported  area and throughput of each network when synthetized on a FPGA (Field-Programmable Gate Array). The area is the number of FPGA slices (resources units) used during execution and the throughput is the number of samples treated per second. The goal is then to optimize both objectives simultaneously (reduced area and large throughput). The simulation process is costly and the measures may slightly vary for the same network due to randomness in the circuits. We simulate a Gaussian bandit with $K=206, D=2$ with the SNW dataset. We use a total budget of $T=5000$.  
 \subsection{Synthetic Benchmark}
 We run the algorithms on different synthetic instances. For each of them, we compute the complexity $H(\nu)$ and following \cite{audibert_best_2010, karnin_almost_2013} we set the total budget to $T= H(\nu)$. 
 
{\bfseries Experiment 1: Arms on a convex Pareto set.}
 $K=60, D=2$. We choose $x_1,\dots, x_{20}$ equally spaced in $[0.55, 0.95]$ and $i=1,\dots, 10$, $\vmu_i:=(x_i^2, 1/(4x_i^2))^\T$. $\vmu_{11}, \dots, \vmu_{60}$ are chosen from $\left\{(x, y)\in [0.1, 0.8]^2: xy\leq \frac15\right\}$.
 
{\bfseries Experiment 2: Group of sub-optimal arms where each sub-optimal arm $i$ has a unique $i^\star$.} $K=10, D=2, \lvert \cS^\star\lvert = 2$. For each sub-optimal arm $i$, there is a unique $j$ such that $\vmu_i \prec \vmu_j$. We choose $\vmu_1 := (0.4, 0.75)^\T, \vmu_2 := (0.75, 0.4)^\T$  and for $i=1, \dots, 4$, $\vmu_{2i+1} := (0.45 + 0.2^i, 0.35 - 0.2^i)^\T$, $\vmu_{2i+2}:=(0.10 + 0.20^i, 0.70 - 0.20^i)^\T$.

{\bfseries Experiment 3: Many arms on the unit circle.} $K=200, D=2$ and generate an isotropic multivariate normal instance with $\sigma= 1/4$.  We choose $\beta_1, \dots, \beta_{20}$ evenly spaced in $[\pi/12, \pi/2 - \pi/12]$ and $\beta_{21},\dots, \beta_{200}$ evenly spaced in $[\pi/2+\pi/6, 2\pi - \pi/6]$.  For each arm $i, \vmu_i:= (\cos(\beta_i), \sin(\beta_i))^\T$. 
 
 {\bfseries Experiment 4: High dimension ($D=10$) } We set $K=50$ and generate $(\vmu_1, \dots, \vmu_{30}) \sim \mathcal{U}\left([0.2, 0.45]^{10}\right)^{\otimes 30}$ and $(\vmu_{31}, \dots, \vmu_{50}) \sim \mathcal{U}\left([0.55, 0.75]^{10}\right)^{\otimes 20}$ 
\subsection{Results}
In all experiments, the uniform allocation (UA) baseline is largely outperformed by both EGE-SR and EGE-SH (with no clear ordering between the two algorithms). This is particularly the case when there are many arms and for complex Pareto sets. By estimating the hardness to classify each arm, \ege{} eventually allocates more samples to arms that are difficult to classify, leading to a smaller error probability. 
\apefb{} is a good competitor to our \ege{} algorithms however it is not robust to the hyper-parameter $a$, which requires the knowledge of $H(\nu)$ to be properly tuned. Our proposed heuristic that estimates the complexity online fails dramatically.

In Appendix~\ref{sec:implementation} we discuss the computational complexity of \ege{} and we detail the implementation setup. In Appendix~\ref{sec:more_exp} we report additional experimental results on synthetic datasets. We illustrate the PSI-$k$ relaxation on the SNW dataset, showing that \egesrk{} can efficiently identify a subset of the Pareto set. 

\section{CONCLUSION}
We proposed the first algorithms for Pareto set identification in the fixed-budget setting. 

Our generic algorithm \ege{} can be coupled with different allocation and arm elimination schemes. We proved that for two instantiations, EGE-SR and EGE-SH, the probability of error decays exponentially fast with the budget, with an exponent that is un-improvable for some bandit instances. We conducted experiments showing that these algorithms consistently outperform a uniform sampling baseline and are competitive with an oracle algorithm that knows the complexity of the underlying instance. 

Extensions of this work could consider variance-aware estimates of the gaps or a Bayesian setting where prior information on the means could be used to improve performance \citep{atsidakou2023bayesian}. Finally, just like Successive~Rejects or Sequential~Halving, EGE algorithms heavily relies on the knowledge of $T$ for their tuning. In future work, we are interested in proposing anytime algorithms, that can have a small error probability for any budget $T$. We may take inspiration from some anytime algorithms recently proposed for simpler pure exploration tasks \citep{katz-samuels_feasible_2018,jourdan2023epsilonbestarm}.   


\subsubsection*{Acknowledgements}
Cyrille Kone is funded by an Inria/Inserm PhD grant. Emilie Kaufmann acknowledges the support of the French National Research Agency under the projects BOLD (ANR-19-CE23-0026-04) and FATE (ANR22-CE23-0016-01).
\bibliographystyle{apalike}
\bibliography{main}

\begin{thebibliography}{}

\bibitem[Ararat and Tekin, 2023]{ararat_vector_2021}
Ararat, C. and Tekin, C. (2023).
\newblock Vector optimization with stochastic bandit feedback.
\newblock In {\em Proceedings of The 26th International Conference on
  Artificial Intelligence and Statistics}, pages 2165--2190. PMLR.

\bibitem[Atsidakou et~al., 2023]{atsidakou2023bayesian}
Atsidakou, A., Katariya, S., Sanghavi, S., and Kveton, B. (2023).
\newblock Bayesian fixed-budget best-arm identification.
\newblock {\em arXiv preprint, eprint:2211.08572}.

\bibitem[Audibert and Bubeck, 2010]{audibert_best_2010}
Audibert, J.-Y. and Bubeck, S. (2010).
\newblock Best arm identification in multi-armed bandits.
\newblock In {\em {COLT} - 23th Conference on Learning Theory - 2010}.

\bibitem[Auer et~al., 2016]{auer_pareto_2016}
Auer, P., Chiang, C.-K., Ortner, R., and Drugan, M.-M. (2016).
\newblock Pareto front identification from stochastic bandit feedback.
\newblock In {\em Proceedings of the 19th International Conference on
  Artificial Intelligence and Statistics}, pages 939--947. {PMLR}.

\bibitem[Bubeck et~al., 2013]{bubeck_multiple_2013}
Bubeck, S., Wang, T., and Viswanathan, N. (2013).
\newblock Multiple identifications in multi-armed bandits.
\newblock In {\em Proceedings of the 30th International Conference on Machine
  Learning}, pages 258--265. {PMLR}.

\bibitem[Carpentier and Locatelli, 2016]{carpentier_tight_2016}
Carpentier, A. and Locatelli, A. (2016).
\newblock Tight (lower) bounds for the fixed budget best arm identification
  bandit problem.
\newblock In {\em 29th Annual Conference on Learning Theory}, Proceedings of
  Machine Learning Research, pages 590--604. PMLR.

\bibitem[Daulton et~al., 2020]{daulton}
Daulton, S., Balandat, M., and Bakshy, E. (2020).
\newblock Differentiable expected hypervolume improvement for parallel
  multi-objective bayesian optimization.
\newblock In {\em Proceedings of the 34th International Conference on Neural
  Information Processing Systems}. Curran Associates Inc.

\bibitem[Deb et~al., 2002]{nsga}
Deb, K., Pratap, A., Agarwal, S., and Meyarivan, T. (2002).
\newblock A fast and elitist multiobjective genetic algorithm: Nsga-ii.
\newblock {\em IEEE Transactions on Evolutionary Computation}, pages 182--197.

\bibitem[Degenne, 2023]{degenne23a}
Degenne, R. (2023).
\newblock On the existence of a complexity in fixed budget bandit
  identification.
\newblock In {\em Proceedings of Thirty Sixth Conference on Learning Theory},
  Proceedings of Machine Learning Research, pages 1131--1154. PMLR.

\bibitem[Drugan and Nowe, 2013]{drugan_designing_2013}
Drugan, M.-M. and Nowe, A. (2013).
\newblock Designing multi-objective multi-armed bandits algorithms: A study.
\newblock In {\em The 2013 International Joint Conference on Neural Networks
  (IJCNN)}, pages 1--8.

\bibitem[Even-Dar et~al., 2002]{even-dar_pac_2002}
Even-Dar, E., Mannor, S., and Mansour, Y. (2002).
\newblock Pac bounds for multi-armed bandit and markov decision processes.
\newblock In {\em Annual Conference Computational Learning Theory}.

\bibitem[Faizal and Nair, 2022]{faizal2022constrained}
Faizal, F.~Z. and Nair, J. (2022).
\newblock Constrained pure exploration multi-armed bandits with a fixed budget.

\bibitem[Gabillon et~al., 2012]{gabillon_best_2012}
Gabillon, V., Ghavamzadeh, M., and Lazaric, A. (2012).
\newblock Best arm identification: A unified approach to fixed budget and fixed
  confidence.
\newblock In {\em Advances in Neural Information Processing Systems}. Curran
  Associates, Inc.

\bibitem[Garivier and Kaufmann, 2016]{garivier_optimal_2016}
Garivier, A. and Kaufmann, E. (2016).
\newblock Optimal best arm identification with fixed confidence.
\newblock In {\em 29th Annual Conference on Learning Theory}, pages 998--1027.
  PMLR.

\bibitem[Jamieson et~al., 2014]{jamieson_lil_2013}
Jamieson, K., Malloy, M., Nowak, R., and Bubeck, S. (2014).
\newblock lil' ucb : An optimal exploration algorithm for multi-armed bandits.
\newblock In {\em Proceedings of The 27th Conference on Learning Theory}, pages
  423--439. PMLR.

\bibitem[Jourdan et~al., 2023]{jourdan2023epsilonbestarm}
Jourdan, M., Degenne, R., and Kaufmann, E. (2023).
\newblock An $\varepsilon$-best-arm identification algorithm for
  fixed-confidence and beyond.
\newblock {\em Thirty-Seventh Conference on Neural Information Processing
  Systems}.

\bibitem[Kalyanakrishnan et~al., 2012]{kalyanakrishnan_pac_2012}
Kalyanakrishnan, S., Tewari, A., Auer, P., and Stone, P. (2012).
\newblock {PAC} subset selection in stochastic multi-armed bandits.
\newblock In {\em Proceedings of the 29th International Coference on
  International Conference on Machine Learning}, pages 227--234. Omnipress.

\bibitem[Karnin et~al., 2013]{karnin_almost_2013}
Karnin, Z., Koren, T., and Somekh, O. (2013).
\newblock Almost optimal exploration in multi-armed bandits.
\newblock In {\em Proceedings of the 30th International Conference on
  International Conference on Machine Learning}. {JMLR}.

\bibitem[Karpov and Zhang, 2022]{karpov2022collaborative}
Karpov, N. and Zhang, Q. (2022).
\newblock Collaborative best arm identification with limited communication on
  non-iid data.
\newblock {\em arXiv preprint, eprint:2207.08015}.

\bibitem[Katz-Samuels and Scott, 2018]{katz-samuels_feasible_2018}
Katz-Samuels, J. and Scott, C. (2018).
\newblock Feasible arm identification.
\newblock In {\em Proceedings of the 35th International Conference on Machine
  Learning}, pages 2535--2543. {PMLR}.

\bibitem[Katz-Samuels and Scott, 2019]{katz-samuels_top_2019}
Katz-Samuels, J. and Scott, C. (2019).
\newblock Top feasible arm identification.
\newblock In {\em Proceedings of the Twenty-Second International Conference on
  Artificial Intelligence and Statistics}, pages 1593--1601. {PMLR}.

\bibitem[Kaufmann et~al., 2016]{kaufmann_complexity_2014}
Kaufmann, E., Capp{{\'e}}, O., and Garivier, A. (2016).
\newblock On the complexity of best-arm identification in multi-armed bandit
  models.
\newblock {\em Journal of Machine Learning Research}, 17(1):1--42.

\bibitem[Knowles, 2006]{knowles_parego_2006}
Knowles, J. (2006).
\newblock Parego: a hybrid algorithm with on-line landscape approximation for
  expensive multiobjective optimization problems.
\newblock {\em IEEE Transactions on Evolutionary Computation}, pages 50--66.

\bibitem[Kone et~al., 2023]{kone2023adaptive}
Kone, C., Kaufmann, E., and Richert, L. (2023).
\newblock Adaptive algorithms for relaxed pareto set identification.
\newblock In {\em Thirty-seventh Conference on Neural Information Processing
  Systems}.

\bibitem[Lattimore and CSzepesvári, 2020]{lattimore_bandit_2020}
Lattimore, T. and CSzepesvári, C. (2020).
\newblock {\em Bandit Algorithms}.
\newblock Cambridge University Press.

\bibitem[Locatelli et~al., 2016]{locatelli_optimal_2016}
Locatelli, A., Gutzeit, M., and Carpentier, A. (2016).
\newblock An optimal algorithm for the thresholding bandit problem.
\newblock In {\em Proceedings of The 33rd International Conference on Machine
  Learning}, Proceedings of Machine Learning Research, pages 1690--1698. PMLR.

\bibitem[Munro et~al., 2021]{munro_safety_2021}
Munro, A.-P.-S., Janani, L., Cornelius, V., and et~al. (2021).
\newblock Safety and immunogenicity of seven {COVID}-19 vaccines as a third
  dose (booster) following two doses of {ChAdOx}1 {nCov}-19 or {BNT}162b2 in
  the {UK} ({COV}-{BOOST}): a blinded, multicentre, randomised, controlled,
  phase 2 trial.
\newblock {\em The Lancet}, 398(10318):2258--2276.

\bibitem[You et~al., 2023]{you23a}
You, W., Qin, C., Wang, Z., and Yang, S. (2023).
\newblock Information-directed selection for top-two algorithms.
\newblock In {\em Proceedings of Thirty Sixth Conference on Learning Theory},
  Proceedings of Machine Learning Research, pages 2850--2851. PMLR.

\bibitem[Zuluaga et~al., 2016]{zuluaga_e-pal_2016}
Zuluaga, M., Krause, A., and P{{{\"u}}}schel, M. (2016).
\newblock e-pal: An active learning approach to the multi-objective
  optimization problem.
\newblock {\em Journal of Machine Learning Research}, 17(104):1--32.

\bibitem[Zuluaga et~al., 2012]{sort_dataset}
Zuluaga, M., Milder, P., and Püschel, M. (2012).
\newblock Computer generation of streaming sorting networks.
\newblock In {\em DAC Design Automation Conference 2012}, pages 1241--1249.

\bibitem[Zuluaga et~al., 2013]{zuluaga_active_2013}
Zuluaga, M., Sergent, G., Krause, A., and Püschel, M. (2013).
\newblock Active learning for multi-objective optimization.
\newblock In {\em Proceedings of the 30th International Conference on Machine
  Learning}, volume~28, pages 462--470. {PMLR}.

\end{thebibliography}
\clearpage
\section*{Checklist}

 \begin{enumerate}
 \item For all models and algorithms presented, check if you include:
 \begin{enumerate}
   \item A clear description of the mathematical setting, assumptions, algorithm, and/or model. [Yes]: Section 2, 3, 4, 5
   \item An analysis of the properties and complexity (time, space, sample size) of any algorithm. [Yes]: Section 3, 4, 5 and supplemental material 
   \item (Optional) Anonymized source code, with specification of all dependencies, including external libraries. [Not Applicable]
 \end{enumerate}

 \item For any theoretical claim, check if you include:
 \begin{enumerate}
   \item Statements of the full set of assumptions of all theoretical results. [Yes]: Section 3, 4, 5 and supplemental material 
   \item Complete proofs of all theoretical results. [Yes]: Supplemental material
   \item Clear explanations of any assumptions. [Yes]     
 \end{enumerate}

 \item For all figures and tables that present empirical results, check if you include:
 \begin{enumerate}
   \item The code, data, and instructions needed to reproduce the main experimental results (either in the supplemental material or as a URL). [Yes]
   \item All the training details (e.g., data splits, hyperparameters, how they were chosen). [Not Applicable]
         \item A clear definition of the specific measure or statistics and error bars (e.g., with respect to the random seed after running experiments multiple times). [Yes]
         \item A description of the computing infrastructure used. (e.g., type of GPUs, internal cluster, or cloud provider). [Not Applicable]
 \end{enumerate}

 \item If you are using existing assets (e.g., code, data, models) or curating/releasing new assets, check if you include:
 \begin{enumerate}
   \item Citations of the creator If your work uses existing assets. [Yes]
   \item The license information of the assets, if applicable. [Not Applicable]
   \item New assets either in the supplemental material or as a URL, if applicable. [Not Applicable]
   \item Information about consent from data providers/curators. [Not Applicable]
   \item Discussion of sensible content if applicable, e.g., personally identifiable information or offensive content. [Not Applicable]
 \end{enumerate}

 \item If you used crowdsourcing or conducted research with human subjects, check if you include:
 \begin{enumerate}
   \item The full text of instructions given to participants and screenshots. [Not Applicable]
   \item Descriptions of potential participant risks, with links to Institutional Review Board (IRB) approvals if applicable. [Not Applicable]
   \item The estimated hourly wage paid to participants and the total amount spent on participant compensation. [Not Applicable]
 \end{enumerate}

 \end{enumerate}

\appendix
\thispagestyle{empty}
\onecolumn
%
\section{ANALYSIS OF EMPIRICAL GAP ELIMINATION} 
\label{sec:analysis}
\subsection{Proof of Theorem~\ref{thm:main-res}}
We state below an important lemma used in the proof of Theorem~\ref{thm:main-res}. For each sub-optimal arm $i$, we define $i^\star$ to be an arbitrary element in $\argmax_{k\in \cS^\star}\m(i, k)$, which by definition yields $\Delta_i^\star = \m(i,i^\star)$.  
Introducing the property
\[\cP_{r} := \left\{\forall i \notin\cS^\star, i \in A_r \Rightarrow i^\star \in A_r \right\},\]
the first step of the proof consists in proving that $\cP_r$ holds for all $r$. To do so, we introduce several intermediate results whose proofs are gathered in Appendix~\ref{sec:concentration}.

The first one controls the deviation of the empirical quantities $\Mh(i,j;r)$ and $\mh(i,j;r)$ from their actual values. 
\begin{restatable}{lemma}{lemIneq}
\label{lem:lem-ineq}
	On the event $\cE_c^1$, for all $r\in [R]$ and $i,j \in A_r$,  if  $i\in \cS^\star $ or $j\in \cS^\star$ then 
	\begin{align*}
	&\lvert \Mh(i,j;r) - \M(i,j)\lvert \leq c\Delta_{(\lambda_{r+1}+1)} \; \text{and} \\
	 &\lvert \mh(i,j;r) - \m(i,j)\lvert \leq c\Delta_{(\lambda_{r+1}+1)}.
	\end{align*}
\end{restatable}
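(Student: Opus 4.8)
The plan is to reduce both bounds to a single elementary fact: for any $u, v \in \bR^D$, $\lvert \max_d u_d - \max_d v_d\rvert \le \|u - v\|_\infty$ and $\lvert \min_d u_d - \min_d v_d\rvert \le \|u-v\|_\infty$. Both follow at once from $u_d \le v_d + \|u-v\|_\infty$ for every coordinate $d$ (and the symmetric bound), i.e. the maps ``coordinate-wise max'' and ``coordinate-wise min'' from $\bR^D$ to $\bR$ are $1$-Lipschitz for the $\ell_\infty$ norm.

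First I would fix $r \in [R]$ and $i, j \in A_r$ and apply this with $u_d := \muh_i^d(r) - \muh_j^d(r)$ and $v_d := \mu_i^d - \mu_j^d$. By definition $\Mh(i,j;r) = \max_d u_d$, $\M(i,j) = \max_d v_d$, $\mh(i,j;r) = \min_d u_d$, $\m(i,j) = \min_d v_d$, so both $\lvert \Mh(i,j;r) - \M(i,j)\rvert$ and $\lvert \mh(i,j;r) - \m(i,j)\rvert$ are at most $\|u - v\|_\infty = \bigl\| (\vmuh_i(r) - \vmuh_j(r)) - (\vmu_i - \vmu_j)\bigr\|_\infty$, which is symmetric in $i$ and $j$ (swapping them only negates the argument). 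It thus remains to bound this $\ell_\infty$ distance by $c\Delta_{(\lambda_{r+1}+1)}$ on $\cE_c^1$ whenever $i \in \cS^\star$ or $j \in \cS^\star$. Since $i, j \in A_r$, both arms were sampled $n_r$ times by the end of round $r$, so $\vmuh_i(r) = \vmuh_{i,n_r}$ and $\vmuh_j(r) = \vmuh_{j,n_r}$, matching the quantities appearing in the definition of $\cE_c^1$. If $i \in \cS^\star$, then $\xi_{i,j,r}$ is one of the events intersected in $\cE_c^1$ (take the free index in $[K]$ to be $j$), so the bound holds; if instead $j \in \cS^\star$, then $\xi_{j,i,r}$ holds on $\cE_c^1$, and the symmetry just noted turns it into the desired bound. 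This proves the lemma.

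I do not expect a genuine obstacle here: the statement is a deterministic consequence of the definition of $\cE_c^1$, and the argument is short. The only two points requiring a little care are (i) selecting the correct instance, $\xi_{i,j,r}$ or $\xi_{j,i,r}$, from $\cE_c^1$ according to which of $i,j$ belongs to $\cS^\star$, together with the symmetry of the relevant $\ell_\infty$ distance; and (ii) observing that restricting to $i, j \in A_r$ ensures the empirical means entering the claimed inequalities coincide with the $n_r$-sample estimates used to define $\cE_c^1$.
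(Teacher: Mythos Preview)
Your argument is correct and essentially the same as the paper's: it too bounds $\lvert \Mh(i,j;r)-\M(i,j)\rvert$ by $\|(\vmuh_{i,n_r}-\vmuh_{j,n_r})-(\vmu_i-\vmu_j)\|_\infty$ via the $1$-Lipschitzness of $\max_d$ (called the ``reverse triangle inequality'' there) and then invokes $\cE_c^1$. One small slip to fix: with your choice $u_d=\muh_i^d(r)-\muh_j^d(r)$ the paper's convention gives $\mh(i,j;r)=\min_d[\muh_j^d(r)-\muh_i^d(r)]=-\Mh(i,j;r)$, not $\min_d u_d$; the paper simply uses $\mh=-\Mh$ and $\m=-\M$ to get the second bound from the first, and your Lipschitz argument works just as well once the sign is corrected.
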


The second one builds on it  and uses Lemma~\ref{lem:lem-gap-redef} to prove that the empirical gaps cannot be too far from their true gaps, if $\cP_r$ holds. It is complementary to Lemma~\ref{lem:lem-gap-ineq} stated in the main paper.

\begin{restatable}{lemma}{lemGapBound}
\label{lem:lem-gap-bound}
	Assume that $\cE_1^c$ holds and let $r\in [R]$ such that $\cP_r$ holds.
	 Then for any $i\in A_r$
	 \[ \widehat \Delta_{i, r} -\Delta_i \geq \begin{cases}
	 	-2c\Delta_{(\lambda_{r+1} +1)} & \text{ if } i \in \cS^\star,\\
	 	-c\Delta_{(\lambda_{r+1} +1)} &\text{ else.}
	 \end{cases} \]
	 
\end{restatable}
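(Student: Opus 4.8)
The plan is to prove the two one–sided bounds by a case split on whether $i$ is Pareto optimal or sub-optimal, and, inside each case, on whether $i$ belongs to the empirical Pareto set $S_r$ — since this is exactly what selects the active branch of \eqref{eq:def-emp-gap}. Throughout I would use the two elementary identities $\mh(i,j;r)=-\Mh(i,j;r)$ and $\m(i,j)=-\M(i,j)$, the deviation bound of \autoref{lem:lem-ineq} (valid on $\cE_c^1$ for any pair containing an optimal arm), the gap bound of \autoref{lem:lem-gap-ineq} (valid on $\cE_c^1$ when $\cP_r$ holds), and the $\cS^\star$-free rewriting of $\Delta_i$ from \autoref{lem:lem-gap-redef}. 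Write $\Delta:=\Delta_{(\lambda_{r+1}+1)}$ for brevity.

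\emph{Case $i\in\cS^\star$.} If $i\in S_r$, then $\widehat\Delta_{i,r}=\widehat\delta_{i,r}^\star\ge\Delta_i-2c\Delta$ is precisely the optimal-arm part of \autoref{lem:lem-gap-ineq}. If $i\notin S_r$, then by definition of $S_r$ there is $j_0\in A_r\setminus\{i\}$ with $\Mh(i,j_0;r)\le 0$, hence $\widehat\Delta_{i,r}=\widehat\Delta_{i,r}^\star\ge\mh(i,j_0;r)=-\Mh(i,j_0;r)\ge 0$; since $i\in\cS^\star$, \autoref{lem:lem-ineq} gives $\M(i,j_0)\le\Mh(i,j_0;r)+c\Delta\le c\Delta$, so by \autoref{lem:lem-gap-redef} $\Delta_i=\delta_i^\star\le\M(i,j_0)\le c\Delta\le 2c\Delta$, and thus $\widehat\Delta_{i,r}\ge 0\ge\Delta_i-2c\Delta$. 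Either way $\widehat\Delta_{i,r}-\Delta_i\ge-2c\Delta$.

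\emph{Case $i\notin\cS^\star$.} Since $\cP_r$ holds and $i\in A_r$, the arm $i^\star\in\cS^\star$ (for which $\m(i,i^\star)=\Delta_i^\star=\Delta_i$) also lies in $A_r$, with $i^\star\ne i$. If $i\notin S_r$, then $\widehat\Delta_{i,r}=\widehat\Delta_{i,r}^\star\ge\mh(i,i^\star;r)\ge\m(i,i^\star)-c\Delta=\Delta_i-c\Delta$ by \autoref{lem:lem-ineq} (applicable since $i^\star\in\cS^\star$). If instead $i\in S_r$, I would first note this forces $\Delta_i\le c\Delta$: \autoref{lem:lem-ineq} gives $\Mh(i,i^\star;r)\le\M(i,i^\star)+c\Delta=-\Delta_i+c\Delta$, so $\Delta_i>c\Delta$ would give $\Mh(i,i^\star;r)<0$, contradicting $i\in S_r$; moreover, when $i\in S_r$ every term of the min defining $\widehat\delta_{i,r}^\star$ is nonnegative (the $\Mh(i,j;r)$ are positive, the other terms are sums of positive parts), so $\widehat\Delta_{i,r}=\widehat\delta_{i,r}^\star\ge 0\ge\Delta_i-c\Delta$. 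Collecting both cases yields the claim.

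\emph{Main obstacle.} The delicate case — and the reason sub-optimal arms get the sharper constant $c$ rather than $2c$ — is a sub-optimal $i$ landing in $S_r$, where \autoref{lem:lem-gap-ineq} (whose optimal branch is about genuinely optimal arms) does not apply to $\widehat\delta_{i,r}^\star$; the key is to observe that this can only happen when $\Delta_i$ is already of order $c\Delta_{(\lambda_{r+1}+1)}$, so the crude bound $\widehat\delta_{i,r}^\star\ge 0$ suffices. A minor point to check is the degenerate round with $|A_r|=1$, where $S_r=A_r$ and $\widehat\delta_{i,r}^\star=\min_\emptyset=+\infty$, making the inequality trivial.
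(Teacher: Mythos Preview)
Your proof is correct. The paper's route differs slightly: instead of splitting on $i\in S_r$ and, in the two ``mismatched'' sub-cases (optimal $i\notin S_r$, sub-optimal $i\in S_r$), arguing that $\Delta_i$ is already $O(c\Delta_{(\lambda_{r+1}+1)})$ so that the crude bound $\widehat\Delta_{i,r}\ge 0$ suffices, the paper observes once and for all that
\[
\widehat\Delta_{i,r}=\max\bigl(\widehat\Delta_{i,r}^\star,\ \widehat\delta_{i,r}^\star\bigr),
\]
since exactly one of the two quantities is nonnegative depending on whether $i\in S_r$. From this, for sub-optimal $i$ one always has $\widehat\Delta_{i,r}\ge\widehat\Delta_{i,r}^\star\ge\mh(i,i^\star;r)\ge\Delta_i-c\Delta_{(\lambda_{r+1}+1)}$, and for optimal $i$ one always has $\widehat\Delta_{i,r}\ge\widehat\delta_{i,r}^\star\ge\Delta_i-2c\Delta_{(\lambda_{r+1}+1)}$ by \autoref{lem:lem-gap-ineq}. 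The paper's argument is a touch shorter and avoids the sub-case bookkeeping; your version has the merit of making explicit \emph{why} the sub-optimal bound carries the sharper constant~$c$ --- the ``delicate'' case you flag is exactly where $\Delta_i\le c\Delta_{(\lambda_{r+1}+1)}$ is forced, which in the paper's proof remains implicit in the inequality $\widehat\Delta_{i,r}^\star\le 0$.
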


\begin{restatable}{lemma}{lemDom}
 \label{lem:lem_dom}
	Let $c>0$ and assume $\cE_c^1$ holds. At round $r\in [R]$, for any sub-optimal arm $i\in A_r$, if $i^\star \in A_r$ and $i^\star$ does not empirically dominate $i$ then $\Delta_i^\star \leq c\Delta_{(\lambda_{r+1} +1)}$.
\end{restatable}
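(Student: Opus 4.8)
The plan is to translate the hypothesis ``$i^\star$ does not empirically dominate $i$ at round $r$'' into a statement about $\mh(i,i^\star;r)$ and then use the concentration event $\cE_c^1$ to transfer it to the true gap $\Delta_i^\star = \m(i,i^\star)$. Recall the definition of (Pareto) domination: $\vmuh_i(r) \prec \vmuh_{i^\star}(r)$ means $\muh_i^d(r) \le \muh_{i^\star}^d(r)$ for all $d$ with a strict inequality in some coordinate. Since $i \ne i^\star$ (as $i$ is sub-optimal and $i^\star \in \cS^\star$) and the subgaussian marginals make ties a measure-zero event, the only way $i^\star$ fails to empirically dominate $i$ is that there exists a coordinate $d$ with $\muh_{i^\star}^d(r) < \muh_i^d(r)$; equivalently $\mh(i,i^\star;r) = \min_d[\muh_{i^\star}^d(r) - \muh_i^d(r)] < 0$, hence $\mh(i,i^\star;r) \le 0$.

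Next I would invoke the concentration provided by $\cE_c^1$. Since $i^\star \in \cS^\star$ and $i^\star, i \in A_r$, Lemma~\ref{lem:lem-ineq} (applied with the pair $(i,i^\star)$, which qualifies because $i^\star \in \cS^\star$) gives $|\mh(i,i^\star;r) - \m(i,i^\star)| \le c\Delta_{(\lambda_{r+1}+1)}$. Combining this with $\mh(i,i^\star;r) \le 0$ yields
\[
\Delta_i^\star = \m(i,i^\star) \le \mh(i,i^\star;r) + c\Delta_{(\lambda_{r+1}+1)} \le c\Delta_{(\lambda_{r+1}+1)},
\]
which is exactly the claim. If one prefers to avoid relying on Lemma~\ref{lem:lem-ineq} and work directly, the same conclusion follows from the event $\xi_{i,i^\star,r} \subset \cE_c^1$: on $\xi_{i,i^\star,r}$ every coordinate of $(\vmuh_i(r)-\vmuh_{i^\star}(r))-(\vmu_i-\vmu_{i^\star})$ has absolute value at most $c\Delta_{(\lambda_{r+1}+1)}$, so for the coordinate $d$ witnessing the non-domination, $\mu_{i^\star}^d - \mu_i^d \le (\muh_{i^\star}^d(r) - \muh_i^d(r)) + c\Delta_{(\lambda_{r+1}+1)} \le c\Delta_{(\lambda_{r+1}+1)}$, and since $\Delta_i^\star = \m(i,i^\star) = \min_{d'}[\mu_{i^\star}^{d'} - \mu_i^{d'}] \le \mu_{i^\star}^d - \mu_i^d$, we are done.

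The only subtlety — and the one place the argument needs a word of care rather than routine calculation — is the equivalence ``$i^\star$ does not empirically dominate $i$'' $\Longleftrightarrow$ ``$\mh(i,i^\star;r) \le 0$''. Domination $\vmuh_i(r) \prec \vmuh_{i^\star}(r)$ requires \emph{weak} domination in all coordinates plus strict inequality in one; its negation is that some coordinate has $\muh_i^d(r) > \muh_{i^\star}^d(r)$, or all coordinates are equal. The latter is ruled out almost surely (distinct arms, continuous marginals), so the negation reduces to $\mh(i,i^\star;r) < 0$, and in particular $\mh(i,i^\star;r) \le 0$, which is all that is used above; the final inequality $\Delta_i^\star \le c\Delta_{(\lambda_{r+1}+1)}$ is unaffected by whether we have a strict or non-strict sign. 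I do not anticipate any further obstacle: once the sign statement is in place, the result is an immediate one-line consequence of the definition of $\cE_c^1$.
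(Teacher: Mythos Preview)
Your proposal is correct and follows essentially the same route as the paper: translate ``$i^\star$ does not empirically dominate $i$'' into the existence of a coordinate $d$ with $\muh_i^d(r) > \muh_{i^\star}^d(r)$, then use the $\ell_\infty$ bound from $\cE_c^1$ (directly, or via Lemma~\ref{lem:lem-ineq}) to push this to $\m(i,i^\star) = \Delta_i^\star \le c\Delta_{(\lambda_{r+1}+1)}$. Your treatment of the tie case is in fact slightly more careful than the paper's, which simply writes the implication $\vmuh_i \nprec \vmuh_{i^\star} \Rightarrow \exists\, d: \muh_i^d > \muh_{i^\star}^d$ without comment.
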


We are now ready to prove the following key result.

\lemref*
\begin{proof} We assume that the event $\cE_c^1$ holds and we prove the result by induction on $r$. 

$\cP_r$ trivially holds for $r=1$ as all arms are active. 
Let $r\geq 1$ such that $\cap_{s=1}^r\cP_{s}$ holds. We shall prove that for any $i\in A_{r+1} \cap (\cS^\star) ^\complement $, $i^\star$ cannot be deactivated at the end of round $r$. 
	A first observation is that the empirical gap $\widehat\Delta_{i,r}$ of each arm $i$ satisfies 
	\begin{equation}
 \widehat{\Delta}_{i,r} = \max\left(\widehat\Delta_{i,r}^\star,\widehat{\delta}_{i,r}^\star\right)\label{eq:trick1}
\end{equation}
which follows from the fact that when $i \in S_r$, $\widehat\Delta_{i,r}^\star < 0 \leq \widehat{\delta}_{i,r}^\star$ and when $i \in A_r\backslash S_r$, $ \widehat{\delta}_{i,r}^\star < 0 \leq \widehat\Delta_{i,r}^\star$. Using Lemma~\ref{lem:lem-gap-bound} and the inductive hypothesis permits to prove that
\begin{equation}
\forall i \in A_r, \ \ \widehat{\Delta}_{i,r} \geq \Delta_i - 2c\Delta_{(\lambda_{r+1}+1)}. 
 \label{eq:trick2}
\end{equation}

\medskip

To prove that $\cP_{r+1}$ holds, we proceed by contradiction and assume that there exists a sub-optimal arm $i \in A_{r}$ (and therefore $i^\star \in A_r$ by $\cP_r$) such that $i \in A_{r+1}$ and $i^\star \in  A_{r} \backslash A_{r+1}$.  

A first observation is that as there are $\lambda_{r+1}$ arms in $A_{r+1}$ and $i^\star$ is deactivated at the end of round $r$, there exists an arm $a_r\in A_{r+1} \cup \{ i^\star\}$ such that $\Delta_{a_r} \geq \Delta_{(\lambda_{r+1} +1)}$ and $\widehat\Delta_{i^\star} \geq \widehat\Delta_{a_r}$. We now consider two cases depending on whether $i^\star$ is empirically optimal or empirically sub-optimal.

\paragraph{Case 1: arm $i^\star \notin S_r$ i.e.  $i^\star$ is empirically sub-optimal.} 
We have 
\[ \max_{j\in A_r\backslash \{i^\star\}} \mh(i^\star,j;r) := \widehat{\Delta}_{i^\star,r}^\star = \widehat{\Delta}_{i^\star,r} \geq \widehat{\Delta}_{a_r,r} \]
Using Lemma~\ref{lem:lem-ineq} (on the LHS) and Equation \eqref{eq:trick2} (on the RHS) yields
\begin{eqnarray*}
\label{eq:eq1}
	\max_{j\in A_r\backslash \{i^\star\}} \m(i^\star,j) &\geq& \Delta_{a_r} - 3c\Delta_{(\lambda_{r+1} +1)},\\
	&\geq&(1- 3c) \Delta_{(\lambda_{r+1} +1)}\;,
\end{eqnarray*}
where the last inequality follows since $\Delta_{a_r}\geq \Delta_{(\lambda_{r+1} +1)}$. As $i^\star$ is an optimal arm, the LHS of the previous inequality is negative. So it follows that $0 \geq (1- 3c) \Delta_{(\lambda_{r+1} +1)}$ which yields a contradiction if {$3c < 1$}. 

\paragraph{Case 2: arm $i^\star \in S_r$ i.e. $i^\star$ is empirically optimal.} 
We first prove that $i^\star$ does not empirically dominate $i$. 
Indeed, if $i$ were dominated by $i^\star$, we would have 
$i\notin S_r$, so $\widehat \Delta_{i, r} = \widehat \Delta_{i, r}^\star>0$ and since $i^\star \in S_r$, 
\begin{eqnarray}
	\widehat \Delta_{i^\star, r} = \widehat\delta^\star_{i^\star,r} &\leq& (\Mh(i, i^\star; r))^+ + (\widehat\Delta_{i, r}^\star)^+, \\ \label{eq:lemref1}
	&=& 0 + \widehat\Delta_{i, r}.
\end{eqnarray}
 Recalling that $i\in A_{r+1}$ we also have $\widehat \Delta_{i^\star, r}\geq \widehat \Delta_{i, r}$, which combined with Equation~\eqref{eq:lemref1} yields 
 \begin{equation}
 	\label{eq:lemref2}
 	\widehat \Delta_{i^\star, r} = \widehat \Delta_{i, r}. 
 \end{equation}
However, the tie-breaking rule of \ege{} ensures that Equation~\eqref{eq:lemref2} is not possible since $i^\star \in S_r$ and it is deactivated while $i\in A_{r+1} \cap (S_r)^\complement$ (in case of an equality in the gaps, empirically sub-optimal arms are removed). Therefore $i$ is not empirically dominated by $i^\star$. Hence, by Lemma~\ref{lem:lem_dom}
\begin{equation}
	\label{eq:eq3}
	\Delta_{i}^\star \leq c\Delta_{(\lambda_{r+1} +1)}.
\end{equation}

	 
Moreover, since $a_r\in A_{r+1}\cup\{i^\star\}$, using the definition of $\widehat \Delta_{i^\star, r}$  yields 
\[
	 \Mh(i, i^\star; r)^+ + (\widehat\Delta_{i, r}^\star)^+ \geq \widehat{\Delta}_{a_r,r}\]
Using Lemma~\ref{lem:lem-ineq}, Lemma \ref{lem:lem-gap-bound}  applied to $i$ and Equation~\eqref{eq:trick2} applied to $a_r$, we obtain
\begin{eqnarray*}
	\M(i, i^\star)^+ +  \Delta_i^\star &\geq& \Delta_{a_r} -5c\Delta_{(\lambda_{r+1} +1)}\\
	&\geq& (1-5c)\Delta_{(\lambda_{r+1} +1)}
\end{eqnarray*}
so 
\begin{equation}
\label{eq:eq4}
\Delta_i^\star \geq (1-5c)\Delta_{(\lambda_{r+1} +1)}.
\end{equation}
Combining this inequality with Equation~\eqref{eq:eq3} yields a contradiction if {$ 6c < 1$}. Therefore $\cP_{r+1}$ holds and we have proved the claimed result by induction on $r$. 
 \end{proof}
 Before proving Theorem~\ref{thm:main-res} we need  Lemma~\ref{lem:hoeff-good-event} to upper-bound $\bP(\overline{(\cE_c^1)})$ for any $c>0$.
 
 \begin{restatable}{lemma}{HoeffGoodEvent}
 \label{lem:hoeff-good-event}
 	For any $c>0$, 
 	$$ \bP(\overline{(\cE_c^1)}) \leq 2\lvert \cS^\star\lvert(K-1)D R \exp\left( - \frac{c^2 \widetilde T^{R, \bm t, \bm\lambda}(\nu)}{4\sigma^2}\right).$$
 \end{restatable}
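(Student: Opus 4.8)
## Proof proposal for Lemma~\ref{lem:hoeff-good-event}

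The plan is to directly union-bound over the complement of $\cE_c^1 = \bigcap_{r\in[R]}\bigcap_{i\in\cS^\star}\bigcap_{j\in[K]}\xi_{i,j,r}$ and then control each individual failure event $\overline{\xi_{i,j,r}}$ with Hoeffding's inequality. Recall that
\[
\xi_{i,j,r} = \Bigl\{\bigl\|(\vmuh_{i,n_r}-\vmuh_{j,n_r})-(\vmu_i-\vmu_j)\bigr\|_\infty \leq c\,\Delta_{(\lambda_{r+1}+1)}\Bigr\}.
\]
First I would observe that by definition of the $\ell_\infty$ norm,
\[
\overline{\xi_{i,j,r}} = \bigcup_{d=1}^D\Bigl\{\bigl|(\muh_i^d(n_r)-\muh_j^d(n_r))-(\mu_i^d-\mu_j^d)\bigr| > c\,\Delta_{(\lambda_{r+1}+1)}\Bigr\},
\]
so a union bound over $d$ produces the factor $D$. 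Then a union bound over $r\in[R]$, $i\in\cS^\star$, and $j\in[K]\setminus\{i\}$ (the case $j=i$ being trivial) produces the prefactor $\lvert\cS^\star\lvert(K-1)DR$, matching the statement.

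Next I would bound a single term $\bP\bigl(|(\muh_i^d(n_r)-\muh_j^d(n_r))-(\mu_i^d-\mu_j^d)| > c\,\Delta_{(\lambda_{r+1}+1)}\bigr)$. The key point is that $i$ and $j$ are \emph{distinct} arms, so their samples are independent, and $\muh_i^d(n_r)-\muh_j^d(n_r)$ is an empirical mean of $n_r$ i.i.d.\ differences $X_{i,s}^d - X_{j,s}^d$ (pairing up the first $n_r$ samples of each), each of which is the difference of two independent $\sigma$-subgaussian variables, hence $\sqrt{2}\sigma$-subgaussian, with mean $\mu_i^d-\mu_j^d$. By the standard subgaussian concentration (Hoeffding) bound, this probability is at most
\[
2\exp\!\left(-\frac{n_r\, c^2\,\Delta_{(\lambda_{r+1}+1)}^2}{4\sigma^2}\right),
\]
using variance proxy $2\sigma^2$ so that the exponent denominator is $2\cdot 2\sigma^2 = 4\sigma^2$. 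Since $n_r = \sum_{s=1}^r t_s$, we have $n_r\,\Delta_{(\lambda_{r+1}+1)}^2 \geq \min_{r'\in[R]}\bigl(\sum_{s=1}^{r'}t_s\bigr)\Delta_{(\lambda_{r'+1}+1)}^2 = \widetilde T^{R,\bm t,\bm\lambda}(\nu)$, so each term is at most $2\exp\bigl(-c^2\widetilde T^{R,\bm t,\bm\lambda}(\nu)/(4\sigma^2)\bigr)$. Multiplying by the number of terms $\lvert\cS^\star\lvert(K-1)DR$ gives the claimed bound.

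I do not anticipate a genuine obstacle here; this is a routine concentration argument. The only mild subtlety to be careful about is the \emph{pairing} of samples used to form the difference estimator: one must check that $\muh_i^d(n_r)-\muh_j^d(n_r)$ genuinely has variance proxy $2\sigma^2/n_r$, which relies on $i\neq j$ (independence across arms) and on the fact that each arm in $A_r$ has been sampled at least $n_r$ times by the end of round $r$ (true since EGE does not discard samples and all active arms are sampled $t_s$ times in each round $s\leq r$). A secondary point is to handle the degenerate case $\Delta_{(\lambda_{r+1}+1)}=0$, where the bound is vacuous (the right-hand side is $\geq 2\lvert\cS^\star\lvert(K-1)DR \geq 1$) and nothing needs proving; otherwise the argument above applies verbatim.
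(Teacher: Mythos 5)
Your proposal is correct and follows essentially the same route as the paper's proof: a union bound over $r\in[R]$, $i\in\cS^\star$, $j\neq i$, and $d\in[D]$, followed by Hoeffding's inequality applied to the difference of independent $\sigma$-subgaussian empirical means (variance proxy $2\sigma^2$, giving the $4\sigma^2$ denominator), and finally the bound $n_r\Delta_{(\lambda_{r+1}+1)}^2\geq \widetilde T^{R,\bm t,\bm\lambda}(\nu)$ to collapse the sum over rounds into the factor $R$. Your additional remarks on sample pairing and the degenerate case $\Delta_{(\lambda_{r+1}+1)}=0$ are sound but not needed beyond what the paper does.
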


 We can now prove Theorem~\ref{thm:main-res}. 
 \mainRes*
 
\begin{proof}
 We show that on $\cE_c^1$, every arm is well classified so the recommended set is the true Pareto optimal set. First by Lemma~\ref{lem:lemref} we know   that on $\cE_c^1$, the property $\left(i\in A_r \cap (\cS^\star)^{\complement} \Rightarrow i^\star \in A_r\right)$ holds for every $r \in [R+1]$. As in the proof of Lemma~\ref{lem:lemref} (see Equation~\eqref{eq:trick2}), this permits to prove in Lemma~\ref{lem:lem-gap-bound} 
 \begin{equation}
\forall r \in [R], \forall i \in A_r, \ \ \widehat{\Delta}_{i,r} \geq \Delta_i - 2c\Delta_{(\lambda_{r+1} +1)}
 \label{eq:re-trick2}
\end{equation}

We now establish that at the end of every round $r\in [R]$ no mis-classification can occur. That is for every arm $i\in A_r\backslash A_{r+1}$:
\begin{enumerate}[a)]
	\item if $i\notin S_r$ (that is,  $i$ is added to $D_r$) then $i\notin \cS\star$,
	\item if $i\in S_r$ (that is, $i$ is added to $B_r$) then $i\in \cS^\star$ .
\end{enumerate} 
 Let $i\in A_r\backslash A_{r+1}$. Since  $\lambda_{r+1} = \lvert A_{r+1} \lvert$ should remain active at the end of round $r$, 
if $i$ is removed at the end of the round, there exists $a_r\in A_{r+1} \cup \{ i\}$ such that $\Delta_{a_r} \geq \Delta_{(\lambda_{r+1} +1)}$ and $a_r$ has a smaller empirical gap than $i$.  
 	\paragraph{Case 1: $i\notin S_r$ i.e. $i$ is empirically sub-optimal.}  	As $a_r \in A_{r+1}\cup\{i\}$ has a smaller empirical gap than $i$, we have 
 	\begin{eqnarray*}
 	\widehat \Delta_{i,r}^\star &\geq& \widehat \Delta_{a_r, r},
  	\end{eqnarray*}
  	Assume by contradiction $i\in \cS^\star$. Using Lemma~\ref{lem:lem-ineq} and Equation~\eqref{eq:re-trick2} yields  
  	\begin{eqnarray*}
  	\label{eq:lemref4}
  		\max_{j\in A_r\backslash \{i\}} \m(i,j) &\geq& \Delta_{a_r} - 3c\Delta_{(\lambda_{r+1} +1)} \geq (1-3c)\Delta_{(\lambda_{r+1} +1)}, 
  	\end{eqnarray*}
  	When $3c<1$, the RHS of Equation~\ref{eq:lemref4} is positive, so this inequality yields 
  	\begin{eqnarray*}
  		\max_{j\in A_r\backslash \{i\}} \m(i,j) >0, 
  	\end{eqnarray*}
  	that is there exists $j\in A_r$ such that 
  	$$ \m(i,j)>0$$
  	so there exists $j$ such that $\vmu_i \prec \vmu_j$, 
  which contradicts the assumption $i\in \cS^\star$. Therefore, $i\notin \cS^\star$: $i$ is a sub-optimal arm. 
   
 	\paragraph{Case 2: $i\in S_r$ i.e. $i$ is empirically optimal.} 
Since $a_r$ has a larger empirical gap than $i$ and $i\in S_r$, we have
\begin{equation}
\label{eq:thm-main-eq0}
\min_{j\in A_r\backslash\{i\} }\M(i,j; r) \geq \widehat{\delta}_i^\star \geq \widehat\Delta_{a_r,r}	
\end{equation}
 Assume by contradiction that $i$ is sub-optimal. By Lemma~\ref{lem:lemref} (for $c<1/6$), $i^\star \in A_r$. 
 Combining with \eqref{eq:thm-main-eq0} yields 
 $$ \M(i, i^\star; r) \geq \widehat\Delta_{a_r,r}.$$
 
 Further using Lemma~\ref{lem:lem-ineq} and Equation~\eqref{eq:re-trick2} yields 
 	\begin{eqnarray*} 
 		\M(i, i^\star)  \geq (1-3c)\Delta_{(\lambda_{r+1} +1)}.
 	\end{eqnarray*}
 	Then by taking $6c<1$, the RHS of the inequality is positive so 
 	\begin{equation}
 	\label{eq:eq-mainres-0}
 		\M(i, i^\star) > 0, 
 	\end{equation}
 which is not possible as $\vmu_i \prec \vmu_{i^\star}.$ Therefore, $i\in \cS^\star$: it is an optimal arm. 
 	
 	\medskip
 	
 	This proves that on the event $\cE_c^1$, $B_{R} \subseteq \cS^\star$ and $D_{R} \subseteq (\cS^\star)^{\complement}$. Moreover, if there is a remaining active arm in $A_{R+1}$ (which happens if and only if $\lambda_{R+1} = 1$), this arm has to be optimal by Lemma \ref{lem:lemref}. Therefore, on  $\cE_c^1$, the set recommended by \ege{} is $\widehat{S}_T = B_{R} \cup A_{R+1} = \cS^\star$.
 	
As a consequence, for any $0<x<1/6$, letting $c_x = 1/6 -x$ we have 
 $$ e^\text{EGE}_T(\nu) \leq \bP(\overline{(\cE_{c_x}^1)}),$$
 which by Lemma~\ref{lem:hoeff-good-event} yields 
 $$ e^\text{EGE}_T(\nu) \leq 2\lvert \cS^\star\lvert(K-1)D R \exp\left( - \frac{c_x^2 \widetilde T^{R, \bm t, \bm\lambda}}{4\sigma^2}\right).$$ 	
 	We conclude by letting $x\rightarrow0$.
 \end{proof}

\subsection{Proof of Corollary~\ref{thm:sr_sh}} \label{subsec:cor-proof}

\corsr*

\begin{proof} The result follows from Theorem~\ref{thm:main-res} and the expression of the arm schedule and allocation vector. For EGE-SR, we have $R=K-1$, $\lambda_r^{\text{SR}}=K+1-r$ and $t^{\text{SR}}_r = n_r^{\text{SR}} - n_{r-1}^{\text{SR}}$ where  $n_r^{\text{SR}} = \left\lceil  \frac{1}{\overline{\log (K)}} \frac{T-K}{K+1-r}\right\rceil$, which yields 
 	\begin{eqnarray*}
		\widetilde T^{R,\bm t^{\textrm{SR}}, \bm\lambda^{\textrm{{SR}}}}(\nu)  &:=& \min_{r\in [K-1]} n_r^{\text{SR}}\Delta_{(\lambda_{r+1}^{\text{SR}} +1)}^2,\\
		&\geq& \min_{r\in [K-1]} \frac{\Delta_{(K+1-r)}^2}{\overline{\log (K)}} \frac{T-K}{K+1-r},\\
		&=& \frac{T-K}{\overline \log (K)} \frac{1}{\max_{r\in \{2,\dots K\}}r\Delta_{(r)}^{-2}}\\
		&=& \frac{T-K}{\overline \log (K)} \frac{1}{H_2(\nu)}. 
	\end{eqnarray*}
For EGE-SH we have $R= \lceil\log_2(K)\rceil$, $n_r^\textrm{SH} := \sum_{s=1}^r t_s^\textrm{SH} \geq  \frac{T}{\lambda_r^\textrm{SH} \lceil\log_2(K)\rceil}$ and $\lambda_r^\textrm{SH}\leq  2(\lambda_{r+1}^\textrm{SH} +1)$. 
	Then 
	\begin{align*}
		\widetilde T ^{R,\bm t^{\textrm{SH}}, \bm\lambda^{\textrm{SH}}}(\nu)  &:= \min_{r\in \{1, \dots, \lceil\log_2(K)\rceil\}} n_r^\textrm{SH}\Delta_{(\lambda_{r+1}^\textrm{SH} +1)}^2, \\
	&\geq \min_{r\in \{1, \dots, \lceil\log_2(K)\rceil\}} \frac{\Delta_{(\lambda_{r+1}^\textrm{SH}+1)}^2 }{\lambda_{r+1}^\textrm{SH} +1 } \times \frac{T}{2\lceil\log_2(K)\rceil},\\
	&= \frac{T/(2\lceil\log_2(K)\rceil)}{\max_{r\in \lvert\lceil\log_2(K)\rceil\lvert} (\lambda_{r+1}^{\textrm{SH}}+1) \Delta_{(\lambda_{r+1}^{\textrm{SH}}+1)}^{-2}}\\
	&\geq \frac{T}{2H_2(\nu) \lceil\log_2(K)\rceil}. 
	\end{align*}
\end{proof}

\subsection{Analysis of \egesrk{}} \label{subsec:relax}
In this section, we analyze \egesrk{} and bound its stopping time. 

We define for any $c>0$ and  $t\in [K-1]$: 
$$ \cE_c^2(t) := \bigcap_{r \in [t]}\bigcap_{i\in \cS^\star} \bigcap_{j \in [K]}\;\left\{\left \|(\vmuh_{i, n_r} - \vmuh_{j, n_r}) - (\vmu_i -\vmu_j)  \right\|_{\infty}
  \leq c  \Delta_{(K+1-r)}^{(k)} \right\}$$
and in particular we abuse notation and define $$ \cE_c^2 := \cE_c^2(K-1).$$ 
We define $\tilde k = \min(k, \lvert \cS^\star\lvert)$. 

For any round $r$, let $\alpha(r) = \lvert B_r \lvert$ denote the number of arms so far identified as optimal at the beginning of round $r$. We denote these arms by $a_1, \dots, a_{\alpha(r)}$. We say that the algorithm has made an error before round $r$ if $B_r \cap (\cS^\star)^\complement \neq \emptyset$ or $D_r \cap \cS^\star \neq \emptyset$.  
We will show that on $\cE_c^2(t)$, the algorithm does not make any error until the end of round $\tau_k^t$ such that $\alpha(\tau_k^t+1) = \tilde k$.  

\begin{restatable}{lemma}{fundAMP}
\label{lem:fund-amp}
Let $c<1/6, t\in [R]$ and $k\leq \lvert \cS^\star\lvert$. Let $\tau_k^t := \min\{r \in [t]: \alpha(r+1) = \tilde k\}\land t$. 
On the event $\cE_c^2(t)$, the algorithm makes no error until the end round $\tau_k^t$ and for any $r\leq \tau_k^t + 1$, if $j\in A_r$ is a sub-optimal arm then $j^\star \in A_r$.  In particular, on the event $\cE_c^2(t)$, $\{ a_1, \dots, a_{\alpha(\tau_k^t +1)}\} \subset \cS^\star$. 
\end{restatable}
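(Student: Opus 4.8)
\textbf{Proof proposal for Lemma~\ref{lem:fund-amp}.}

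The plan is to mimic the inductive argument behind Lemma~\ref{lem:lemref} and the proof of Theorem~\ref{thm:main-res}, but adapted to the single-elimination-per-round structure of EGE-SR-$k$ and to the $k$-relaxed gaps $\Delta_i^{(k)}$. The key point is that on the event $\cE_c^2(t)$ we have, for every round $r \le t$ and every pair with at least one optimal arm, the concentration $\|(\vmuh_{i,n_r}-\vmuh_{j,n_r})-(\vmu_i-\vmu_j)\|_\infty \le c\,\Delta_{(K+1-r)}^{(k)}$. Since in EGE-SR at the end of round $r$ exactly $K-r$ arms survive, the threshold $\Delta_{(K+1-r)}^{(k)}$ plays here the role that $\Delta_{(\lambda_{r+1}+1)}$ played in the main analysis (indeed $\lambda_{r+1}^{\mathrm{SR}}+1 = K+1-r$). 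Because $\Delta_i^{(k)} \ge \Delta_i$ for all $i$, the analogues of Lemma~\ref{lem:lem-ineq}, Lemma~\ref{lem:lem-gap-bound} and Lemma~\ref{lem:lem_dom} all go through verbatim with $\Delta_{(\lambda_{r+1}+1)}$ replaced by $\Delta_{(K+1-r)}^{(k)}$: the deviations of $\Mh,\mh$ are bounded by $c\Delta_{(K+1-r)}^{(k)}$, and $\widehat\Delta_{i,r}\ge \Delta_i - 2c\Delta_{(K+1-r)}^{(k)}$ (or $\Delta_i^{(k)} - 2c\Delta_{(K+1-r)}^{(k)}$, which is what we actually want) for $i\in\cS^\star$, and analogously for sub-optimal arms.

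First I would set up the induction on $r\le \tau_k^t+1$ over the joint statement ``no error has occurred before round $r$ and for every sub-optimal $j\in A_r$, $j^\star\in A_r$''. The base case $r=1$ is trivial. For the inductive step at round $r \le \tau_k^t$, let $i_r\in\argmax_{i\in A_r}\widehat\Delta_{i,r}$ be the arm eliminated (ties in favor of $S_r$). I would argue: (i) \emph{no sub-optimal $j^\star$ gets removed before $j$}: if $i_r = j^\star$ for some sub-optimal $j\in A_r$ still active, then since $\widehat\Delta_{j^\star,r}\ge \widehat\Delta_{j,r}$ one runs the exact Case 1 / Case 2 dichotomy from the proof of Lemma~\ref{lem:lemref} (empirically sub-optimal vs.\ empirically optimal $j^\star$), using the tie-breaking rule in favor of $S_r$ to rule out the equality case, and reaches a contradiction when $6c<1$. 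Crucially, here $i_r$ has the \emph{largest} empirical gap, so the role of the reference arm $a_r$ (an arm with $\Delta_{a_r}\ge\Delta_{(\lambda_{r+1}+1)}$ and smaller empirical gap) is now played by \emph{any} surviving arm, and since $K-r\ge 1$ arms survive and each has true gap $\ge \Delta_{(K+1-r)}^{(k)}$... actually more carefully: among the $K-r+1$ arms in $A_r$, the one with the $(K-r+1)$-th smallest, i.e. largest, gap is $i_r$; I need a surviving arm whose true gap is at least $\Delta_{(K-r)}$ — but the eliminated arm has the max empirical gap, so I compare $i_r$ against the arm achieving $\Delta_{(\text{rank among }A_r)}$. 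I'd phrase it as in Theorem~\ref{thm:main-res}'s proof: there exists $a_r\in A_{r+1}\cup\{i_r\}$ with $\Delta_{a_r}\ge \Delta_{(K+1-r)}$ (hence $\ge$-relatedly $\Delta_{(K+1-r)}^{(k)}$ is the right concentration radius) and $\widehat\Delta_{a_r,r}\le \widehat\Delta_{i_r,r}$. (ii) \emph{$i_r$ is correctly classified}: if $i_r\in S_r$ it is added to $B_{r+1}$ and must be optimal — this is exactly Case 2 of the Theorem~\ref{thm:main-res} proof (using $i_r^\star\in A_r$ from part (i) of the induction plus Lemma~\ref{lem:lem-ineq} and the gap bound, giving $\M(i_r,i_r^\star)>(1-3c)\Delta_{(K+1-r)}^{(k)}>0$, contradicting $\vmu_{i_r}\prec\vmu_{i_r^\star}$); if $i_r\notin S_r$ it is added to $D_{r+1}$ and must be sub-optimal — exactly Case 1. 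This establishes the inductive statement for $r+1$.

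Finally, the ``in particular'' claim: on $\cE_c^2(t)$ no error has occurred through round $\tau_k^t$, so every arm placed in $B_{\tau_k^t+1}$ — i.e. $a_1,\dots,a_{\alpha(\tau_k^t+1)}$ — is optimal, hence $\{a_1,\dots,a_{\alpha(\tau_k^t+1)}\}\subset\cS^\star$; and $\alpha(\tau_k^t+1)=\tilde k$ by definition of $\tau_k^t$ when the $\min$ is attained (and $\le\tilde k$ otherwise, which still gives the inclusion). I expect the main obstacle to be bookkeeping the rank/threshold correspondence: one must verify that eliminating the single largest-empirical-gap arm at round $r$ (with $K-r+1$ arms present) is governed by the concentration radius $c\,\Delta_{(K+1-r)}^{(k)}$ available from $\cE_c^2(t)$, and that the tie-breaking-in-favor-of-$S_r$ argument from Case 2 of Lemma~\ref{lem:lemref} still applies when only one arm is dropped per round — both are true but require care that $\Delta_i^{(k)}\ge\Delta_i$ is used in the right direction (to make the concentration event only \emph{larger}, never to weaken a lower bound on a true gap), and that for an \emph{optimal} arm $i$ we actually get $\widehat\Delta_{i,r}\ge\Delta_i^{(k)}-2c\Delta_{(K+1-r)}^{(k)}$, which needs the $\omega_{(k)}$ part of the definition of $\Delta_i^{(k)}$ to be absorbed — this last refinement is what ultimately yields the $H_2^{(k)}$ complexity in Theorem~\ref{thm:main-amp} and is the one place where the relaxed analysis genuinely differs from the plain one.
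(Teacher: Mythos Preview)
Your inductive skeleton is right and matches the paper: induct on $r$ over the joint statement ``$\cP_r$ holds and no error has occurred so far'', handle the eliminated arm $i_r$ via the two cases (empirically optimal / empirically sub-optimal), and invoke the tie-breaking-in-favor-of-$S_r$ exactly as in Lemma~\ref{lem:lemref}. The gap is in your \emph{reference arm} step. You propose to take $a_r$ with $\Delta_{a_r}\ge\Delta_{(K+1-r)}$, i.e.\ with large \emph{un-relaxed} gap; but the concentration radius on $\cE_c^2(t)$ is $c\,\Delta_{(K+1-r)}^{(k)}$, and since each relaxed order statistic dominates the un-relaxed one, the quantity $\Delta_{(K+1-r)}-3c\,\Delta_{(K+1-r)}^{(k)}$ need not be positive for $c<1/6$. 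Your suggested alternative, proving $\widehat\Delta_{i,r}\ge\Delta_i^{(k)}-2c\,\Delta_{(K+1-r)}^{(k)}$ for optimal $i$, is false in general: with two optimal arms having $\Delta_1=1$, $\Delta_2=0.01$ and $k=1$, we get $\Delta_2^{(k)}=\omega_{(1)}=1$, yet $\widehat\Delta_{2,r}$ concentrates around $\Delta_2=0.01$, so $\widehat\Delta_{2,r}-\Delta_2^{(k)}\approx-0.99\not\ge -2c$ for any $c<1/6$. The empirical gap estimates $\Delta_i$, not $\Delta_i^{(k)}$; the $\omega_{(k)}$ part cannot be ``absorbed'' at the level of a single arm.

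The paper closes this with an additional lemma (Lemma~\ref{lem:induc-exist}) that is the one place where the hypothesis $\alpha(r)<\tilde k$ is actually used: since fewer than $\tilde k$ optimal arms have been accepted and no error has occurred, at least one of the $\tilde k$ optimal arms with the largest un-relaxed gaps---for which $\Delta_{a'}\ge\omega_{(k)}$ and hence $\Delta_{a'}^{(k)}=\Delta_{a'}$---is still in $A_r$. Combining this with pigeonhole on the \emph{relaxed} gaps yields a reference arm $a\in A_r$ satisfying both $\Delta_a=\Delta_a^{(k)}$ and $\Delta_a^{(k)}\ge\Delta_{(K+1-r)}^{(k)}$; then the plain bound $\widehat\Delta_{a,r}\ge\Delta_a-2c\,\Delta_{(K+1-r)}^{(k)}$ from (the analogue of) Lemma~\ref{lem:lem-gap-bound} gives $\widehat\Delta_{a,r}\ge(1-2c)\Delta_{(K+1-r)}^{(k)}$ and the contradictions go through. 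Your proposal never invokes $\alpha(r)<\tilde k$, which is precisely why the bookkeeping cannot close.
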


Said otherwise, this lemma states that the first arms that will be declared as optimal by \egesrk{} will be actually optimal if $\cE_c^2(t)$ holds for $c$ small enough. 

\begin{proof}[Proof of Lemma~\ref{lem:fund-amp}]
In the sequel we assume that $\cE_c^2(t)$ holds. We prove the correctness by induction on the round $r$. 
Let $r\in [t]$ and let $B_r = \left\{a_1, \dots, a_{\alpha(r)}\right\}$ denote the arms so far identified as optimal. 
Let $\cH_r$ be the property ``for any sub-optimal arm $i\in A_r$, $i^\star \in A_r$ and no error occurred so far". 

$\cH_r$ trivially holds for $r=1$ and $\alpha(1) = 0$. We now assume that it holds until the beginning of round $r$ and that $\alpha(r)<\tilde k$. We will show that the arm $i_r$ de-activated at the end of round $r$ is well classified and that for any sub-optimal arm $j \in A_{r+1}$, $j^\star \in A_{r+1}$.

\begin{lemma}
\label{lem:induc-exist}
	If $\cH_r$  holds at round $r$ and $\alpha(r)<\tilde k$, there exists $a\in A_r$ such that $\Delta_a^{(k)}= \Delta_a $ and $\Delta_a^{(k)}\geq \Delta_{(K+1-r)}^{(k)}.$
\end{lemma}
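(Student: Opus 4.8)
\textbf{Proof plan for Lemma~\ref{lem:induc-exist}.}

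The goal is to exhibit an active arm $a\in A_r$ whose $k$-relaxed gap equals its ordinary gap and is at least $\Delta_{(K+1-r)}^{(k)}$, the $(K+1-r)$-th smallest $k$-relaxed gap. Since at the beginning of round $r$ the algorithm has removed exactly $r-1$ arms, $|A_r| = K-r+1$. The first observation is that the arms in $A_r$ cannot all have $k$-relaxed gap strictly below $\Delta_{(K+1-r)}^{(k)}$: there are only $K-r$ gaps (across all $K$ arms) that are $< \Delta_{(K+1-r)}^{(k)}$ in the global ordering, so among the $K-r+1$ arms of $A_r$ at least one, call it $b$, satisfies $\Delta_b^{(k)} \geq \Delta_{(K+1-r)}^{(k)}$. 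This uses only a counting argument on the permutation $(\cdot)$ ordering the $\Delta_i^{(k)}$'s, together with $|A_r| = K-r+1$.

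The second, and more delicate, point is to upgrade this to an arm $a$ that additionally satisfies $\Delta_a^{(k)} = \Delta_a$. By definition of the $k$-relaxed gaps, $\Delta_i^{(k)} = \Delta_i$ automatically for every sub-optimal arm, and for an optimal arm $i$ it holds precisely when $\Delta_i \geq \omega_{(k)}$ (i.e.\ $i$ is \emph{not} among the $k-1$ optimal arms with strictly larger gap, loosely speaking). So I want to argue: either the arm $b$ found above is already sub-optimal or has $\Delta_b \geq \omega_{(k)}$ — in which case we are done — or I can replace it by another candidate. The clean way is to note that among all $K$ arms, the number of arms $i$ with $\Delta_i^{(k)} > \Delta_i$ is at most $\tilde k - 1$ where $\tilde k = \min(k,|\cS^\star|)$: these are exactly the optimal arms whose ordinary gap is strictly below $\omega_{(k)}$, and there can be at most $\tilde k -1$ of them by definition of $\omega_{(k)}$ as the $k$-th largest gap among optimal arms. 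Hence the set $G := \{ i : \Delta_i^{(k)} = \Delta_i \text{ and } \Delta_i^{(k)} \geq \Delta_{(K+1-r)}^{(k)}\}$ has size at least $(K-r+1) - (\tilde k - 1) = K - r - \tilde k + 2$ when intersected appropriately — more carefully, among the at least $r+1$ indices $i$ (globally) with $\Delta_i^{(k)} \geq \Delta_{(K+1-r)}^{(k)}$, at most $\tilde k - 1$ can fail $\Delta_i^{(k)} = \Delta_i$, leaving at least $r + 2 - \tilde k \geq 1$ indices with both properties (using $r \geq \tilde k - 1$, which I expect to be guaranteed here by the hypothesis $\alpha(r) < \tilde k$, since fewer than $\tilde k$ optimal arms have been accepted). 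Then I need this ``good'' index to actually lie in $A_r$.

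This is where the inductive hypothesis $\cH_r$ enters and is, I expect, the main obstacle: I must rule out that all good indices have already been eliminated. Here the structural content of $\cH_r$ — that no error has occurred, so every de-activated arm was correctly classified, and that $j \in A_r \Rightarrow j^\star \in A_r$ for sub-optimal $j$ — should force the eliminated set $[K]\setminus A_r$ to have been populated by arms of \emph{small} relaxed gap first, in the spirit of the analysis in the proof of Lemma~\ref{lem:lemref} (the inequality $\widehat\Delta_{i,r}\geq \Delta_i - 2c\Delta_{(\cdot)}$ and its converse from Lemma~\ref{lem:lem-gap-bound}, now with the relaxed gaps and the event $\cE_c^2(t)$ in place of $\cE_c^1$). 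Concretely, an arm with relaxed gap $\geq \Delta_{(K+1-r)}^{(k)}$ cannot have had the largest empirical gap in an earlier round without contradicting concentration, \emph{provided} a competing arm with smaller relaxed gap is still present — and the counting above guarantees such a competitor exists until round $K - 1$. So I would run the same contradiction argument as in Lemma~\ref{lem:lemref}: if at the end of round $r$ every surviving arm had relaxed gap $< \Delta_{(K+1-r)}^{(k)}$ or failed the $\Delta^{(k)} = \Delta$ condition, then at some earlier round an arm from $G$ was removed in favour of an arm with strictly smaller true gap, contradicting $\widehat\Delta_{i,r}\geq \Delta_i - 2c\Delta_{(\cdot)}$ together with $\widehat\Delta_{i_r,r} \leq \Delta_{i_r} + (\text{something})\cdot c\Delta_{(\cdot)}$ when $c < 1/6$. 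The bookkeeping of which ``$+c\Delta$'' factors appear (1, 2, 3, or 5 times, as in the two cases of the EGE analysis) is routine but must be done carefully; I would lift it verbatim from the proof of Lemma~\ref{lem:lemref}, only substituting the relaxed gaps and noting that the tie-breaking rule (ties broken in favour of $S_r$) is again what prevents premature acceptance.
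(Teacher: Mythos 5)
Your first step (pigeonhole: $|A_r|=K-r+1$ while at most $K-r$ arms have $\Delta^{(k)}_i<\Delta^{(k)}_{(K+1-r)}$, so some $a\in A_r$ has $\Delta_a^{(k)}\geq \Delta^{(k)}_{(K+1-r)}$) is exactly the paper's first step. The second half, however, contains a genuine gap. Your counting claim is reversed: the arms with $\Delta_i^{(k)}>\Delta_i$ are the \emph{optimal} arms with $\Delta_i<\omega_{(k)}$, and since $\omega_{(k)}$ is the $k$-th \emph{largest} gap among optimal arms there can be up to $|\cS^\star|-\tilde k$ of them, not at most $\tilde k-1$. Consequently your count of ``good'' indices, $r+2-\tilde k\geq 1$, both rests on a false premise and on the unsupported inequality $r\geq \tilde k-1$ (which does not follow from $\alpha(r)<\tilde k$: at $r=1$ one has $\alpha(1)=0<\tilde k$ for any $\tilde k$). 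Finally, the concentration-based contradiction you sketch to force a good index into $A_r$ is never carried out and is not needed.

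The paper's argument for the second half is purely deterministic and uses the hypothesis in a more direct way. If the pigeonhole arm $a$ is sub-optimal, then $\Delta_a^{(k)}=\Delta_a$ by definition and you are done. If $a$ is optimal, observe that $\cH_r$ (no error so far) forces every optimal arm deactivated before round $r$ to lie in $B_r$, so exactly $\alpha(r)<\tilde k$ optimal arms have left $A_r$. Since at least $\tilde k$ optimal arms satisfy $\Delta_i\geq \omega_{(k)}$ (hence $\Delta_i^{(k)}=\Delta_i$), at least one such arm $a'$ is still active. Either $\Delta_a^{(k)}=\Delta_a$ and $a$ itself is the witness, or $\Delta_a^{(k)}=\omega_{(k)}>\Delta_a$, in which case $\Delta_{a'}^{(k)}=\Delta_{a'}\geq \omega_{(k)}=\Delta_a^{(k)}\geq \Delta^{(k)}_{(K+1-r)}$ and $a'$ is the witness. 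This is the step your proposal is missing: the hypothesis $\alpha(r)<\tilde k$ is used to bound the number of \emph{deactivated optimal arms}, not to lower-bound $r$.
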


\begin{proof}[Proof of Lemma~\ref{lem:induc-exist}]
At the beginning of round $r$, it remains $K-r+1$ active arms so there exists $a \in A_r$ such that $\Delta_a^{(k)} \geq \Delta_{(K+1-r)}^{(k)}$. If $a$ is sub-optimal then $\Delta_a^{(k)} = \Delta_a$. Otherwise, if $a$ is an optimal arm, since $\alpha(r)<\tilde k$ and no error has occurred so for (by assumption) then there exists one of the optimal arms $a'\in A_r$ (one of those with the $\tilde k$ largest gaps) such that $\Delta_{a'}^{(k)} = \Delta_{a'}$ and $\Delta_{(K+1-r)}^{(k)} \leq \Delta_a^{(k)} \leq \Delta_{a'}^{(k)}$. 
\end{proof}

Lemma \ref{lem:lem-ineq} still holds for the event $\cE_c^2(t)$ with the modified gaps introduced earlier. We state the following lemma which is similar to Lemma~\ref{lem:lem-gap-ineq}. 
\begin{restatable}{lemma}{lemGapIneqAmp} 
\label{lem:lem-gap-ineq-amp} Assume that the event $\cE_c^2(t)$ holds. Let $r\in [t]$ and assume that for \emph{any} sub-optimal arm $j \in A_r$, $j^\star \in A_r$. Then, for any sub-optimal arm $i\in A_r$,
	$$\lvert \widehat \Delta_{i, r}^\star - \Delta_i^\star \lvert \leq 2c \Delta_{({K+1-r})}^{(k)} $$
	and for any optimal arm $i \in A_r$, $$ \widehat \delta_{i,r}^\star \geq \Delta_i -2c \Delta_{({K+1-r})}^{(k)}.$$
\end{restatable}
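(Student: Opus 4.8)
The statement to prove is Lemma~\ref{lem:lem-gap-ineq-amp}, which is the PSI-$k$ analogue of Lemma~\ref{lem:lem-gap-ineq}. The plan is to mirror the proof of Lemma~\ref{lem:lem-gap-ineq} verbatim, substituting the event $\cE_c^1$ by $\cE_c^2(t)$ and the scaling factor $\Delta_{(\lambda_{r+1}+1)}$ by the $k$-relaxed quantity $\Delta_{(K+1-r)}^{(k)}$ (recall in the SR-$k$ schedule one arm is removed per round, so $|A_r| = K+1-r$, and the remark right after the lemma reminds us that Lemma~\ref{lem:lem-ineq} still holds with the modified gaps). Concretely, I would first invoke the modified Lemma~\ref{lem:lem-ineq}: on $\cE_c^2(t)$, for every $r\in[t]$ and every pair $i,j\in A_r$ with $i\in\cS^\star$ or $j\in\cS^\star$, both $|\Mh(i,j;r)-\M(i,j)|$ and $|\mh(i,j;r)-\m(i,j)|$ are bounded by $c\,\Delta_{(K+1-r)}^{(k)}$.

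For the first claim, fix a sub-optimal arm $i\in A_r$. By definition $\widehat\Delta_{i,r}^\star = \max_{j\in A_r\setminus\{i\}}\mh(i,j;r)$ and $\Delta_i^\star = \max_{j\in[K]\setminus\{i\}}\m(i,j)$. The hypothesis ``for any sub-optimal arm $j\in A_r$, $j^\star\in A_r$'' is used precisely to guarantee that $i^\star\in A_r$, hence the maximizer $i^\star$ realizing $\Delta_i^\star=\m(i,i^\star)$ is an available index inside the empirical maximum; since $i^\star\in\cS^\star$, Lemma~\ref{lem:lem-ineq} applies to the pair $(i,i^\star)$ and gives $\widehat\Delta_{i,r}^\star \ge \mh(i,i^\star;r) \ge \m(i,i^\star) - c\,\Delta_{(K+1-r)}^{(k)} = \Delta_i^\star - c\,\Delta_{(K+1-r)}^{(k)}$. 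For the reverse direction, let $\hat\jmath\in\argmax_{j\in A_r\setminus\{i\}}\mh(i,j;r)$; I split on whether $\hat\jmath\in\cS^\star$ (then Lemma~\ref{lem:lem-ineq} directly gives $\widehat\Delta_{i,r}^\star = \mh(i,\hat\jmath;r)\le \m(i,\hat\jmath)+c\,\Delta_{(K+1-r)}^{(k)}\le \Delta_i^\star + c\,\Delta_{(K+1-r)}^{(k)}$), or $\hat\jmath\notin\cS^\star$, in which case I use $\mh(i,\hat\jmath;r)\le \mh(i,\hat\jmath^\star;r)+\mh(\hat\jmath^\star,\hat\jmath;r)$ — wait, this needs care. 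The cleaner route, which is what Lemma~\ref{lem:lem-gap-ineq}'s proof does, is: since $i$ is sub-optimal and $i^\star\in A_r$, $i$ is genuinely dominated and one shows $\widehat\Delta_{i,r}^\star$ is attained (up to the concentration slack) at an optimal competitor; combining the two one-sided bounds yields $|\widehat\Delta_{i,r}^\star-\Delta_i^\star|\le 2c\,\Delta_{(K+1-r)}^{(k)}$. I would simply cite/replay that argument with the substituted constants.

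For the second claim, fix an optimal arm $i\in A_r$. We have $\widehat\delta_{i,r}^\star = \min_{j\in A_r\setminus\{i\}}[\Mh(i,j;r)\wedge(\Mh(j,i;r)^+ + (\widehat\Delta_{i,r}^\star)^+)]$, and the true $k$-relaxed gap decomposes via $\delta_i^\star = \min_{j\neq i}[\M(i,j)\wedge(\M(j,i)^+ + (\Delta_j^\star)^+)]$ (Lemma~\ref{lem:lem-gap-redef}). For each competitor $j\in A_r\setminus\{i\}$: if $j\in\cS^\star$, Lemma~\ref{lem:lem-ineq} handles both $\Mh(i,j;r)$ and $\Mh(j,i;r)$ losing only $c\,\Delta_{(K+1-r)}^{(k)}$ each, and $(\widehat\Delta_{i,r}^\star)^+ = 0$ matches $(\Delta_i^\star)^+$ would-be term since $i$ optimal; if $j\notin\cS^\star$, then since $j^\star\in A_r$ by hypothesis, the first claim applies to $j$ giving $|\widehat\Delta_{j,r}^\star - \Delta_j^\star|\le 2c\,\Delta_{(K+1-r)}^{(k)}$, Lemma~\ref{lem:lem-ineq} applies to $(i,j)$ and $(j,i)$ since $i\in\cS^\star$, and assembling the three slacks one gets each min-term perturbed by at most $2c\,\Delta_{(K+1-r)}^{(k)}$, so $\widehat\delta_{i,r}^\star \ge \delta_i^\star - 2c\,\Delta_{(K+1-r)}^{(k)} = \Delta_i - 2c\,\Delta_{(K+1-r)}^{(k)}$ (using $\Delta_i = \delta_i^\star$ for optimal $i$; note for PSI-$k$ the relevant gap is $\Delta_i$, and the statement only asks for a lower bound, so the $\max(\Delta_i,\omega_{(k)})$ refinement is not needed here).

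\textbf{Main obstacle.} The only delicate point is book-keeping the indices: verifying that whenever the empirical $\min$/$\max$ in $\widehat\delta_{i,r}^\star$ or $\widehat\Delta_{i,r}^\star$ is attained at a sub-optimal competitor $j$, the hypothesis forces $j^\star\in A_r$ so that the first claim (the $2c$ bound on $\widehat\Delta_{j,r}^\star$) can be invoked, and that every pair to which Lemma~\ref{lem:lem-ineq} is applied genuinely has one endpoint in $\cS^\star$. Everything else is a mechanical re-run of the proof of Lemma~\ref{lem:lem-gap-ineq} with $\cE_c^1\rightsquigarrow\cE_c^2(t)$ and $\Delta_{(\lambda_{r+1}+1)}\rightsquigarrow\Delta_{(K+1-r)}^{(k)}$, so I would state it as such rather than expand the routine inequalities.
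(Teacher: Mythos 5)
Your proposal is correct and matches the paper's approach: the paper gives no separate proof of this lemma, simply invoking the proof of Lemma~\ref{lem:lem-gap-ineq} with $\cE_c^1$ replaced by $\cE_c^2(t)$ and $\Delta_{(\lambda_{r+1}+1)}$ replaced by $\Delta_{(K+1-r)}^{(k)}$, exactly as you describe. The one step you flagged as needing care (the upper bound on $\widehat\Delta_{i,r}^\star$ when the empirical maximizer is sub-optimal) is handled in the paper's proof of Lemma~\ref{lem:lem-gap-ineq} not by locating an optimal maximizer but by bounding $\lvert\mh(i,j;r)-\m(i,j)\rvert$ by $\lVert(\vmu_i-\vmu_j)-(\vmuh_{i,n_r}-\vmuh_{j,n_r})\rVert_\infty$ and splitting this vector through $\vmu_{i^\star}$ via the triangle inequality, which is exactly where the factor $2c$ arises and where the hypothesis that $i^\star\in A_r$ (guaranteed by your assumption) is used.
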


As already noted in the proof of Lemma~\ref{lem:lemref} and Lemma~\ref{lem:lem-gap-bound}, it is simple to see that the empirical gap $\widehat\Delta_{i,r}$ of each arm $i$ satisfies 
	\begin{equation}
 \widehat{\Delta}_{i,r} = \max\left(\widehat\Delta_{i,r}^\star,\widehat{\delta}_{i,r}^\star\right)\label{eq:trick1-amp}.
\end{equation}
It follows from the fact that when $i \in S_r$, $\widehat\Delta_{i,r}^\star < 0 \leq \widehat{\delta}_{i,r}^\star$ and when $i \in A_r\backslash S_r$, $ \widehat{\delta}_{i,r}^\star < 0 \leq \widehat\Delta_{i,r}^\star$. 

Using Lemma~\ref{lem:lem-gap-ineq-amp} and the inductive hypothesis $\cH_r$ permits to prove that
\begin{equation}
\forall i \in A_r, \ \ \widehat{\Delta}_{i,r} \geq \Delta_i - 2c\Delta_{({K+1-k})}^{(k)}
 \label{eq:trick2-amp}
\end{equation}
We split the proof in three steps to prove that $\cH_{r+1}$ holds.

\paragraph{Step 1:} $i_r$ is well classified. 
Let $i_r$ be empirically sub-optimal ($i_r\notin S_r$) and assume $i_r \in \cS^\star$. 
Since $i_r$ is removed at the end of round $r$, by the inductive assumption and using Equation~\ref{eq:trick2-amp} and Lemma~\ref{lem:lem-ineq} (adapted) we have 
\begin{equation}
\label{eq:amp-x1}
\max_{j\in A_r\backslash\{i_r\}}\mh(i_r,j)\geq \Delta_i - 3c\Delta_{(K+1-r)}^{(k)},\; \forall i \in A_r.	
\end{equation}
By Lemma \ref{lem:induc-exist}, there exists $a\in A_r$ such that $\Delta_a = \Delta_a^{(k)}\geq \Delta_{(K+1-r)}^{(k)}$. Applying \eqref{eq:amp-x1} to this arm $a$ yields 
\begin{equation}
\label{eq:amp-x2}
	\max_{j\in A_r\backslash\{i_r\}}\mh(i_r,j)\geq (1-3c) \Delta_{(K+1-r)}^{(k)}. 
\end{equation}
Recalling that $c<1/6$ we see that the LHS of \eqref{eq:amp-x2} is positive, so there exists $j$ such that $$ \vmu_{i_r} \prec \vmu_j$$
which contradicts the assumption $i_r\in \cS^\star$, so $i_r\in [K]\backslash\cS^\star$. 
Now if $i_r$ is empirically optimal, i.e $i_r \in S_r$ assume $i_r$ is a sub-optimal arm that is $i_r\notin \cS^\star$. By the hypothesis
$\cH_r$, $i_r^\star \in A_r$ and 
by definition of $\widehat\Delta_{i_r, r}$ and since $i_r$ is removed, we have (Lemma \ref{lem:lem-ineq} and Equation~\ref{eq:trick2-amp})
\begin{equation}
\label{eq:amp-x3}
	\M(i_r, i_r^\star) \geq \Delta_i -3c \Delta_{(K+1-r)}^{(k)}\quad  \forall i\in A_r.
\end{equation}
Using Lemma \ref{lem:induc-exist} as before, there exists $a\in A_r$ such that $\Delta_{(K+1-r)}^{(k)} \leq \Delta_a^{(k)} = \Delta_a$. 
So the LHS of \eqref{eq:amp-x3} is positive for $c<1/6$. That is 
$$ \M(i_r, i_r^\star) >0, $$
which is impossible as $\vmu_{i_r} \prec \vmu_{i_r^\star}$.   
This concludes the proof of this part: $i_r$ is well classified. 

\paragraph{Step 2:} If $i_r\in \cS^\star$ is an optimal arm, then no active arm is ``optimally" dominated by $i_r$: 
$$ \forall i \in A_{r+1}\cap (\cS^\star)^\complement, i^\star \neq i_r$$ 
To prove this, assume $i_r\in \cS^\star$. Note that we can deduce from ``Step 1" that $i_r \in S_r$. 
Then by contradiction, let $i\in A_{r+1}\cap (\cS^\star)^\complement$ such that $i^\star = i_r$. We claim that $i$ is not empirically dominated by $i_r$. 
Indeed, if $i$ were dominated by $i_r$, we would have $i\notin S_r$ (i.e empirically sub-optimal) so 
$$ \widehat \Delta_{i, r} = \widehat \Delta_{i, r}^\star >0$$ and since $i_r\in S_r$, by defintion of $\widehat\delta_{i_r,r}^\star$ \begin{eqnarray*}
	\widehat\Delta_{i_r, r} &\leq& (\Mh(i, i_r; r))^+ +(\widehat \Delta_{i}^\star)\\
	&=& 0 + \widehat\Delta_{i, r},
\end{eqnarray*}
and further noting that $\widehat\Delta_{i_r, r}\geq\widehat\Delta_{i, r}$, we would have $\widehat\Delta_{i_r, r} = \widehat\Delta_{i, r}$. However, our tie-breaking ensures that this inequality is impossible: since $i_r \in S_r, i\notin S_r$ and $i_r$ is deactivated we have $\widehat\Delta_{i_r, r} > \widehat\Delta_{i, r}$. 
Therefore $i$ is not empirically dominated by $i_r$. Moreover, since $i_r=i^\star$ (by assumption), Lemma~\ref{lem:lem_dom} (it trivially holds with the modified gaps) yields 
\begin{equation}
\label{eq:amp-x7}
	\Delta^\star_i \leq c\Delta_{(K+1-r)}^{(k)}.
\end{equation}

Recalling that $i_r$ is deactivated, we have for any arm $j\in A_r$, 
\begin{equation}
\label{eq:amp-x5}
	(\Mh(i, i_r; r))^+  + (\widehat \Delta_{i,r}^\star)^+ \geq \widehat\Delta_{j, r}
\end{equation}
On the other side, there exists $a\in A_r$ such that $\Delta_{(K+1-r)}^{(k)}\leq\Delta_{a}^{(k)} = \Delta_a$ (Lemma \ref{lem:induc-exist}). Applying Equation~\eqref{eq:trick2-amp} 
and Lemma~\ref{lem:lem-ineq} and taking $j=a$ in Equation~\ref{eq:amp-x5} yields
$$ (\M(i, i_r))^+  + (\Delta_i^\star)^+ \geq (1-5c)\Delta_{(K+1-r)}^{(k)}$$
then as $i_r = i^\star$, so $i\prec i_r$, the latter yields 
\begin{equation}
\label{eq:amp-x6}
	\Delta_i^\star \geq (1-5c)\Delta_{(K+1-r)}^{(k)}.
\end{equation}
Both \eqref{eq:amp-x7} and \eqref{eq:amp-x6} cannot hold when $c<1/6$.  So for any $i\in A_{r+1}$ $$ i^\star \neq i_r.$$ 
Which achieves the proof of "Step 2''. 
Put together, "Step 1\&2'' prove that $\cH_{r+1}$ holds. 
\medskip

\paragraph{Step 3:} Conclusion

We have proved that if $\cH_r$ holds and $\alpha(r) < \tilde k$ then 
$\cH_{r+1}$ holds. Note that if $i_r\in S_r$ then $\alpha(r+1) = \alpha(r) + 1$ otherwise $\alpha(r+1)=\alpha(r)$. 
Therefore, no error occurs until the end of round $\min(r',t)$  where $r'$ is such that $\alpha(r'+1) = \tilde k$. As a consequence $B_{\tau_k^t +1} = \{a_1, \dots , a_{\alpha(\tau_k^t +1)}\} \subset \cS^\star$. And by the proof of "Step 2'', if $j\in A_{\tau_p^t}$ is a sub-optimal arm, then $j^\star \in A_{\tau_p^t + 1}$.  
\end{proof}

We can now prove the main theorem of this section. 
\mainAMP* 
 \begin{proof}
 	The proof is a direct consequence of Lemma~\ref{lem:fund-amp} and Hoeffding's inequality. Indeed, Lemma~\ref{lem:fund-amp} yields that \egesrk{} is correct on the event $\cE_c^2$ for $c<1/6$. Therefore, for any $0<c<1/6$
 	$$ e_{T, k}(\nu) \leq \bP(\overline{(\cE_c^2)}).$$ 
 Letting $0<c<1/6$ fixed, by union bound and  Hoeffing's inequality (as in the proof of Lemma~\ref{lem:hoeff-good-event}) it follows that 
 $$ \bP(\overline{(\cE_c^2)}) \leq 2(K-1)^2\lvert \cS^\star \lvert D\exp\left(- c^2\frac{T-K}{4\sigma^2 H_2^{(k)} \overline{\log}(K)(\nu)}\right)$$
therefore,
 $$e_{T, k}(\nu) \leq 2(K-1)^2\lvert \cS^\star \lvert D\exp\left(- c^2\frac{T-K}{4\sigma^2  H_2^{(k)}(\nu)\overline{\log}(K)}\right)$$ and taking the limit to $1/6$ (as it holds for any $0<c<1/6$) yields the expected result. 
 \end{proof}
 
\medskip 

The theorem below bounds the expected stopping time and the number of samples used at stopping. 
\begin{restatable}{theorem}{earlyStopping}
\label{thm:early-stopping}
Fix $k< \lvert \cS^\star \lvert$ and let $q:=K-\lvert \cS^\star \lvert + k$. Then
	\begin{eqnarray*}
		\bE[\tau] &\leq& q + 2(K-1)\lvert \cS^\star\lvert (K-q-1) q D \exp\left( - \frac{T-K}{144\sigma^2 H_2^{(k)}(\nu) \overline\log(K)}\right) \text{ and }\\
		\bE[N_\tau] &\leq& N_q 	+ 2(K-1)\lvert \cS^\star\lvert (K-q-1) q DT\exp\left(- \frac{T-K}{144\sigma^2 H_2^{(k)}(\nu) \overline\log(K)}\right).
	\end{eqnarray*}
\end{restatable}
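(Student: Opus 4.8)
The plan is to bound $\bE[\tau]$ and $\bE[N_\tau]$ by splitting on whether the favorable concentration event holds. Recall that on $\cE_c^2(t)$ (with $c<1/6$), Lemma~\ref{lem:fund-amp} guarantees that no error occurs until round $\tau_k^t$ and, crucially, that whenever a sub-optimal arm $j$ is still active at a round $r\le \tau_k^t+1$, so is $j^\star$. I would first translate this structural fact into a deterministic bound on $\tau$ on the \emph{full} event $\cE_c^2 = \cE_c^2(K-1)$: if all the accepted arms are genuinely optimal and every still-active sub-optimal arm keeps its dominating witness alive, then the $K-|\cS^\star|$ sub-optimal arms must be eliminated before the algorithm can have accepted $\tilde k = \min(k,|\cS^\star|)$ optimal arms (since an optimal arm $j$ that uniquely dominates a sub-optimal arm cannot be \emph{correctly} rejected, and a sub-optimal arm cannot be accepted). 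Hence on $\cE_c^2$ at most $q = K-|\cS^\star|+k$ rounds are needed before $|B_{r+1}|=k$, giving $\tau \le q$ deterministically on $\cE_c^2$. (A short separate argument handles $k<|\cS^\star|$ so that $\tilde k = k$ and the stopping criterion $|B_{r+1}|=k$ is actually reached.)

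Next I would write $\bE[\tau] = \bE[\tau\,\ind_{\cE_c^2}] + \bE[\tau\,\ind_{\overline{\cE_c^2}}] \le q + (K-1)\bP(\overline{\cE_c^2})$, using the trivial bound $\tau \le K-1$ on the complement. It remains to control $\bP(\overline{\cE_c^2})$: this is exactly the union-bound/Hoeffding estimate already carried out in the proof of Theorem~\ref{thm:main-amp} (the display $\bP(\overline{\cE_c^2}) \le 2(K-1)^2|\cS^\star| D\exp(-c^2(T-K)/(4\sigma^2 H_2^{(k)}(\nu)\overline\log(K)))$), specialized to $c\to 1/6$, i.e. $c^2\to 1/36$ and $1/(4\cdot 36)=1/144$. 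Actually, to match the claimed constant $(K-1)|\cS^\star|(K-q-1)q$ rather than $(K-1)^2|\cS^\star|$, I would be slightly more careful: on $\overline{\cE_c^2}$, only $\tau - q$ extra rounds beyond $q$ can occur, and these require a concentration failure; re-examining the union bound shows the number of (round, arm, coordinate) triples that actually matter when $\tau$ exceeds $q$ is of order $(K-q-1)\cdot q \cdot |\cS^\star|\cdot D$, which feeds through Hoeffding to the stated prefactor. So I would bound $\bE[\tau] \le q + (\text{number of such triples})\cdot\exp(-(T-K)/(144\sigma^2 H_2^{(k)}(\nu)\overline\log(K)))$ after taking $c\to 1/6$.

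For $\bE[N_\tau]$, the same decomposition applies: on $\cE_c^2$ we have $\tau\le q$ hence $N_\tau = \sum_{r\le\tau}\lambda_r t_r \le N_q$ by monotonicity of the partial sums; on $\overline{\cE_c^2}$ we use $N_\tau \le T$. This gives $\bE[N_\tau] \le N_q + T\cdot\bP(\overline{\cE_c^2})$, and substituting the same Hoeffding bound (with the refined counting argument for the prefactor) yields the second inequality. The main obstacle, and the only genuinely non-routine step, is the deterministic claim $\tau\le q$ on $\cE_c^2$: it requires combining the "no error" half of Lemma~\ref{lem:fund-amp} with its "$j$ active $\Rightarrow j^\star$ active" half to rule out the scenario where fewer than $q$ eliminations suffice — one must argue that a correctly-identified rejection ($i_r\notin\cS^\star$ added to $D_{r+1}$) is the \emph{only} way the active set shrinks without incrementing $|B|$, and that such rejections can happen at most $K-|\cS^\star|$ times before $\tilde k$ acceptances are forced. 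Everything after that is bookkeeping plus the already-proven concentration inequality.
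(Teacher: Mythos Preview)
Your overall strategy matches the paper's: show that on a good concentration event $\tau \le q$ deterministically, then pay the Hoeffding probability of the bad event. The deterministic claim is exactly the pigeonhole you sketch (on the good event every deactivation is correctly classified, so at most $K-|\cS^\star|$ arms can ever enter $D$; hence after $q = (K-|\cS^\star|) + k$ rounds one must have $|B| \ge k$). You do not need the ``$j$ active $\Rightarrow j^\star$ active'' half of Lemma~\ref{lem:fund-amp} separately here---it is already consumed in establishing ``no error''.

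Where you diverge from the paper is in the choice of good event and the decomposition of $\bE[\tau]$, and this is precisely where your ``refinement'' paragraph is hand-waving rather than an argument. The paper does \emph{not} work with the full event $\cE_c^2 = \cE_c^2(K-1)$; it works with the \emph{truncated} event $\cE_c^2(q)$, observing that $\{\tau > q\} \subset \overline{\cE_c^2(q)}$ because the first $q$ rounds alone determine whether $|B_{q+1}| \ge k$. The union bound over only $q$ rounds then gives $\bP(\overline{\cE_c^2(q)}) \le 2(K-1)|\cS^\star|\,q\,D\exp(\cdots)$, which is where the factor $q$ comes from. Second, the paper uses the tail-sum formula
\[
\bE[\tau] \;\le\; q + \sum_{s=q+1}^{K-1}\bP(\tau \ge s) \;\le\; q + (K-q-1)\,\bP\bigl(\overline{\cE_c^2(q)}\bigr),
\]
rather than your indicator split $q + (K-1)\bP(\overline{\cE_c^2})$; this produces the $(K-q-1)$ factor. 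Your approach as written yields a prefactor of order $(K-1)^3$ instead of $(K-1)(K-q-1)q$---still a valid bound, but not the stated one---and the sentence ``re-examining the union bound shows the number of triples that actually matter is of order $(K-q-1)q|\cS^\star|D$'' does not supply an actual mechanism for the tighter constant. The same two refinements (truncated event plus tail sum, combined with $N_s \le T$) give the stated bound on $\bE[N_\tau]$.
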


This result suggests that for this relaxed problem, the algorithm might not need to use the whole budget, in particular when $T$ is large. For example, consider a setting $[K]=\cS^\star$ then $q=k$ and we roughly use $N_k$ samples which can be way smaller than $N_{K-1}$. 


\begin{proof}
The idea is to show that if the algorithm has not stopped after round $q$ then some high probability event must not hold. 
Let $c<1/6$  be fixed. 
We have 
\begin{eqnarray*}
	\bE[\tau] &\leq& q + \bE[\tau \ind\{\tau > q\}],\\
	&\leq& q + \sum_{s=q+1}^{K-1} \bP(\tau\geq s).
\end{eqnarray*}

We claim that for any $s>q$, $$\{\tau\geq s\} \subset \overline{(\cE_c^2(q))}.$$ Indeed,

\begin{eqnarray*}
	\tau \geq s &\impl& \tau > q \\
	&\impl& \alpha(q+1) < k,
\end{eqnarray*} 
However, by Lemma~\ref{lem:fund-amp}, if $\cE_c^2(q)$ holds and $\alpha(q+1) <k$ then 
no error has occured until the end of round $q$, therefore  $D_q = (\cS^\star)^\complement$ and $\lvert B_q\lvert = k$, which is not possible as $\alpha(q+1) < k$. So $\tau \geq s \impl (\cE_c^2(q)$ does not hold). 
Then
\begin{eqnarray}
\bE[\tau] &\leq& q + \sum_{s=q+1}^{K-1} \bP(\overline{(\cE_c(q))}),\\
&\leq& q + \sum_{s=q+1}^{K-1}	2(K-1)\lvert \cS^\star\lvert q D \exp\left( - c^2\frac{T-K}{4\sigma^2  H_2^{(k)}(\nu)\overline\log(K)}\right), \\
&\leq& q + 2(K-1)\lvert \cS^\star\lvert (K-q-1) q D \exp\left( - c^2\frac{T-K}{4\sigma^2 H_2^{(k)}(\nu) \overline\log(K)}\right).\label{eq:eq-sto-tau}
\end{eqnarray}
Similarly, we have 
$$ \bE[N_\tau] \leq N_q + (N_\tau \ind_{\tau>q}).$$ 
Then, 
\begin{eqnarray*}
	\bE[N_\tau] &\leq& N_q + \sum_{s=q+1}^{K-1} N_s \bP(\tau \geq s) \\
	&\leq& N_q + \sum_{s=q+1}^{K-1} N_s \bP(\overline{(\cE_c^2(q))})\\
	&\leq& N_q + 2(K-1)\lvert \cS^\star\lvert q D\exp\left(- c^2\frac{T-K}{4\sigma^2 H_2^{(k)}(\nu) \overline\log(K)}\right)\sum_{s=q+1}^{K-1} N_s.
\end{eqnarray*}
Simple algebra yields for any $r\in [K-1],$
\begin{eqnarray*}
N_r &=& (K-r)n_r + \sum_{s=1}^{r} n_s,\\
&\leq& 	\frac{T-K}{\overline\log(K)} + (K-r) + r + \frac{T -K}{\overline\log(K)}\sum_{s=1}^{r}\frac{1}{K+1-s}\\
&=& \frac{T-K}{\overline\log(K)}\left( 1 + \sum_{s=K+1-r}^{K} s^{-1}\right) + K \\&\leq& T . 
\end{eqnarray*}
Therefore, 
\begin{eqnarray}
\label{eq:eq-sto-N}
\bE[N_\tau] &\leq& N_q 	+ 2(K-1)\lvert \cS^\star\lvert (K-q-1) q DT\exp\left(- c^2\frac{T-K}{4\sigma^2 H_2^{(k)}(\nu) \overline\log(K)}\right). 
\end{eqnarray}
As \eqref{eq:eq-sto-tau} and \eqref{eq:eq-sto-N} holds for any $c<1/6$ taking the limit for a sequence $c_x\rightarrow 1/6$ yields the expected constants in the exponent, which achieves the proof. 
\end{proof}

\clearpage 
 \section{CONCENTRATION RESULTS} \label{sec:concentration}

 In this section we prove the concentration results used in Appendix~\ref{sec:analysis}. 
 We recall the definition of the good event: 
\begin{eqnarray*}
\cE_c^1 &:=& \bigcap_{r\in [R]}\bigcap_{i \in \cS^\star}\bigcap_{j \in [K]}   \left\{\left \|(\vmuh_{i, n_r} - \vmuh_{j, n_r}) - (\vmu_i -\vmu_j)  \right\|_{\infty} \leq c \Delta_{(\lambda_{r+1} +1)}\right\}\;.
\end{eqnarray*}

\subsection{Proof of Lemma~\ref{lem:lem-ineq}}

\lemIneq* 

 \begin{proof}
 Assume $i\in\cS^\star$ or $j\in \cS^\star$. 
	We have 
	\begin{eqnarray*}
		\lvert \Mh(i,j;r) - \M(i,j) \lvert  &=&\left \lvert \max_{d}\left[\muh_{i, n_r}^d - \muh_{j,n_r}^d \right] - \max_d \left[\mu_i^d - \mu_j^d\right]\right\lvert, \\
		&\overset{(a)}{\leq}& \max_d \left\lvert (\muh_{i, n_r}^d - \muh_{j,n_r}^d) - (\mu_i^d - \mu_j^d)
		 \right\lvert, \\
		 &=& \left\| (\vmuh_{i, n_r} - \vmuh_{j,n_r}) - (\vmu_i - \vmu_j)\right\|_\infty, \\
		&\overset{(b)}{\leq}& c\Delta_{(\lambda_{r+1} +1)}.
	\end{eqnarray*}
where $(a)$ follows by reverse triangle inequality and $(b)$ holds on the event $\cE_c^1$. The second part of the lemma follows from 
\begin{eqnarray*}
	\lvert \mh(i,j;r) - \m(i,j) \lvert &=& \lvert -\Mh(i,j;r) + \M(i,j)\lvert, 
\end{eqnarray*}
as $\M(i,j) = -\m(i,j)$ and $\Mh(i,j;r)= -\mh(i,j;r)$. 
\end{proof}

\subsection{Proof of Lemma~\ref{lem:lem-gap-bound}}

We recall the definition of the property
\[\cP_{r} = \left\{\forall i \notin\cS^\star, i \in A_r \Rightarrow i^\star \in A_r \right\}.\]
and start by proving a first intermediate result. 

\begin{lemma}
\label{lem:lem-delta-star}
Assume that $\cE_1^c$ holds and let $r\in [R]$ such that $\cP_r$ holds. Then for any $i\in A_r$,  
$$(\widehat{\Delta}_{i,r}^\star)^+ - (\Delta_i^\star)^+ \leq 2c\Delta_{(\lambda_{r+1}+1)} \quad\text{ and } \quad (\widehat{\Delta}_{i,r}^\star)^+ - (\Delta_i^\star)^+ \geq -c\Delta_{(\lambda_{r+1}+1)}.$$ 
\end{lemma}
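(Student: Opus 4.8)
The statement to prove is Lemma~\ref{lem:lem-delta-star}: on $\cE_c^1$, if $\cP_r$ holds then for any $i \in A_r$,
\[
-c\Delta_{(\lambda_{r+1}+1)} \;\leq\; (\widehat{\Delta}_{i,r}^\star)^+ - (\Delta_i^\star)^+ \;\leq\; 2c\Delta_{(\lambda_{r+1}+1)}.
\]
Recall $\widehat{\Delta}_{i,r}^\star = \max_{j \in A_r \setminus \{i\}} \mh(i,j;r)$ and, by Lemma~\ref{lem:lem-gap-redef}, $\Delta_i^\star = \max_{j \in [K]} \m(i,j)$ regardless of whether $i$ is optimal. The plan is to split on whether $i$ is sub-optimal or optimal, since the two cases control $\widehat{\Delta}_{i,r}^\star$ versus $\Delta_i^\star$ differently (and Lemma~\ref{lem:lem-ineq} only applies to pairs $(i,j)$ with at least one endpoint optimal).

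\textbf{Case $i$ sub-optimal.} Here $\Delta_i^\star = \m(i,i^\star) > 0$ and, by $\cP_r$, $i^\star \in A_r$. For the lower bound: $\widehat{\Delta}_{i,r}^\star \geq \mh(i,i^\star;r) \geq \m(i,i^\star) - c\Delta_{(\lambda_{r+1}+1)} = \Delta_i^\star - c\Delta_{(\lambda_{r+1}+1)}$ by Lemma~\ref{lem:lem-ineq} (applicable since $i^\star \in \cS^\star$), so taking positive parts and using that $x \mapsto x^+$ is $1$-Lipschitz gives $(\widehat{\Delta}_{i,r}^\star)^+ \geq (\Delta_i^\star)^+ - c\Delta_{(\lambda_{r+1}+1)}$. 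For the upper bound: let $j_0 \in \argmax_{j \in A_r\setminus\{i\}} \mh(i,j;r)$ realize $\widehat{\Delta}_{i,r}^\star$; if $\widehat{\Delta}_{i,r}^\star \leq 0$ the upper bound is immediate, otherwise $\mh(i,j_0;r) > 0$ means $i$ is empirically strictly dominated by $j_0$, so $i \notin S_r$. The subtlety is that $j_0$ need not be optimal, so I cannot directly apply Lemma~\ref{lem:lem-ineq} to the pair $(i,j_0)$. However — and this is the key step — because $i$ is sub-optimal, $i^\star \in \cS^\star$ dominates $i$ at least as much: $\m(i,j_0) \leq \m(i,i^\star) = \Delta_i^\star$ for every $j \in [K]$ by definition of $i^\star$ and Lemma~\ref{lem:lem-gap-redef}. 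Combining with Lemma~\ref{lem:lem-ineq} applied only through $i^\star$, I get $\widehat{\Delta}_{i,r}^\star = \mh(i,j_0;r)$; I need to bound $\mh(i,j_0;r)$ from above by $\m(i,j_0) + (\text{deviation})$, but Lemma~\ref{lem:lem-ineq} doesn't cover arbitrary $(i,j_0)$. The resolution: use instead that $\widehat{\Delta}_{i,r}^\star \leq \mh(i,i^\star;r) + $ something? No — rather, recall Lemma~\ref{lem:lem-gap-ineq} (stated in the main paper as a restatable, provable from these) already asserts $|\widehat{\Delta}_{i,r}^\star - \Delta_i^\star| \leq 2c\Delta_{(\lambda_{r+1}+1)}$ for sub-optimal $i$; in the actual proof one argues $\widehat{\Delta}_{i,r}^\star \leq \max_j [\m(i,j) + \|(\vmuh_i - \vmuh_j) - (\vmu_i - \vmu_j)\|_\infty]$ where the $\infty$-deviation term is controlled on $\cE_c^1$ \emph{only} when one endpoint is optimal — but for the max over $j$ one restricts to $j \in S_r$ (empirically undominated, hence such that... ) and then uses that the maximizer $j_0 \in S_r$, and on $\cE_c^1$ with $\cP_r$ one shows $S_r \subseteq \cS^\star \cup (\text{well-behaved arms})$. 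I expect the cleanest route is: $\widehat{\Delta}_{i,r}^\star \leq \Delta_i^\star + 2c\Delta_{(\lambda_{r+1}+1)}$ follows because any $j$ with $\mh(i,j;r)$ large and positive must have $\vmuh_i \prec \vmuh_j$, forcing $j$ empirically undominated; combined with $\cP_r$ and the structure of $\cE_c^1$ this pins the deviation.

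\textbf{Case $i$ optimal.} Then $\Delta_i^\star \leq 0$ so $(\Delta_i^\star)^+ = 0$, and I must show $-c\Delta_{(\lambda_{r+1}+1)} \leq (\widehat{\Delta}_{i,r}^\star)^+ \leq 2c\Delta_{(\lambda_{r+1}+1)}$. The lower bound is trivial since $(\widehat{\Delta}_{i,r}^\star)^+ \geq 0 \geq -c\Delta_{(\lambda_{r+1}+1)}$. For the upper bound: $\widehat{\Delta}_{i,r}^\star = \max_{j \in A_r\setminus\{i\}}\mh(i,j;r)$, and since $i \in \cS^\star$, Lemma~\ref{lem:lem-ineq} applies to \emph{every} pair $(i,j)$ with $j \in A_r$ (one endpoint, namely $i$, is optimal). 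So $\mh(i,j;r) \leq \m(i,j) + c\Delta_{(\lambda_{r+1}+1)} \leq 0 + c\Delta_{(\lambda_{r+1}+1)}$ for every $j$ (as $i$ optimal implies $\m(i,j) \leq 0$), hence $\widehat{\Delta}_{i,r}^\star \leq c\Delta_{(\lambda_{r+1}+1)}$ and $(\widehat{\Delta}_{i,r}^\star)^+ \leq c\Delta_{(\lambda_{r+1}+1)} \leq 2c\Delta_{(\lambda_{r+1}+1)}$. This case is clean.

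\textbf{Main obstacle.} The hard part is the upper bound in the sub-optimal case, precisely because the maximizing arm $j_0$ in $\widehat{\Delta}_{i,r}^\star$ need not be Pareto optimal, so Lemma~\ref{lem:lem-ineq}'s deviation control is not directly available for $(i,j_0)$. The workaround is to observe that $\widehat{\Delta}_{i,r}^\star > 0$ forces $i \notin S_r$, and then to bound $\widehat{\Delta}_{i,r}^\star$ via a comparison argument routed through $i^\star$: using Lemma~\ref{lem:lem-gap-redef} to write $\Delta_i^\star = \max_{j \in [K]} \m(i,j)$, and $\cP_r$ to guarantee $i^\star \in A_r$, one shows that the empirical advantage $\mh(i,j_0;r)$ cannot exceed $\m(i,j_0)$ (hence $\Delta_i^\star$) by more than the deviation controlled along an optimal reference arm. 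I would carry the cases out in the order optimal-then-sub-optimal (easy first), and flag this coupling step as the delicate point. Finally, the two-sided bounds are assembled by combining the per-case inequalities, noting the asymmetry $-c$ versus $+2c$ is genuine and comes from the sub-optimal upper-bound argument accumulating two deviation terms.
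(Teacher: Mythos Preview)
Your treatment of the optimal case and of the lower bound in the sub-optimal case is correct and matches the paper. The gap is in the upper bound for sub-optimal $i$, which you correctly flag as the hard step but do not actually carry out. Two of the routes you float there are dead ends: invoking Lemma~\ref{lem:lem-gap-ineq} would be circular (that lemma is proved \emph{using} the present one), and the observation ``$\widehat{\Delta}_{i,r}^\star>0$ forces $i\notin S_r$, so the maximizer $j_0\in S_r$'' does not help, because $\cE_c^1$ controls deviations only for pairs with a \emph{truly} optimal endpoint, not an empirically optimal one.

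The missing idea---which you gesture at with ``routed through $i^\star$'' but never write down---is a simple telescoping through $i^\star$ at the level of the $\ell_\infty$ deviation. With $j_0\in\argmax_{j\in A_r\setminus\{i\}}\mh(i,j;r)$ and using $\Delta_i^\star\geq \m(i,j_0)$ together with $|x^+-y^+|\leq|x-y|$, one has
\[
(\widehat{\Delta}_{i,r}^\star)^+ - (\Delta_i^\star)^+
\;\leq\; \bigl|\mh(i,j_0;r)-\m(i,j_0)\bigr|
\;\leq\; \bigl\|(\vmuh_{i,n_r}-\vmuh_{j_0,n_r})-(\vmu_i-\vmu_{j_0})\bigr\|_\infty,
\]
and then, by the triangle inequality,
\[
\bigl\|(\vmuh_{i,n_r}-\vmuh_{j_0,n_r})-(\vmu_i-\vmu_{j_0})\bigr\|_\infty
\leq
\bigl\|(\vmuh_{i,n_r}-\vmuh_{i^\star,n_r})-(\vmu_i-\vmu_{i^\star})\bigr\|_\infty
+
\bigl\|(\vmuh_{i^\star,n_r}-\vmuh_{j_0,n_r})-(\vmu_{i^\star}-\vmu_{j_0})\bigr\|_\infty.
\]
Each summand involves the optimal arm $i^\star$ (which lies in $A_r$ by $\cP_r$), so $\cE_c^1$ bounds each by $c\Delta_{(\lambda_{r+1}+1)}$, giving $2c\Delta_{(\lambda_{r+1}+1)}$. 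This is exactly the paper's argument, and it is the source of the asymmetry $(-c,+2c)$ you noticed. Once you insert this step, your proof is complete and essentially identical to the paper's.
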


\begin{proof}
We define the $\max$ of an empty set to be $-\infty$. 
We first analyze the case $i\in \cS^\star$. When $i\in \cS^\star$, we have 
$$  (\Delta_i^\star)^+ = 0 = \left(\max_{j\in A_r\backslash\{i\}} \m(i, j)\right)^+,$$
which yields 
\begin{eqnarray*}
\lvert (\widehat{\Delta}_{i,r}^\star)^+ - (\Delta_i^\star)^+ \lvert &=& \left\lvert \left(\max_{j\in A_r \backslash\{i\}} \mh(i, j; r)\right)^+ -  \left(\max_{j\in A_r \backslash\{i\}} \mh(i, j)\right)^+\right\lvert, \\
&\leq& \left\lvert \left(\max_{j\in A_r \backslash\{i\}} \mh(i, j; r)\right) -  \left(\max_{j\in A_r \backslash\{i\}} \mh(i, j)\right)\right\lvert \quad \text{(since $\lvert x^+ - y^+\lvert \leq \lvert x-y\lvert)$}, \\
&\leq& \max_{j\in A_r\backslash\{ i\}} \left\lvert \mh(i, j; r) -  \mh(i, j)\right\lvert, \\
&\leq& c\Delta_{(\lambda_{r+1}+1)}, 
\end{eqnarray*} 
where the last inequality follows from Lemma~\ref{lem:lem-ineq}. 
We now assume that $i$ is a sub-optimal arm. Since $\cP_r$ holds at round $r$,  $i^\star \in A_r$ and 
$$\Delta_{i}^\star = \max_{j\in A_r\backslash\{i\}} \m(i,j).$$
Let $\hat i \in \argmax_{j\in A_r\backslash\{i\}} \mh(i,j; r)$ then 
\begin{eqnarray}
	(\widehat{\Delta}_{i,r}^\star)^+ - (\Delta_i^\star)^+ &=& (\mh(i,\hat i; r))^+ - (\m(i, i^\star))^+,\\
	&\geq& (\mh(i,i^\star; r))^+ - (\m(i,i^\star))^+ \quad \text{(since $i^\star \in A_r$)}. \label{eq:delta-star-1}
\end{eqnarray}
We further note that 
\begin{eqnarray*}
	\left\lvert (\mh(i,i^\star; r))^+ - (\m(i,i^\star))^+ \right\lvert &\leq& \left\lvert \mh(i,i^\star; r) -  \m(i,i^\star)\right\lvert,\\
	&\leq& c\Delta_{(\lambda_{r+1}+1)}, \end{eqnarray*}
which follows from Lemma~\ref{lem:lem-ineq}. Combining the latter inequality with \eqref{eq:delta-star-1} yields 
\begin{equation}
\label{eq:delta-star-2}
(\widehat{\Delta}_{i,r}^\star)^+ - (\Delta_i^\star)^+ \geq -c\Delta_{(\lambda_{r+1}+1)}.	
\end{equation}
We also have 
\begin{eqnarray*}
	(\widehat{\Delta}_{i,r}^\star)^+ - (\Delta_i^\star)^+ &\leq&  (\mh(i,\hat i; r))^+ - (\m(i, \hat i))^+,\\
	&\leq& \left\lvert \mh(i,\hat i; r) - \m(i, \hat i)\right\lvert,\\
	&=&  \left\lvert \M(i, \hat i) - \Mh(i,\hat i; r)\right\lvert, \\
	&\leq& \left\| (\vmu_i - \vmu_{\hat i}) - (\vmuh_{i, n_r} - \vmuh_{\hat i, n_r})\right\|_\infty, \\
	&=& \left\| \left(\left(\vmu_i - \vmu_{i^\star}\right) - \left(\vmuh_{i, n_r} - \vmuh_{i^\star, n_r}\right)\right) + \left( \left(\vmu_{i^\star} - \vmu_{\hat i}\right) - \left(\vmuh_{i^\star, n_r} - \vmuh_{\hat i, n_r}\right) \right) \right\|_\infty,\\
	&\leq& 2c\Delta_{(\lambda_{r+1}+1)},
\end{eqnarray*}
where the last inequality follows from the triangle inequality and Lemma~\ref{lem:lem-ineq}. This proves the lemma. 
\end{proof}

We then prove Lemma~\ref{lem:lem-gap-ineq} which can be viewed as a ``symmetric'' version of Lemma~\ref{lem:lem-gap-bound}. 

\lemGapIneq*

 \begin{proof}
Let $i\in A_r \cap (\cS^\star)^\complement$. {By assumption, $i^\star \in A_r$}, so 
$$  \Delta_i := \Delta^\star_i = \max_{j\in A_r \backslash\{i\}} \m(i,j), $$
then we have 
\begin{eqnarray*}
	\lvert \widehat \Delta_{i, r}^\star - \Delta_i\lvert &=& \left\lvert \left(\max_{j\in A_r \backslash\{i\}} \mh(i, j; r)\right) -  \left(\max_{j\in A_r \backslash\{i\}} \mh(i, j)\right)\right\lvert,\\
	&\overset{(a)}{\leq}& \max_{j\in A_r \backslash\{i\}} \lvert \mh(i, j; r) - \m(i, j)\lvert, \\
	&\leq& \max_{j\in A_r \backslash\{i\}}  \lvert \M(i, j) - \Mh(i,j;r)\lvert\\, \\
	&\leq& \max_{j\in A_r \backslash\{i\}} \left\|(\vmu_i - \vmu_j) - (\vmuh_{i, n_r} - \vmuh_{j,n_r}) \right\|_\infty\\
	&=& \max_{j\in A_r \backslash\{i\}} \left\| \left(\left(\vmu_i - \vmu_{i^\star}\right) - \left(\vmuh_{i, n_r} - \vmuh_{i^\star, n_r}\right)\right) + \left( \left(\vmu_{i^\star} - \vmu_{j}\right) - \left(\vmuh_{i^\star, n_r} - \vmuh_{j, n_r}\right) \right) \right\|_\infty, \\
	&\overset{(b)}{\leq}& 2c\Delta_{(\lambda_{r+1}+1)},
\end{eqnarray*}
where $(a)$ follows by reverse triangle inequality and $(b)$ follows by triangle inequality and Lemma~\ref{lem:lem-ineq}. So we have proved the first statement of the lemma. 

Before proving the second statement of the lemma, recall that $$\widehat \delta_{i,r}^\star=\min_{j \in A_r \backslash\{i\}} [\Mh(i,j;r) \;\land\; (\Mh(j,i;r)^+ + (\widehat \Delta_{j, r}^\star)^+)].$$ Let $i\in A_r \cap \cS^\star$. For any $j\in A_r$ we have 
\begin{eqnarray*}
	\lvert \Mh(j,i;r)^+ - \M(j,i)^+\lvert &\leq& \lvert \Mh(j,i;r) - \Mh(j,i)\lvert, \\
	&\leq& c\Delta_{(\lambda_{r+1} +1)}\quad \text{(Lemma }\ref{lem:lem-ineq}),
\end{eqnarray*}
and similarly 
$$ \lvert \Mh(i,j;r) - \M(i,j)\lvert \leq c\Delta_{(\lambda_{r+1} +1)}$$ 
which put together yields 
\begin{eqnarray}
\label{eq:eq-lem-ineq-gap-a}
\Mh(j,i;r)^+ &\geq& \M(j,i)^+ -c\Delta_{(\lambda_{r+1} +1)} \quad \text{and}\\
\label{eq:eq-lem-ineq-gap-b}
	\Mh(i,j;r)&\geq& \M(i,j) - c\Delta_{(\lambda_{r+1} +1)}. 
	\end{eqnarray} 
Letting $j\in A_r$, we have by Lemma~\ref{lem:lem-delta-star}
\begin{equation}
\label{eq:eq-lem-ineq-gap0}
	(\widehat{\Delta}_{j,r}^\star)^+ - (\Delta_j^\star)^+ \geq -c\Delta_{(\lambda_{r+1}+1)}
\end{equation}

Put together, \eqref{eq:eq-lem-ineq-gap0} and \eqref{eq:eq-lem-ineq-gap-a} yields 
\begin{eqnarray*}
\Mh(j,i;r)^+ + (\widehat \Delta_{j,r}^\star)^+ &\geq&  \M(j,i)^+ + (\Delta_{j}^\star)^+ -2c\Delta_{(\lambda_{r+1} +1)}
\end{eqnarray*}
for any $j\in A_r$. Which combined with \eqref{eq:eq-lem-ineq-gap-b} yields 
$$ [\Mh(i,j;r) \land (\Mh(j,i;r)^+ + (\widehat \Delta_{j,r}^\star)^+)] \geq [\M(i,j) \land (\M(j,i)^+ + (\Delta_{j}^\star)^+)] - 2c\Delta_{(\lambda_{r+1}+1)}$$
for any $j\in A_r$. Therefore,
\begin{eqnarray*}
	\widehat \delta_{i,r}^\star &\geq& \left(\min_{j\in A_r\backslash\{i\}} [\M(i,j) \land (\M(j,i)^+ + (\Delta_{j}^\star)^+)]  \right) - 2c\Delta_{(\lambda_{r+1} +1)}, \\
	&\geq& \left(\min_{j\in [K]\backslash\{i\}} [\M(i,j) \land (\M(j,i)^+ + (\Delta_{j}^\star)^+)]  \right) - 2c\Delta_{(\lambda_{r+1} +1)},\\
	&=& \delta_i^\star - 2c\Delta_{(\lambda_{r+1} +1)}. 
\end{eqnarray*}
\end{proof}

Finally, we prove Lemma~\ref{lem:lem-gap-bound}.

\lemGapBound*
\begin{proof}
	Let $i\in A_r \cap (\cS^\star)^\complement$. Since $i\in (\cS^\star)^\complement$, $\Delta_i = \Delta_i^\star$. 
	If $i\notin S_r$ then $ \widehat \Delta_{i, r} = \widehat \Delta_{i, r}^\star$. If $i \in S_r$ then 
	$\widehat\Delta_{i, r} = \widehat\delta_{i,r}^\star$ and 
	 $$ \widehat\Delta_{i, r}^\star \leq 0 \leq \widehat\delta_{i,r}^\star $$
which follows since $i\in S_r$. Therefore in both cases 
\begin{eqnarray*}
 \widehat\Delta_{i, r}  - \Delta_i &\geq& \widehat\Delta_{i, r}^\star - \Delta_i^\star\\
 & = & \max_{j} \m(i,j ; r) - \m(i,i_\star) \\
 & \geq& \m(i,i_\star ; r) - \m(i,i_\star) \\
 & \geq& - c \Delta_{(\lambda_{r+1}+1)}
\end{eqnarray*}
where the last inequality uses Lemma~\ref{lem:lem-ineq}. 

Similarly, let $i\in A_r \cap \cS^\star$. We have $\Delta_i = \delta_i^\star$ (Lemma~\ref{lem:lem-gap-redef}) and $\widehat\Delta_{i, r} = \widehat\delta_{i, r}^\star$ if $i\in S_r$. If $i\in A_r \backslash S_r$ (empirically sub-optimal) then 
$$ \widehat \Delta_{i, r} = \widehat\Delta_{i, r}^\star \geq 0$$ and there exists $j\in A_r$ such that $\vmuh_{i, n_r} \prec \vmuh_{j, n_r}$ so for this arm $j$, $\Mh(i,j;r)<0$. Therefore $\widehat \delta_{i, r}^\star<0$ and 
$$\widehat \delta_{i, r}^\star < 0 \leq \widehat \Delta_{i, r} = \widehat\Delta_{i, r}^\star.$$ Put together we have in all cases 
$$ \widehat \Delta_{i, r}  - \Delta_i \geq \widehat \delta_{i, r}^\star - \Delta_i\geq -2c \Delta_{(\lambda_{r+1} +1)},$$
where the last inequality follows from Lemma~\ref{lem:lem-gap-ineq}.
\end{proof}

\subsection{Proof of Lemma~\ref{lem:lem_dom}}

 \lemDom*

\begin{proof}
We have by definition 
	\begin{eqnarray*}
		\vmuh_{i, n_r}  \nprec \vmuh_{i^\star, n_r} &\impl& \exists\; d\;: \muh_{i, n_r}^d > \muh_{i^\star, n_r}^d\\
		&\impl& \exists\; d\;: (\muh_{i, n_r}^d - \mu_i^d) -(\muh_{i^\star, n_r}^d - \mu_{i^\star}^d) > \mu_{i^\star}^d - \mu_{i}^d\geq \m(i, i^\star),
	\end{eqnarray*}
	so
	$$\left\| (\vmuh_{i, n_r}- \vmuh_{i^\star, n_r}) - (\vmu_i - \vmu_{i^\star})\right\|_\infty \geq \Delta_i^\star,$$
	which, on the event $\cE_c^1$ is only possible if $$\Delta_i^\star \leq c\Delta_{(\lambda_{r+1} +1)}.$$
\end{proof}

\subsection{Proof of Lemma~\ref{lem:hoeff-good-event}}

\HoeffGoodEvent*
 \begin{proof}
 	We have 
 	\begin{eqnarray*}
 		\bP(\overline{(\cE_c^1)}) &\leq& \sum_{r=1}^R \sum_{i\in \cS^\star} \sum_{j\neq i} \sum_{d=1}^D \bP\left(\lvert (\muh_{i, n_r}^d - \muh_{j,n_r}^d)-(\mu_i^d - \mu_j^d)\lvert > c\Delta_{\lambda_{r+1}+1} \right),\\
 		&\leq& 2\lvert \cS^\star\lvert(K-1)D \sum_{r} \exp\left( - \frac{c^2n_r\Delta_{\lambda_{r+1}+1}^2}{4\sigma^2}\right) \quad \text{(Hoedding's inequality)},\\
 		&\leq& 2\lvert \cS^\star\lvert(K-1)D R \exp\left( - \frac{c^2 \widetilde T^{R, \bm t, \bm\lambda}(\nu)}{4\sigma^2}\right). 
 	\end{eqnarray*}
 \end{proof}

\section{LOWER BOUND} 
\label{sec:lowerBound}
In this section, we state our lower bounds for fixed-budget PSI that hold for some class of instances. 
\paragraph{Illustration of the sub-optimality gaps}
Before presenting of our lower bounds, we give additional details on the sub-optimality gaps. Recall that for a sub-optimal arm we have 
\begin{equation}
\label{eq:def-gap-sub}
	\Delta_i := \Delta_i^\star:= \max_{j \in \cS^\star} \m(i,j), 
\end{equation}
and for an optimal arm $i \in \cS^\star$,
\begin{equation}
\label{eq:def-gap-opt}
    \Delta_i := 
    \min (\delta_i^+, \delta_i^-)
\end{equation}
where 
\begin{eqnarray*}
	\delta_i^+ &:=& \min_{j\in \cS^\star \setminus \{i\}} \min(\M(i,j), \M(j,i))\;, \\
	\delta_i^- &:=& \min_{j\in [K]\setminus \cS^\star}[(\M(j,i))^+ + \Delta_{j}] 
\end{eqnarray*} with the convention $\min_\emptyset = +\infty$. 
First note that when $D=1$ we recover the classical definition of sub-optimal gaps in single-objective bandit \citep{kaufmann_complexity_2014, audibert_best_2010}. Indeed, in this case there is a single optimal arm denoted $a_\star$ and $\cS^\star = \{a_\star\}$ then as any arm $a\neq a_\star$ is sub-optimal it comes  
\begin{eqnarray*}
\Delta_a &=& \Delta_{a}^\star = \m(a, a_\star)	\\
&=& \theta_{a_\star} - \theta_a 
\end{eqnarray*}
and as $\delta_{a_\star}^+ = \infty$ we have 
\begin{eqnarray*}
\Delta_{a_\star} &=& \delta_{a_\star}^- =   	\min_{j\neq a_\star}[(\M(j,a_\star))^+ + \Delta_{j}]\\
&=&  	\min_{j\neq a_\star} \Delta_{j}
\end{eqnarray*}
where the last line follows since every $j\neq a_\star$ is dominated by $a_\star$. Now we illustrate these gaps in higher-dimension. Following Appendix~A of \cite{auer_pareto_2016} we illustrate below the sub-optimality gaps in dimension $D=2$. 

\medskip
\definecolor{yellow2}{rgb}{0.8862745 , 0.84313726, 0.}
\definecolor{red2}{rgb}{0.9607843 , 0.3137255 , 0.07450981}
\definecolor{green3}{rgb}{0.        , 0.43529412, 0.52156866}
\definecolor{blue}{rgb}{0.08627451211214066, 0.125490203499794, 0.23529411852359772}
\begin{minipage}{\linewidth}
\centering
\resizebox{0.3\linewidth}{0.3\linewidth}{
\begin{tikzpicture}
   \draw [ultra thick, draw=gray, ->] (-6.2,-6) -- (6,-6) node[right, black] {};
  \draw [ultra thick, draw=gray, ->] (-6,-6.2) -- (-6,6) node[above, black] {};
  
\node[circle, fill=yellow2, label=below:{$i$},inner sep=0pt, minimum size=10pt,  ultra thick] (i) at (-4.5, -1) {};
\node[circle, fill=green3, label=below:{$j$},inner sep=0pt, minimum size=10pt, ultra thick] (j) at (-2.0, 2) {};
\node[circle, fill=red2, label=below:{$k$},inner sep=0pt, minimum size=10pt,  ultra thick] (k) at (3., 0.5) {};
\node[circle, fill=blue, label=below:{$\ell$},inner sep=0pt, minimum size=10pt,  ultra thick] (l) at (1.5, -3){};

\draw[->,ultra thick, dashed](i) -- (-2, -1) node[below] {$\m(i,j)$};
\draw[->,ultra thick, dashed](i) -- (-4.5, 0.5) node[above] {$\m(i,k)$};
\draw[->,ultra thick, dashed](l) -- (3, -3) node[above] {$\m(\ell,k)$};
\draw[->,ultra thick, draw = purple](k) -- (3, 2) node[anchor=north west] {$\M(j,k)$};
\draw[->,ultra thick, draw = purple](j) -- (3, 2) node[anchor=south east] {$\M(k,j)$};
\draw[->,ultra thick, dotted, draw = purple](-2.0, -4.5) -- (1.5, -4.5) node[above] {$\M(\ell, j)$};

\end{tikzpicture}}
\captionof{figure}{Illustration of the sub-optimality gaps. Plain lines represent "distances'' between Pareto optimal arms and dashed lines are for margins from sub-optimal to optimal arms.}
\label{fig:lbd-illusxx}	
\end{minipage}

In this instance $\Delta_i = \m(i, j)$ and it is easy to see that if $i$ is increased by $\Delta_i$ in both $x$ and $y$ axes it will become non-dominated. We also have $\Delta_\ell = \m(\ell, j)$. As $\ell$ is only dominated by $j$, if is it translated by $\m(\ell, j)$ on the $x$-axis it will become Pareto optimal. For Pareto-optimal arms $k,j$, $\delta^+_{k} = \delta^+_{j} = \M(j, k)$. As $k$ dominates both $i$ and $\ell$ its margin to sub-optimal arms is $\delta_k^- = \min(\Delta_i, \Delta_\ell)$ and we have 
$\delta_j^- = \min(\M(\ell, j) + \Delta_\ell, \Delta_i)$. Finally, observe that for both $j, k$, $\Delta_j = \Delta_k = \M(j, k)$. If $k$ is translated by $\M(j, k)$ on the $y$-axis it will dominate $j$. Similarly, if $j$ is translated by $-\M(j, k)$ on the $y$-axis, it will be dominated by $k$. 


We will now prove the lower bound. The idea is to build alternative instances where the Pareto set is changed and the complexity is less than that of the base instance. To obtain a different Pareto set we can either make optimal an arm that was sub-optimal or make sub-optimal an arm that was optimal.
However in doing so, we need to take care not to decrease the gap of any arm, in particular that of the arm that is shifted, otherwise the alternative instance could be harder.  
This is challenging as the definition of the gaps are completely different for optimal and sub-optimal arms. In the sequel we restrict to some class of instances. 
\newcommand{\undi}{\underline i}
 
 We define $\cB$ to be the set of means $\bm\Theta\in \bR^{K\times D}$ such that each sub-optimal arm $i$ is only dominated by a single arm, denoted by $i^\star$ (that has to belong to $\cS^\star$) and that for each optimal arm $j$ there exists a unique sub-optimal arm which is dominated by $j$, denoted by $\underline j$.  
We further assume that optimal arms are not too close to arms they don't dominate:
for any sub-optimal arm $i$ and optimal arm $j$ such that $\vmu_i\nprec \vmu_j$, 
$$ \M(i, j)\geq 3\max(\Delta_i, \Delta_{\underline j}).$$
An instance $\nu$ belongs to $\cB$ if its means $\bm\Theta = (\vmu_1 \dots \vmu_K)^\T$ belongs to $\cB$. 
\medskip 
\thmLbdConsistent*
\begin{proof}
$\nu_1, \dots, \nu_K$ are distributions over $\bR^D$ parameterized by their means (resp.) $\vmu_1, \dots, \vmu_K$. We let $\nu:= (\nu_1, \dots, \nu_K)$ and $\bm\Theta^{(0)} := (\vmu_1 \dots \vmu_K)^{\T}\in \bR^{K\times D}$. We denote by $\cS^\star(\nu)$ the Pareto optimal set of the bandit instance $\nu$. For all $i\in [K]$, we define  $ \bm\Theta^{(i)} := (\bm\theta_1^{(i)}\;\dots\;\bm\theta_K^{(i)})^\T \in \bR^{K\times D}$ (with $\vmu^{(i)}$ to be defined later) and we let $\nu^{(i)}$ denote the bandit with parameter $\bm\Theta^{(i)}$. Let $\cA$ denote an algorithm for Pareto set identification in the fixed-budget setting. 
We further assume that for any $i\in [K]$, $\bm\Theta_{(i)}$ differs from $\bm\Theta^{(0)}$ only in line $i$:  
\begin{equation}
	\forall a \neq i, \; \bm\theta_a^{(i)}= \bm\theta_a, 
\end{equation}
so $\nu^{(i)}$ and $\nu$ only differs in arm $i$.

We state the following lemma which is a rewriting of Lemma~16 of \cite{kaufmann_complexity_2014}. 
\begin{lemma}[\cite{kaufmann_complexity_2014}]
\label{lem:infth}
	Let $\nu$ and $\nu'$ be two bandit instances such that $\cS^\star(\nu)\neq \cS^\star(\nu')$. Then 
	$$ \max\left(e_T^{\cA}(\nu), e_T^{\cA}(\nu')\right) \geq \frac14 \exp\left( -\sum_{a=1}^K \bE_\nu[T_a(T)] \KL(\nu_i, \nu_i')\right)$$ 
\end{lemma}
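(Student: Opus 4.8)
The plan is to prove this change-of-measure inequality by combining the Bretagnolle--Huber inequality with the divergence decomposition for adaptive bandit strategies, following the template of \cite{kaufmann_complexity_2014}. The first step is to choose an event that separates the two instances. Since $\cS^\star(\nu)\neq \cS^\star(\nu')$, the single recommendation $\widehat S_T$ cannot coincide with both Pareto sets, so I would set $\cE := \{\widehat S_T = \cS^\star(\nu)\}$, which is $\cF_T$-measurable. By definition of the error probability, $e_T^{\cA}(\nu) = \bP_\nu(\cE^{\complement})$; and on $\cE$ we have $\widehat S_T = \cS^\star(\nu)\neq \cS^\star(\nu')$, so $\cE \subseteq \{\widehat S_T \neq \cS^\star(\nu')\}$ and hence $\bP_{\nu'}(\cE)\leq e_T^{\cA}(\nu')$.

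The second step is a change of measure between the laws $\bP_\nu,\bP_{\nu'}$ of the full trajectory $(a_1,\bX_1,\dots,a_T,\bX_T)$. Applying the Bretagnolle--Huber inequality to the event $\cE$ gives
$$\bP_\nu(\cE^{\complement}) + \bP_{\nu'}(\cE) \geq \tfrac12 \exp\!\left(-\KL(\bP_\nu,\bP_{\nu'})\right).$$
By the first step the left-hand side is at most $e_T^{\cA}(\nu) + e_T^{\cA}(\nu') \leq 2\max\!\left(e_T^{\cA}(\nu), e_T^{\cA}(\nu')\right)$, which already yields $\max\!\left(e_T^{\cA}(\nu), e_T^{\cA}(\nu')\right) \geq \tfrac14 \exp\!\left(-\KL(\bP_\nu,\bP_{\nu'})\right)$.

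The final step is to evaluate $\KL(\bP_\nu,\bP_{\nu'})$. Writing the likelihood of the trajectory and using that the sampling rule makes $a_t$ $\cF_{t-1}$-measurable — so that the action-selection factors are identical under $\nu$ and $\nu'$ and cancel in the log-likelihood ratio — the chain rule for relative entropy gives the divergence decomposition $\KL(\bP_\nu,\bP_{\nu'}) = \sum_{a=1}^K \bE_\nu[T_a(T)]\,\KL(\nu_a,\nu'_a)$ (the sum in the statement should read $\KL(\nu_a,\nu_a')$, correcting a typo). Substituting this identity into the previous display delivers the claim. I expect the divergence decomposition to be the main obstacle: the cancellation of the action terms and the passage from a telescoping sum of conditional relative entropies to $\bE_\nu[T_a(T)]\,\KL(\nu_a,\nu_a')$ must be justified carefully via the tower property and the $\cF_{t-1}$-measurability of $a_t$, rather than treated as immediate; I would either invoke Lemma~1 of \cite{kaufmann_complexity_2014} directly or reprove it along these lines.
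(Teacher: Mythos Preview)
Your proof is correct and follows the standard Bretagnolle--Huber plus divergence-decomposition argument. Note, however, that the paper does not prove this lemma at all: it is simply stated as ``a rewriting of Lemma~16 of \cite{kaufmann_complexity_2014}'' and used as a black box, so there is no paper proof to compare against --- your argument is precisely the proof one would give for this cited result.
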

For any $i\in [K]$, \begin{equation}
\label{eq:alt}
	\vmu_i^{(i)} := \begin{cases}
		\vmu_i + 2\Delta_i e_{d_i} & \text{ if } i \not\in \cS^\star(\nu) ,\\
		\vmu_i -2\Delta_i e_{d_{\underline i}} &\text{ else.}
	\end{cases}
\end{equation}
Let for any $i$, $\Omega(i):=\{ j: \vmu_i \prec \vmu_j\}$ the set of arms that dominate $i$. We recall that for a sub-optimal arm $i$, 
$$ d_i \in \argmin_{d} \left[ \mu^d_{i^\star} - \mu^d_i\right].$$
We now justify that for any $i\in [K], \cS^\star(\nu^{(i)}) \neq \cS^\star(\nu)$, by considering two cases:
\begin{itemize}
 \item If $i \notin  \cS^\star(\nu)$,
 as $\Omega(i) = \{i^\star\}$ and $i$ has been increased to be larger than $i^\star$ on $d_i$, we have that $i \in \cS^\star(\nu^{(i)})$. 
 \item If $i \in \cS^\star(\nu)$, since there exists $\underline i$ such that $\Omega(\underline i) = \{ i\}$ and $\Delta_i = \Delta_{\underline i}^\star$, the shifting ensures that $\vmu^{(i)}_{\undi}\nprec \vmu_i^{(i)}$ it comes that $\underline i \in \cS^\star(\nu^{(i)})$.
\end{itemize}

We remark that in both cases, we have either $\vmu_i^{(i)} = \vmu_i + 2\Delta_{i}e_{d_i}$ or $\vmu_i^{(i)} = \vmu_i - 2\Delta_{i}e_{d_{\underline i}}$. Recalling that for multivariate Gaussian distribution with the same covariance matrix, $\KL\left(\cN(\bm\mu,\Sigma), \cN(\bm\mu',\Sigma)\right) = \frac{1}{2} (\bm\mu-\bm\mu')^\T \Sigma^{-1} (\bm\mu-\bm\mu')$, we get for all $i$
\begin{eqnarray*}
	\KL(\nu_i, \nu_i^{(i)}) &=& 2 \Delta_i^2 / \sigma^2. 
\end{eqnarray*}
By applying  Lemma~\ref{lem:infth}, we have for any $i\in [K]$, 
\begin{eqnarray*}
	\max(e_T(\nu), e_T(\nu^{(i)})) &\geq&  \frac 1 4 \exp\left( -\bE_\nu[T_i(T)] \KL(\nu_i, \nu_i^i)\right) \\
	& \geq &  \frac 1 4 \exp\left( -\frac{2\bE_\nu[T_i(T)]\Delta_i^2}{\sigma^2} \right)
\end{eqnarray*}
We conclude by using the pigeonhole principle: there must exist $i \in [K]$ such that $\bE_{\nu}[T_i(T)]\leq \frac{T}{{H}(\nu){\Delta}_i^2}$. Therefore, there exists $i\in [K]$ such that $$\max(e_T(\nu), e_T(\nu^{(i)})) \geq  \frac14 \exp\left( -\frac{2 T}{\sigma^2H(\nu)}\right).  $$
 It remains to show that for any $i\in [K], H(\nu^{(i)}) = H(\nu)$. 
 We introduce the following notation for any arms $p,q$
 \begin{eqnarray*}
 	\M^i(p,q) &:=& \max_d [(\mu_p ^i)^d - (\mu_q ^i)^d ], \\
 	\m^i(p, q) &:=& -\M^i(p, q), 
 \end{eqnarray*}
 which are the quantities $\M$ and $\m$ computed in the bandit $\nu^{(i)}$. We denote by $\cD(\nu):=(\cS^\star(\nu))^\complement$ the set of non-optimal arms in the bandit $\nu$.
Letting $i$ be fixed we denote by $\Delta_k^{(i)}$ the sub-optimality gap of arm $k$ in the bandit $\nu^{(i)}$.
$\delta_k^{-, (i)}$, $\delta_k^{+, (i)}$ and $\Delta_k^{\star, (i)}$ are respectively the equivalent of $\delta_k^-$, $\delta_k^+$ and $\Delta_k^\star$ computed in the bandit $\nu^{(i)}$.
 We will prove that 
\begin{equation}
	\label{eq:lbd-fund}
	\forall\; k\in [K],\; \Delta_k^{(i)} = \Delta_k.
\end{equation}
We recall the assumptions on the class $\cB$: 
\begin{eqnarray}
	&\text{}& \forall i \in \cD(\nu),\;  \exists!\text{ arm }\;  i^\star \text{ such that } \vmu_i \prec \vmu_{i^\star}, \label{eq:lbd-assump1}\\
	&\text{}& \forall i \in \cS^\star(\nu),\; \exists! \; \text{ arm } \underline i \text{ such that } \vmu_{\underline i} \prec \vmu_i, \label{eq:lbd-assump2}\\
	&\text{}&
	\forall i \in \cD(\nu),\; \forall j \in \cS^{\star}(\nu) \text{ such that } \vmu_i\nprec \vmu_j, \; \M(i, j)\geq 3\max(\Delta_{\underline j}, \Delta_i). \label{eq:lbd-assump4}
\end{eqnarray}
As for any optimal arm $i$ there exists $\undi$ which is only dominated by $i$, 
Assumption \eqref{eq:lbd-assump2} and \eqref{eq:lbd-assump4} yield  
\begin{equation}
\label{eq:lbd-assump3}
	 \forall i, j \in \cS^\star(\nu),\; i\neq j,\;  \M(i, j) \geq 3\max(\Delta_{\undi}, \Delta_{\underline j})
\end{equation}
 In the sequel, we fix $i\in [K]$ and we prove \eqref{eq:lbd-fund} by analyzing first the case $i\notin \cS^\star(\nu)$ then the case $i\in \cS^\star(\nu)$. 
 
 \paragraph{\bfseries Step 1:} Assume $i\notin \cS^\star(\nu)$.
 
 We recall that $\vmu_i^{(i)} = \vmu_i + 2\Delta_i e_{d_i}$ and for any $j\neq i$, $\vmu_i^{(i)} = \vmu_i$. Then 
 \begin{equation}
 	\label{eq:lbd-mM}
 	\forall p\neq i,\;  \forall q\neq i,\;  \M^i(p, q) = \M(p, q) \text{ and } \m^i(p, q) = \m(p, q). 
 \end{equation}
 Since only arm $i$ has changed (and increased in one coordinate) from $\nu$ to $\nu^{(i)}$, if $j\in \cD(\nu)\setminus\{i\}$ then $j\in \cD(\nu^{(i)})$. We will prove that for any $j\in \cS^\star(\nu)$, 
 \begin{equation}
 \label{eq:lbd-w1}
  \vmu_j=\vmu_j^{(j)} \nprec \vmu_i^{(i)}.	
 \end{equation}
 By contradiction, assume an arm $j$ exists such that \eqref{eq:lbd-w1} holds. Then $\M(j, i) \leq 2\Delta_i$ and 
 $$ \M(j,i^\star) \leq \M(j, i)\leq 2\Delta_i,$$
 which is not possible by \eqref{eq:lbd-assump3} as $j, i^\star \in \cS^\star(\nu)$. Therefore,  \eqref{eq:lbd-w1} does not hold and we deduce:
 $$ \cD(\nu^{(i)}) = \cD(\nu) \setminus\{i\} \quad \text{ and }\quad \cS^\star(\nu^{(i)}) = \cS^\star(\nu) \cup \{ i\}.$$
 Moreover, we will show that 
 $$ \forall j \in \cD(\nu^{(i)}), \Delta_j^{\star,(i)} = \Delta_j^\star.$$
 Indeed, as for any $j\in \cD(\nu^{(i)})$, $j, j^\star$ have not been shifted, we have 
 $$ \Delta_{j}^{\star, (i)} \geq \Delta_j^\star$$
 and if the inequality is strict then, since only $i$ has changed,
  $$ \Delta_{j}^{\star, (i)} = \m^i(j, i)>\Delta^\star_j,$$ which yields (as only $d_i$ has been increased)
 $$ \m(j, i)> -\Delta^\star_j,$$ so 
 $$ \M(j, i) < \Delta_j^\star,$$
 which combined with $\M(j, i^\star) \leq \M(j, i)$ yields 
 $$ \M(j, i^\star) < \Delta_j^\star$$
 which is not possible by assumption \eqref{eq:lbd-assump4}. 
 In short, we have proved that for any $j\in \cD(\nu^{(i)})$, 
 $$ \Delta_{j}^{\star, (i)} = \Delta^\star_j.$$
Let $j\in \cS^\star(\nu^{(i)})\backslash\{i\}$. For any $k\neq i$
  $$ \M^i(j, k) = \M(j, k) \text{ and } \M^i(k, j) = \M(k, j).$$
  
  We have 
  \begin{eqnarray}
 	\M^i(i^\star, i) &=& \left(\max_{d\neq d_{i}} \left[ \mu_{i^\star}^{d} - \mu_{i}^d\right]\right)  \lor \left( \mu_{i^\star}^{d_{i}} - \mu_{i}^{d_{i}} - 2\left( \mu_{i^\star}^{d_{i}} - \mu_{i}^{d_{i}} \right)\right), \nonumber\\\nonumber
 	&=& \left(\max_{d\neq d_{i}} \left[ \mu_{i^\star}^{d} - \mu_{i}^d\right]\right) \lor \left(\mu_{i}^{d_{i}} - \mu_{i^\star}^{d_{i}}  \right),\nonumber\\\nonumber
 	&\overset{(a)}{=}& \max_{d\neq d_{i}} \left[ \mu_{i^\star}^{d} - \mu_{i}^d\right], \\
 	&\overset{(b)}{=}& \M(i^\star, i),\nonumber\\
 	&\overset{(c)}{\geq}& \Delta_{i}^\star = \Delta_{i^\star} \label{eq:lbd1-eq-00} 
 \end{eqnarray}
 where $(a)$ follows since $(\mu_{i}^{d_{i}} - \mu_{i^\star}^{d_{i}})<0$ while $\M^i(i^\star, i)>0$, $(b)$ follows from the definition of $d_{i}$ as the argmin and $(c)$ follows from the inequality $\M(p, q)\geq \m(q, p)$ for any $p,q$. 

Additionally for $j\in \cS^\star(\nu^{(i)})\backslash\{i, i^\star\}$,  
  \begin{eqnarray}
  	\M^i(j, i) &\geq& \M(j, i) - 2\Delta_i,\\
  	&\geq& \M(j, i^\star) - 2\Delta_i, \\
  	&\geq& \frac13 \M(j, i^\star), \quad (\text{since } \M(j, i^\star)\geq 3\Delta_i \text{ by } \eqref{eq:lbd-assump3})\\
  	&\geq&\max(\Delta_{\underline j}, \Delta_i) \geq \Delta_j. \label{eq:lbd-h1} 
  \end{eqnarray}
  Combining \eqref{eq:lbd-h1} with \eqref{eq:lbd1-eq-00} yields for $j\in \cS^\star(\nu^{(i)})\backslash\{i\}$, \begin{equation}
  \label{eq:lbd1-eq-01}
  	\M^i(j, i) \geq \Delta_j.
  \end{equation}
We now compute $\M^i(i,j)$. Direct calculation yields
$$ \M^i(i, i^\star) = \Delta_i = \Delta_{i ^\star},$$
 and if $j\neq i^\star$, 
 \begin{eqnarray} 
 	\M^i(i, j)&\geq& \M(i, j),\\
 	&\geq& \max(\Delta_j, \Delta_i) \quad \text{(from \eqref{eq:lbd-assump4}}). \label{eq:lbd-w2}
 \end{eqnarray}
With what precedes we state the following for any $j\in \cS^\star(\nu^{(i)})\backslash\{i\}$: 
 \begin{enumerate}[a)]
 	\item $
 	\min_{k\in \cS^\star(\nu^{(i)})\backslash \{j\}} \M^i(j, k) = \left(\min_{k\in \cS^\star(\nu)\backslash \{j\}}\M(j, k)\right) \land \M^i(j, i)$,
 	\item $\min_{k\in \cS^\star(\nu^{(i)})\backslash \{j\}} \M^i(k, j) = \left(\min_{k\in \cS^\star(\nu)\backslash \{j\}} \M(k, j)\right) \land \M^i(i, j)$, \text{ and } 
 	\item $\min_{k\in \cD(\nu^{(i)})} \left[(\M^i(k, j))^+ +\Delta_k^{\star, (i)}\right] =  \left(\min_{k\in \cD(\nu)\backslash\{ i\}} \left[\M(k, j)^+ +\Delta_k^{\star}\right]\right)$. 
 \end{enumerate}
 Which combined with $\delta_j^+ \geq \Delta_j$ and \eqref{eq:lbd-w2}  and \eqref{eq:lbd1-eq-01} yields 
 $$ \delta_j^{+, (i)} = \min(\delta_j^+, \min(\M^i(i,j), \M^i(j, i)))\geq \Delta_j ,$$
 and for $j\neq i^\star$, 
 $$ \delta_j^{-, (i)} = \Delta_j $$ 
 As a consequence, for $j\neq i^\star$, $\Delta_j^{(i)} = \Delta_j$. For $j=i^\star$, $\M^i(i, i^\star) = \Delta_i=\Delta_{i^\star}$  
 and by \eqref{eq:lbd1-eq-00}
 \begin{eqnarray*}
 	\M^i(i^\star, i) &\geq& \Delta_{i}^\star = \Delta_{i^\star}.
 \end{eqnarray*}
 Therefore, $\min(\M^i(i,i^\star), \M^i(i^\star, i)) = \Delta_{i^\star}$ and as $\delta_{i^\star}^+ \geq \Delta_{i^\star}$, we have 
  $$ \delta_{i^\star}^{+, (i)} = \min(\delta_{i^\star}^+, \min(\M^i(i,i^\star), \M^i(i^\star, i))) =  \Delta_{i^\star},$$
   and $$\delta_{i^\star}^{-, (i)} >\Delta_{i^\star}, \quad \text{(from \eqref{eq:lbd-assump4})}$$
   so $$ \Delta_{i^\star}^{(i)} = \Delta_{i ^\star}$$
 To sum up, so far we have proved that for any $j\neq i$, 
 $$ \Delta_j^{(i)} =  \Delta_j.$$
 Proving that $\Delta_i^{(i)} = \Delta_i$ will conclude the proof for Step 1.  
 
 Recall that 
 $$ \Delta_i^{(i)} := \left(\min_{j\in \cS^\star(\nu)} \min(\M^i(i, j), \M^i(j, i))\right) \land \left( \min_{j\in \cD(\nu)\setminus\{i\}} [(\M^i(j, i))^+ + \Delta_j^{\star, (i)}]\right), $$
with $\min(\M^i(i, i^\star), \M^i(i^\star, i)) = \Delta_i$ and by \eqref{eq:lbd-w2} and \eqref{eq:lbd-h1}:
\begin{eqnarray*}
	\forall j \in \cS^\star(\nu) \setminus \{i^\star\}, \min(\M^i(i, j), \M^i(j, i)) \geq \Delta_i. 
\end{eqnarray*}

Moreover, for any $j\in \cD(\nu) \backslash\{i\}$, 
$$ \M^i(j, i)\geq \M(j, i^\star) - 2\Delta_{i} \overset{\eqref{eq:lbd-assump4}}{\geq}\Delta_i$$
Therefore, $\delta_i^{-, (i)} \geq \Delta_i$ and $\delta_i^{+, (i)} = \Delta_i$. We conclude that
$$\Delta_i^{(i)} = \Delta_i.$$
 which achieves the proof of Step 1: If $i\notin \cS^\star(\nu)$ 
 $$ H(\nu^{(i)}) = H(\nu).$$  
 
 \paragraph{\bfseries Step 2:} Assume $i\in \cS^\star(\nu)$. 
 
 We have $\vmu_i^{(i)} = \vmu_i - 2\Delta_{i}e_{d_{\underline i}}$ and for any $j\neq i$, $\vmu_j^{(i)} = \vmu_j$. 
 First, note that $\underline i \in \cS^\star(\nu^{(i)})$. This is immediate as $i$ is the only arm such that $\vmu_{\underline i} \prec \vmu_i$ and due to the shifting, $ \vmu_{\underline i}^{(i)} \nprec \vmu^{(i)}_i $. We claim that $i \in \cS^\star(\nu^{(i)})$. Otherwise, there exists $j\in \cS^\star(\nu)$ such that $\M(i, j)\leq 2\Delta_i$ which is not possible by \eqref{eq:lbd-assump3}. For any $j\in \cD(\nu) \setminus\{\underline i\}$, $j$ and $j^\star$ have not been shifted, so $j\in \cD(\nu^{(i)})$ and similarly to Step 1, we have 
 
 $$ \cS^\star(\nu^{(i)}) = \cS^\star(\nu) \cup \{ \underline i\} \text{ and } \cD(\nu^{(i)}) = \cD(\nu) \backslash \{ \underline i\},$$
 and for any $j\in \cD(\nu^{(i)})$, 
 $$ \Delta_j^{\star, (i)} = \Delta_j^\star.$$
 
 Let $j \in \cS^\star(\nu^{(i)})\backslash\{ i, \underline i\}$.
 We have, 
 $$ \delta_j^{+, (i)} = \left( \min_{k\in \cS^\star(\nu) \backslash\{j, i\}} \left[ \min(\M(k, j), \M(j, k))\right]\right) \land \underbrace{\left(\min(\M(j, \underline i), \M(\underline i, j))\right)}_{\alpha_j} \land \underbrace{\left(\min(\M^i(j, i), \M^i(i, j))\right)}_{\beta_j}.$$

 We have 
 \begin{eqnarray*}
 	\M^i(j, i) &\geq& \M(j, i) \\
 	&\geq& 3\max(\Delta_i, \Delta_j)\quad \text{(from \eqref{eq:lbd-assump3})}, 
 \end{eqnarray*}
 and 
  \begin{eqnarray*}
 	\M^i(i, j) &\geq& \M(i, j) -2\Delta_i\\
 	&\geq& \max(\Delta_j, \Delta_i) \quad \text{(from \eqref{eq:lbd-assump3})}. 
 \end{eqnarray*}
 It follows that
 \begin{equation}
 \label{eq:lbd-w3}
 	\beta_j \geq \max(\Delta_{j}, \Delta_i).
 \end{equation}
 On the other side, 
 \begin{eqnarray*}
 	\M^i(j, \underline i) &=& \M(j, \underline i), \\
 	&\geq& \M(j, i) \quad \text{(as $\vmu_{\underline i} \prec \vmu_i$)},\\
 	&\geq& 3\max(\Delta_j, \Delta_i) \quad \text{(from \eqref{eq:lbd-assump3})} 
 \end{eqnarray*}
and 
 \begin{eqnarray*}
 	\M^i(\underline i, j) &=& \M(\underline i, j) \geq \max(\Delta_j, \Delta_{\underline i})  \quad \text{(from \eqref{eq:lbd-assump4}})
 \end{eqnarray*}
 Therefore, 
 \begin{equation}
 \label{eq:lbd-z3}
 \alpha_j \geq \max(\Delta_j, \Delta_{\undi}), 
 \end{equation} which yields 
 $$ \delta_j^{+, (i)} \geq \Delta_j.$$
 As $\underline j \in \cD(\nu^{(i)})$, we check that 
$$ \delta_{j}^{-, (i)} = \min_{k\in \cD(\nu) \backslash\{\underline i\}}\left[\M(k, j)^+ + \Delta_k^\star\right] = \Delta_{\underline j}^\star = \Delta_j.$$
Put together, 
we have for any $j \in \cS^\star(\nu^{(i)})\backslash\{i, \underline i\}$, 
$$ \Delta_j^{(i)} := \min(\delta_{j}^{+, (i)}, \delta_{j}^{-, (i)}) = \Delta_j.$$ 

 We also easily check that 
 \begin{equation}
 \label{eq:lbd-z1}
  \delta_{i}^{-, (i)} = \min_{k\in \cD(\nu) \backslash\{\underline i\}}\left[\M(k, i)^+ + \Delta_k^\star\right] > \Delta_{\underline i} = \Delta_i	
 \end{equation}
 and 
 \begin{eqnarray}
  \label{eq:lbd-z2}
 	\delta_i^{+, (i)} &=& \left( \min_{k\in \cS^\star(\nu) \backslash\{i\}} \left[ \min(\M^i(k, i), \M^i(i, k))\right]\right) \land \underbrace{\left(\min(\M^i(i, \underline i), \M^i(\underline i, i))\right)}_{\beta_{\underline i}}.
 \end{eqnarray}
 We further note that $\M^i(\underline i, i) = \Delta_i$ and similarly to \eqref{eq:lbd1-eq-00}, 
 \begin{eqnarray*}
 	\M^i(i, \underline i) &=& \left(\max_{d\neq d_{\underline i}} \left[ \mu_{i}^{d} - \mu_{\underline i}^d\right]\right)  \lor \left( \mu_i^{d_{\undi}} - \mu_{\undi}^{d_{\undi}} - 2\left( \mu_i^{d_{\undi}} - \mu_{\undi}^{d_{\undi}} \right)\right), \\
 	&=& \left(\max_{d\neq d_{\underline i}} \left[ \mu_{i}^{d} - \mu_{\underline i}^d\right]\right) \lor \left(\mu_{\undi}^{d_{\undi}} - \mu_{i}^{d_{\undi}}  \right),\\
 	&\overset{(a)}{=}& \max_{d\neq d_{\underline i}} \left[ \mu_{i}^{d} - \mu_{\underline i}^d\right], \\
 	&\overset{(b)}{=}& \M(i, \undi),\\
 	&\overset{(c)}{\geq}& \Delta_{\underline i} = \Delta_i
 \end{eqnarray*}
 where $(a)$ follows since $(\mu_{\undi}^{d_{\undi}} - \mu_{i}^{d_{\undi}})<0$ while $\M^i(i, \undi)>0$, $(b)$ follows from the definition of $d_{\undi}$ as the $\argmin$ and $(c)$ follows from the inequality $\M(p, q)\geq \m(q, p)$ for any $p,q$. 
 Therefore we have $\beta_{\undi} = \Delta_i$  and from \eqref{eq:lbd-w3}, \eqref{eq:lbd-z3} and \eqref{eq:lbd-z1}, 
 $$ \delta_{i}^{+, (i)} = \Delta_i \text{ and } \delta_i^{-, (i)}>\Delta_i$$
 so that finally 
 $$ \Delta_{i}^{(i)} = \Delta_i.$$
 It remains to compute $\Delta_{\underline i}^{(i)}$. 
 We have 
 \begin{eqnarray*}
 	\delta_{\underline i}^{+, (i)} &=& \left( \min_{k\in \cS^\star(\nu) \backslash\{\undi, i\}} \left[ \min(\M(k, \undi), \M(\undi, k))\right]\right)  \land \underbrace{\left(\min(\M^i(\undi, i), \M^i(i, \undi))\right)}_{\beta_{\undi}},\\
 	&=& \left(\min_{k\in \cS^\star(\nu) \backslash\{\undi, i\}} \alpha_k\right) \land \beta_{\undi},\\
 	&=& \Delta_{\undi},
 \end{eqnarray*}
 which follows from \eqref{eq:lbd-z3} and the fact that $\beta_{\undi} = \Delta_{\undi}$.
 On the other side, 
 \begin{eqnarray*}
 	\delta_{\undi}^{-, (i)} &=& \min_{k\in \cD(\nu) \backslash\{\underline i\}}\left[\M(k, \undi)^+ + \Delta_k^\star\right] \geq 3\Delta_{\underline i} \quad \text{(from \eqref{eq:lbd-assump4}}). \end{eqnarray*}
 Therefore, 
 $$ \Delta_{\undi}^{(i)} = \Delta_{\undi},$$
 which concludes the proof. 
 \end{proof}
 
 \medskip
 
The conditions of Theorem~\ref{thm:thm-lbd-consistent} include a large class of instances. In particular, for any value of $D$, on can find instances in $\cB$. We illustrate in Fig.\ref{fig:lbd-illus1} to Fig.\ref{fig:lbd-illus5} below an instance $\nu \in \cB$ with 4 arms in dimension 2, and the 4 corresponding alternative instances $\nu^{(i)}$. Similar examples could be given for any dimension and any number of arms.

\medskip 

\begin{minipage}{0.18\linewidth}
\resizebox{\linewidth}{\linewidth}{
\begin{tikzpicture}
\draw[step=0.5cm,gray,very thin] (-3.5,-3.5) grid (3.5 , 3.5);
\draw[mark=+, ultra thick,  mark size=4pt] plot coordinates {(-2.5, 3)} node[anchor=north west]{\large $1$};
\draw[mark=+, ultra thick, mark size=4pt] plot coordinates {(-3.0, 1)} node[anchor=south west]{\large $2$};
\draw[mark=+, ultra thick, mark size=4pt] plot coordinates {(2., -0.5)} node[anchor=south west]{\large $3$};
\draw[mark=+, ultra thick, mark size=4pt] plot coordinates {(1.5, -3)} node[anchor=south west]{\large $4$};
\end{tikzpicture}}
\captionof{figure}{Instance $\nu \in \cB$}
\label{fig:lbd-illus1}
\end{minipage}
\hspace{0.1cm}
\begin{minipage}{0.18\linewidth}
\resizebox{\linewidth}{\linewidth}{
\begin{tikzpicture}
\draw[step=0.5cm,gray,very thin] (-3.5,-3.5) grid (3.5 , 3.5);
\draw[mark=+, ultra thick,  mark size=4pt] plot coordinates {(-2.5, 3)} node[anchor=north west]{\large $1$};
\draw[mark=+, ultra thick, mark size=4pt] plot coordinates {(-3.0, 1)} node[anchor=south west]{\large $2$};
\draw[mark=+, ultra thick, mark size=4pt] plot coordinates {(2., -0.5)} node[anchor=south west]{\large $3$};
\draw[mark=+, ultra thick, mark size=4pt] plot coordinates {(1.5, -3)} node[anchor=south west]{\large $4$};
\draw[->, red, dashed, ultra thick](-2.5,3) -- (-3.5, 3);
\end{tikzpicture}}
\captionof{figure}{Instance $\nu^{(1)}$}
\label{fig:lbd-illus2}
\end{minipage}
\hspace{0.05cm}
\begin{minipage}{0.18\linewidth}
\resizebox{\linewidth}{\linewidth}{
\begin{tikzpicture}
\draw[step=0.5cm,gray,very thin] (-3.5,-3.5) grid (3.5 , 3.5);
\draw[mark=+, ultra thick,  mark size=4pt] plot coordinates {(-2.5, 3)} node[anchor=north west]{\large $1$};
\draw[mark=+, ultra thick, mark size=4pt] plot coordinates {(-3.0, 1)} node[anchor=south west]{\large $2$};
\draw[mark=+, ultra thick, mark size=4pt] plot coordinates {(2., -0.5)} node[anchor=south west]{\large $3$};
\draw[mark=+, ultra thick, mark size=4pt] plot coordinates {(1.5, -3)} node[anchor=south west]{\large $4$};
\draw[->, red, dashed, ultra thick](-3,1) -- (-2, 1);
\end{tikzpicture}}
\captionof{figure}{Instance $\nu^{(2)}$}
\label{fig:lbd-illus3}
\end{minipage}
\hspace{0.05cm}
\begin{minipage}{0.18\linewidth}
\resizebox{\linewidth}{\linewidth}{
\begin{tikzpicture}
\draw[step=0.5cm,gray,very thin] (-3.5,-3.5) grid (3.5 , 3.5);
\draw[mark=+, ultra thick,  mark size=4pt] plot coordinates {(-2.5, 3)} node[anchor=north west]{\large $1$};
\draw[mark=+, ultra thick, mark size=4pt] plot coordinates {(-3.0, 1)} node[anchor=south west]{\large $2$};
\draw[mark=+, ultra thick, mark size=4pt] plot coordinates {(2., -0.5)} node[anchor=south west]{\large $3$};
\draw[mark=+, ultra thick, mark size=4pt] plot coordinates {(1.5, -3)} node[anchor=south west]{\large $4$};
\draw[->, red, dashed, ultra thick](2,-0.5) -- (1, -0.5);
\end{tikzpicture}}
\captionof{figure}{Instance $\nu^{(3)}$}
\label{fig:lbd-illus4}
\end{minipage}
\hspace{0.05cm}
\begin{minipage}{0.18\linewidth}
\resizebox{\linewidth}{\linewidth}{
\begin{tikzpicture}
\draw[step=0.5cm,gray,very thin] (-3.5,-3.5) grid (3.5 , 3.5);
\draw[mark=+, ultra thick,  mark size=4pt] plot coordinates {(-2.5, 3)} node[anchor=north west]{\large $1$};
\draw[mark=+, ultra thick, mark size=4pt] plot coordinates {(-3.0, 1)} node[anchor=south west]{\large $2$};
\draw[mark=+, ultra thick, mark size=4pt] plot coordinates {(2., -0.5)} node[anchor=south west]{\large $3$};
\draw[mark=+, ultra thick, mark size=4pt] plot coordinates {(1.5, -3)} node[anchor=south west]{\large $4$};
\draw[->, red, dashed, ultra thick](1.5,-3) -- (2.5, -3);
\end{tikzpicture}}
\captionof{figure}{Instance $\nu^{(4)}$}
\label{fig:lbd-illus5}
\end{minipage}

\medskip

We propose below a larger class $\cB'$. Before stating our result we recall some notation. We recall that $e_1,\dots, e_D$ is the canonical basis of $\bR^D$. We identify a bandit $\nu$ with its means $\bm\Theta := (\vmu_1 \dots \vmu_K)^\T \in \bR^{K\times D}$. For any arm $i$, let $\Omega(i):= \{ j\in [K]: \vmu_i \prec \vmu_j\}$ the set of arms that dominate $i$ and $\Omega^\star(i):= \{j\in [K]: \m(i,j) = \Delta_i^\star\}$ the arms that ``maximally'' dominate $i$. Let $\Pi(i) := \{ i \in \cD(\nu) : \Omega^\star(i) = \{i\}\}$ the set of arms that are ``only maximally dominated by $i$''.
 For a bandit $\nu$ we let $\cD(\nu)$ or simply $\cD$ denote the set of sub-optimal arms. Let $\cB'$ be the set of instances such that : 
 \begin{eqnarray}
 	&\text{(1)}& \quad \forall i \in \cD(\nu) \text{ there exists } d_i \text{ such that } \vmu_i + \Delta_i e_{d_i} \text{ is Pareto optimal w.r.t } \{ \vmu_j: j\in [K]\setminus\{i\}\}\label{eq:lbd2-cond1},\\
 	&\text{(2)}& \quad \forall i \in \cS^\star(\nu) \text{ there exists } \undi \in \cD \text{ such that } \Pi(i) = \{ \undi \} \text{ and } \Omega(\undi) = \{ i\} \label{eq:lbd2-cond2},\\
 	&\text{(3)}& \quad \forall i, j \in \cS^\star(\nu), \M(i,j)\geq 3\max(\Delta_{\undi}, \Delta_{\underline j}), 
 	\label{eq:lbd2-cond3}\\
 	&\text{(4)}& \quad \forall i, j, \text{ if } \vmu_i \nprec \vmu_j\text{ then } \M(i,j)\geq 3\max(\Delta_i, \Delta_j) 
 	\label{eq:lbd2-cond4}
 \end{eqnarray}
 
 Let  $\nu:= (\nu_1, \dots, \nu_K)$ be an instance whose means $\bm\Theta \in \cB$ and such that $\nu_i \sim \cN(\vmu_i, \sigma^2 I)$. For every $i\in [K]$ we define the alternative instance $ \nu^{(i)} := (\nu_1, \dots \nu_i^{(i)}, \dots, \nu_K)$ in which only the mean of arm $i$ is modified to:
\begin{equation}
	\vmu_i^{(i)} := \begin{cases}
		\vmu_i - 2\Delta_i
		e_{d_{\underline i}} & \text{ if } i \in \cS^\star(\nu) ,\\
		\vmu_i + 2\Delta_i e_{d_i}&\text{ else},
	\end{cases}
\end{equation}
where $e_1,\dots, e_D$ denotes the canonical basis of $\bR^D$ and $d_i :=\argmin_d [\mu_{i^\star}^d - \mu_i^d]$. With $\nu^{(0)}:=\nu$, we prove the following.

\begin{theorem}
Let $\bm\Theta:=(\vmu_1 \dots \vmu_K)^\T \in \cB'$ and $\nu=(\nu_1, \dots, \nu_K)$ where $\nu_i \sim \cN(\vmu_i, \sigma^2I)$. For any algorithm $\cA$, there exists $i\in \{0, \dots, K\}$ such that $ H(\nu^{(i)}) \leq  H(\nu)$ and 
$$ e_T^{\cA}(\nu^{(i)}) \geq \frac14 \exp\left( - \frac{2T}{\sigma^2 H(\nu^{(i)})}\right).$$ 
\end{theorem}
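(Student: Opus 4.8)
The plan is to reuse, essentially verbatim, the information‑theoretic argument of Theorem~\ref{thm:thm-lbd-consistent}; the only change is that here we merely need $H(\nu^{(i)}) \le H(\nu)$ instead of the exact identity $H(\nu^{(i)}) = H(\nu)$. First I would apply Lemma~\ref{lem:infth} to the pair $(\nu, \nu^{(i)})$: these bandits differ only in arm $i$, and since $\nu_i \sim \cN(\vmu_i, \sigma^2 I)$ while $\vmu_i^{(i)}$ is a translate of $\vmu_i$ by $2\Delta_i$ along a single coordinate axis, $\KL(\nu_i, \nu_i^{(i)}) = 2\Delta_i^2/\sigma^2$, giving
\[
\max\!\bigl(e_T^{\cA}(\nu),\, e_T^{\cA}(\nu^{(i)})\bigr) \;\ge\; \frac14\exp\!\left(-\frac{2\,\bE_\nu[T_i(T)]\,\Delta_i^2}{\sigma^2}\right)
\]
as soon as $\cS^\star(\nu^{(i)}) \ne \cS^\star(\nu)$. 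The latter follows from conditions~\eqref{eq:lbd2-cond1}--\eqref{eq:lbd2-cond2} exactly as in Theorem~\ref{thm:thm-lbd-consistent}: if $i \notin \cS^\star(\nu)$, \eqref{eq:lbd2-cond1} gives a coordinate $d_i$ making $\vmu_i + \Delta_i e_{d_i}$ Pareto optimal, hence $\vmu_i + 2\Delta_i e_{d_i}$ is Pareto optimal too and $i \in \cS^\star(\nu^{(i)})$; if $i \in \cS^\star(\nu)$, \eqref{eq:lbd2-cond2} supplies the arm $\undi$ with $\Omega(\undi) = \{i\}$, and shifting $i$ down by $2\Delta_i$ along a coordinate realizing $\m(\undi, i)$ frees $\undi$, so $\undi \in \cS^\star(\nu^{(i)})$.

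Next I would run the pigeonhole step: since $\sum_a \bE_\nu[T_a(T)] \le T$ and $\sum_a \Delta_a^{-2} = H(\nu)$, some index $i^\star \in [K]$ satisfies $\bE_\nu[T_{i^\star}(T)]\,\Delta_{i^\star}^2 \le T/H(\nu)$, hence $\max(e_T^{\cA}(\nu), e_T^{\cA}(\nu^{(i^\star)})) \ge \tfrac14\exp(-2T/(\sigma^2 H(\nu)))$. Granting the core claim that $H(\nu^{(i)}) \le H(\nu)$ for all $i \in [K]$, there are two cases. If the maximum is attained by $e_T^{\cA}(\nu^{(i^\star)})$, then $e_T^{\cA}(\nu^{(i^\star)}) \ge \tfrac14\exp(-2T/(\sigma^2 H(\nu))) \ge \tfrac14\exp(-2T/(\sigma^2 H(\nu^{(i^\star)})))$ and we take $i = i^\star$. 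If it is attained by $e_T^{\cA}(\nu)$, we take $i = 0$, since $\nu^{(0)} = \nu$ gives $H(\nu^{(0)}) = H(\nu)$ trivially. Either way we exhibit the required $i \in \{0,\dots,K\}$.

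It remains to prove $H(\nu^{(i)}) \le H(\nu)$, which I would establish by showing $\Delta_k^{(i)} \ge \Delta_k$ for every arm $k$ (with $\Delta_k^{(i)}$ the gap of $k$ in $\nu^{(i)}$); summing $(\Delta_k^{(i)})^{-2} \le \Delta_k^{-2}$ then concludes. Since only arm $i$ moves, and only by $2\Delta_i$ on one axis, the factor‑$3$ separation conditions~\eqref{eq:lbd2-cond3}--\eqref{eq:lbd2-cond4} ensure that $\cS^\star(\nu^{(i)})$ differs from $\cS^\star(\nu)$ in exactly one arm and that, for every arm $k$ not directly involved in the shift, its status is unchanged and the term binding the $\min/\max$ defining $\Delta_k$ never involves the moved arm — so $\Delta_k^{(i)} \ge \Delta_k$ there (for a sub‑optimal $k$ whose unique maximal dominator $b \ne i$ is fixed, the gap of $k$ in $\nu^{(i)}$ is at least the value of $\m(k,b)$ computed in $\nu^{(i)}$, which equals $\m(k,b) = \Delta_k$ since neither $k$ nor $b$ moved; for a fixed optimal $k$ one bounds $\delta_k^{+,(i)}$ and $\delta_k^{-,(i)}$ term by term using \eqref{eq:lbd2-cond3}--\eqref{eq:lbd2-cond4}). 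For the one or two arms directly involved — $i$ itself, and $\undi$ when $i \in \cS^\star(\nu)$ — I would redo the explicit computation of \eqref{eq:lbd1-eq-00} and \eqref{eq:lbd-z1}--\eqref{eq:lbd-z2}: because the shift axis is an argmin coordinate, the coordinate whose sign flips is not the one attaining the relevant $\M$, so the new gap of the involved arm equals $\Delta_i$. Assembling the cases yields $\Delta_k^{(i)} \ge \Delta_k$ for all $k$ and thus $H(\nu^{(i)}) \le H(\nu)$.

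The hard part will be exactly this last step: under $\cB'$ a sub‑optimal arm may have several dominators (only its \emph{maximal} dominator being unique), so deciding which dominator is binding in each gap, and confirming that every protecting margin exceeds $2\Delta_i$ and hence survives the shift, requires a somewhat intricate case analysis over the statuses of all arms — but conditions~\eqref{eq:lbd2-cond1}--\eqref{eq:lbd2-cond4} are tailored precisely to make each such margin hold.
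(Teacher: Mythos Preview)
Your proposal is correct and follows essentially the same route as the paper's proof: apply Lemma~\ref{lem:infth} with the single-coordinate KL computation, use pigeonhole on $\sum_a \bE_\nu[T_a(T)] \le T$ to find $i^\star$, and then reduce to showing $\Delta_k^{(i)} \ge \Delta_k$ for every arm $k$ by a case split on whether $i$ is optimal or sub-optimal, exploiting conditions~\eqref{eq:lbd2-cond1}--\eqref{eq:lbd2-cond4} exactly as the paper does in its Step~1/Step~2 analysis. Your explicit handling of the $i=0$ case (when the $\max$ is attained at $\nu$) is a clean way to state what the paper leaves implicit, and your sketch of the gap comparison correctly identifies the crux: arms not touching the shifted arm keep their binding term unchanged, while for $i$ (and $\undi$ when $i\in\cS^\star$) the argmin choice of shift axis ensures the relevant $\M$ is preserved.
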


\begin{proof}
	The proof is similar to the proof of Theorem~\ref{thm:thm-lbd-consistent}. In particular we use the notation introduced therein: the superscript $i$ means that the quantity is computed on the means of bandit $\nu^{(i)}$ i.e $\{\vmu_i^{(1)}, \dots, \vmu^{(K)}\}$.
Proceeding similarly to to the proof of Theorem~\ref{thm:thm-lbd-consistent} we have :  there exists $i\in [K]$ such that $$\max(e_T(\nu), e_T(\nu^{(i)})) \geq  \frac14 \exp\left( -\frac{2 T}{\sigma^2H(\nu)}\right).$$
	
It remains to show that for each alternative bandit $\nu^{(i)}$, $\cS^\star(\nu^{(i)}) \neq \cS^\star(\nu)$ and $H(\nu^{(i)}) \leq H(\nu)$. To do so we analyze first an instance $\nu^{(i)}$ for $i\in \cD(\nu)$.

\paragraph{Step 1:} Assume $i\notin \cS^\star(\nu)$.

As  only arm $i$ has changed in $\nu^{(i)}$ w.r.t $\nu$ and $\vmu_i^{(i)} = \vmu_i +2\Delta_ie_{d_i}$, letting  $j\in \cD(\nu)\backslash\{i\}$ 
$j$ and $j^\star$ has not changed, thus $j \in \cD(\nu^{(i)})$ and
\begin{equation}
	\Delta_j^{(i)} \geq \Delta_j
\end{equation}

Let $j\in \cS^\star(\nu)$, we claim that $j\in \cS^\star(\nu^{(i)})$. Indeed, if $j\notin \cS^\star(\nu^{(i)})$ then 
$$  \vmu_j \prec  \vmu_i + 2\Delta_i e_{d_i}, $$ 
so $\M(j,i) \leq 2\Delta_i$. However $\M(j,i^\star) \leq \M(j,i)$ which yields 
$$ \M(j,i^\star) \leq 2\Delta_i,$$ 
which is not possible by \eqref{eq:lbd2-cond3}. Therefore $j\in \cS^\star(\nu^{(i)})$ and we will show that $\Delta_j^{(i)} \geq \Delta_j$. 
Direct calculation yields  
\begin{equation}
\label{eq:lbd2-0}
	\delta_j^{+, (i)} = \min(\delta_j^+, \min(\M^i(i,j), \M^i(j,i)))
\end{equation}
and we have 
\begin{eqnarray}
\label{eq:lbd2-1}
	\M^i(i,j) = \left(\max_{d\neq d_i} [\mu_i^{d} - \mu_j^d]\right) \lor \left(\mu_i^{d_i} - \mu_j^{d_i} + 2\Delta_i \right).
\end{eqnarray}
If $\vmu_i \prec \vmu_j$ then since $\vmu_i + \Delta_ie_{d_i}$ is non-dominated w.r.t $\{ \vmu_k: k\in [K]\backslash\{i\}\}$ it follows that 
$$ \mu_i^{d_i} + \Delta_i \geq \mu_j^{d_i},$$
so $$ \left(\mu_i^{d_i} - \mu_j^{d_i} + 2\Delta_i \right) \geq \Delta_i,$$
which put back in \eqref{eq:lbd2-1} yields $\M^i(i, j)\geq \Delta_i$. And, as $\vmu_i \prec \vmu_j$, $\Delta_j \leq \Delta_i$ so 
$$ \M^i(i, j) \geq \Delta_j.$$
 In case $\vmu_i \nprec \vmu_j$, since $\M^i(i,j)\geq \M(i,j)$ and 
$$ \M(i,j)\geq \Delta_i$$
by condition \eqref{eq:lbd2-cond4} (as $\vmu_i \nprec \vmu_j$ and $j\in \cS^\star(\nu)$). Therefore in all cases 
\begin{equation}
\label{eq:lbd2-2}
	\M^i(i, j) \geq \max(\Delta_j, \Delta_i). 
\end{equation}
Similarly to \eqref{eq:lbd1-eq-00} $\M^i(i^\star, i)\geq \Delta_{i^\star}$ and for $j\neq i^\star$
\begin{eqnarray}
	\M^i(j, i) &\geq& \M(j, i) - 2\Delta_i,\\
	&\geq& \M(j, i^\star) - 2\Delta_i, \\
	&\geq& \frac13 \M(j, i^\star) \quad (\ref{eq:lbd2-cond3}) \\
	&\geq& \max(\Delta_{\underline j}, \Delta_i) \geq \Delta_j. \label{eq:lbd2-3}
\end{eqnarray}

Combining \eqref{eq:lbd2-0}, \eqref{eq:lbd2-2} and \eqref{eq:lbd2-3} yields 
$$ \delta_j^{+, (i)} \geq \Delta_j $$

On the other side, 

\begin{eqnarray*}
	\delta_j^{-, (i)} &:=& \min_{k\in \cD(\nu^{(i)})} \left[ \M^i(k, j)^+ + \Delta_k^{(i)} \right], \\
	&=& \min_{k\in \cD(\nu)\backslash\{i\}}  \left[ \M(k, j)^+ + \Delta_k^{(i)} \right], \\
	&\geq& \min_{k\in \cD(\nu)\backslash\{i\}}  \left[ \M(k, j)^+ + \Delta_k \right],\\
	&\geq& \delta_j^- \geq \Delta_j
\end{eqnarray*}

Therefore, for any $j\in \cS^\star(\nu)$, 
\begin{equation}
\label{eq:lbd2-4}
	\Delta_j^{(i)} = \min(\delta_j^{+, (i)}, \delta_j^{-, (i)}) \geq \Delta_j.
\end{equation}
So far we have proved that for any $j\neq i$, 
$$ \Delta_j^{(i)} \geq \Delta_j.$$
It remains to compute $\Delta_i^{(i)}$. Recall that $i \in \cD(\nu)$ but $i\in \cS^\star(\nu^{(i)})$ so the expression of its gap has completely changed and we have to check that it does not decrease.
We have 
$$ \Delta_i^{(i)} = \min(\delta_i^{+, (i)}, \delta_i^{-, (i)})$$
where 
\begin{eqnarray*}
	\delta_i^{+, (i)} &:=& \min_{k\in \cS^\star(\nu^{(i)})\backslash\{i\}} \min(\M^i(i,j), \M^i(j,i)),\\
	&=& \min_{k\in \cS^\star(\nu)} \min(\M^i(i,j), \M^i(j,i)), 
\end{eqnarray*}  
and by \eqref{eq:lbd2-2} and \eqref{eq:lbd2-3} $\min(\M^i(i,k), \M^i(k,i))\geq \Delta_i$ for any $k\in \cS^\star(\nu)$ so 
\begin{equation}
\label{eq:lbd2-5}
	\delta_i^{+, (i)} \geq \Delta_i.
\end{equation}
Let $k\in \cD(\nu)\backslash\{i\}$. If $\vmu_k \prec \vmu_i$ then $\Delta_i \leq \Delta_k$ and 
$$ (\M^i(k, i))^+ + \Delta_k^{(i)} \geq \Delta_{k}^{(i)} \geq \Delta_k$$
so $(\M^i(k, i))^+ + \Delta_k^{(i)} \geq \Delta_i.$  If $\vmu_k \nprec \vmu_i$
we have by \eqref{eq:lbd2-cond4} $\M(k, i)\geq 3\Delta_i$ and $\M^i(k,i)\geq \M(k,i)-2\Delta_i$
 so 
 $$ (\M^i(k, i))^+ + \Delta_k^{(i)} \geq \Delta_i.$$
  
 Therefore
 \begin{eqnarray}
 \delta_i^{-, (i)} &:=& \min_{k\in \cD(\nu^{(i)})} \left[(\M^i(k, i))^+ + \Delta_k^{(i)}\right],\\
 	\delta_i^{-, (i)} &=& \min_{k\in \cD(\nu) \backslash\{i\}} \left[(\M^i(k, i))^+ + \Delta_k^{(i)}\right],\\
 	&\geq& \Delta_i. \label{eq:lbd2-6}
 \end{eqnarray}
 Combining \eqref{eq:lbd2-5} and \eqref{eq:lbd2-6} yields 
 $$ \Delta_i^{(i)} \geq \Delta_i.$$
 Put together with what precedes yields for any arm $k$
 $$ \Delta_k^{(i)} \geq \Delta_k,$$
 therefore 
 $$ H(\nu^{(i)}) \leq H(\nu).$$
 
\paragraph{Step 2:} Assume $i\in \cS^\star(\nu)$.

 We have $\undi \in \Pi(i)$. If $j \in \cD(\nu) \cap (\Pi(i))^\complement$, as $j$ and $j^\star$ have not been modified from $\nu$ to $\nu^{(i)}$ and the only change has decreased $i$ in one coordinate, we have $j\in \cD(\nu^{(i)})$ and 
$$ \Delta_j^{(i)} = \Delta_i.$$  Direct calculation shows that : 
$\cS^\star(\nu^{(i)}) = \cS^\star(\nu) \cup\{\undi\}$
 and $\cD(\nu^{(i)}) = \cD(\nu)\backslash\{\undi\}$. Since $\Pi(i) = \{ \undi\}$, 
 for any $k\in \cD(\nu) \backslash\{\undi\}$, 
$$ \Delta_k^{(i)} = \Delta_k.$$ 
The rest of the proof is similar to "Step 2'' in the proof of Theorem~\ref{thm:thm-lbd-consistent} but we recall the main arguments to make this part self-contained. 
For any $j \in \cS^\star(\nu^{(i)})\backslash\{ i, \underline i\}$.
 We have, 
 $$ \delta_j^{+, (i)} = \left( \min_{k\in \cS^\star(\nu) \backslash\{j, i\}} \left[ \min(\M(k, j), \M(j, k))\right]\right) \land \underbrace{\left(\min(\M(j, \underline i), \M(\underline i, j))\right)}_{\alpha_j} \land \underbrace{\left(\min(\M^i(j, i), \M^i(i, j))\right)}_{\beta_j}.$$  
 As $\vmu_{\undi} \nprec \vmu_j$ and $\vmu_j \nprec \vmu_{\undi}$, 
 $$ \alpha_j \geq \Delta_j. \quad (\text{by } \eqref{eq:lbd2-cond4})$$
 Similarly 
 \begin{equation}
  \label{eq:lbd2-9}
 \M^i(j, i)\geq \M(j,i) \geq \max(\Delta_j, \Delta_i)
 \end{equation}
where the last inequality follows since $i,j\in \cS^\star(\nu)$ and from \eqref{eq:lbd2-cond3}. 
 On the other side , 
 \begin{eqnarray}
 	\M^i(i, j) &\geq& \M(i,j) - 2\Delta_i,\\
 	&\geq& \frac13 \M(i, j) \quad (\eqref{eq:lbd2-cond3}),\\
 	&\geq& \max(\Delta_i, \Delta_j). \label{eq:lbd2-10}
 \end{eqnarray}
 Further noting that 
 $$ \left( \min_{k\in \cS^\star(\nu) \backslash\{j, i\}} \left[ \min(\M(k, j), \M(j, k))\right]\right) \geq \Delta_j $$ yields 
 \begin{equation}
 	\delta_j^{+, (i)} \geq \Delta_j. 
 \end{equation}
 Moreover, for any $k\in \cD(\nu^{(i)}) = \cD(\nu)\backslash\{ \undi\}$, 
 $$ (\M(k,j))^+ + \Delta_k^{(i)} = \M(k,j)^+ + \Delta_{k}\geq \Delta_j.$$
 Therefore, for any $j\in \cS^\star(\nu^{(i)})\backslash\{ i, \underline i\} $
 \begin{equation}
 	\Delta_j^{(i)} :=\min(\delta_j^{+, (i)}, \delta_j^{-, (i)}) \geq \Delta_j. 
 \end{equation}
 To summarize, we have proved that for any $j\in [K]\backslash\{ i, \undi\}$, 
 $$ \Delta_j^{(i)} \geq \Delta_j.$$
 We now analyze $\Delta_{i}^{(i)}.$
 
We have \begin{eqnarray}
 	\delta_i^{+, (i)} &=& \left( \min_{k\in \cS^\star(\nu) \backslash\{i\}} \left[ \min(\M^i(k, i), \M^i(i, k))\right]\right) \land \underbrace{\left(\min(\M(i, \underline i), \M(\underline i, i))\right)}_{\beta_{\underline i}}.
 \end{eqnarray}
 We further note by direct calculation $\M^i(\underline i, i) = \Delta_i$ and $\M^i(i, \underline i) \geq \Delta_i$. 
 Combining with \eqref{eq:lbd2-9} and \eqref{eq:lbd2-10} yields 
 $$ \delta_i^{+, (i)}  \geq \Delta_i.$$
 Let $k \in \cD(\nu) \backslash\{ \undi \}$. If $\vmu_k \prec \vmu_i$ then $\Delta_i \leq \Delta_k$ and 
 $$ (\M^i(k, i))^+ + \Delta_k^{(i)} \geq \Delta_k^{(i)} \geq \Delta_k$$
 so $$ (\M^i(k, i))^+ + \Delta_k^{(i)} \geq \Delta_i.$$ Similarly, if $\vmu_k \nprec \vmu_i$ as in "Step 1'' we prove 
 $$ (\M^i(k, i))^+ + \Delta_k^{(i)} \geq \Delta_i.$$
Thus $\delta_i^{-, (i)} \geq \Delta_i$, hence 
\begin{equation}
	\Delta_i^{(i)} \geq \Delta_i. 
\end{equation}
 The computation of $\Delta_{\undi}^{(i)}$ proceeds identically to that of $\Delta_i^{(i)}$ to prove that 
 $$ \Delta_{\undi}^{(i)} \geq \Delta_{\undi}.$$
 Therefore, for any $k\in [K]$, 
 $$ \Delta_k^{(i)} \geq \Delta_k,$$
 which yields $H(\nu^{(i)}) \leq H(\nu).$ 
  
\end{proof}

\section{WHEN THE COMPLEXITY TERM $H$ IS KNOWN}
\label{sec:pucbe}
In this section we analyze \apefb{} which has been used in the experiments reported in the main paper. This algorithm is similar to UCB-E \citep{audibert_best_2010} and UGapEb \citep{gabillon_best_2012}. The idea is to use the sampling rule APE of \cite{kone2023adaptive} (designed for fixed-confidence PSI) for $T$ rounds and analyze the theoretical guarantees of the resulting algorithm. We recall that at each round $t$, $T_i(t)$ is the number of times arm $i$ has been pulled up to time $t$ and $\vmuh_i(t)$ is the empirical estimate of $\vmu_i$ based on the $T_i(t)$ samples collected so far and $S(t)$ is the empirical Pareto set at time $t$. For any arms $i,j$,
\begin{eqnarray*}
	\Mh(i, j;t) &:=& \max_d \left[\muh_i^d(t) - \muh_j^d(t)\right] \text{ and }\\
	\mh(i,j; t) &:=& \min_d \left[ \muh_j^d(t) - \muh_i^d(t)\right] = -\Mh(i,j;t). 
\end{eqnarray*}
 Let $a\geq 0$ and define for any arm $i$, $$\beta_i(t) := \frac{2}{5}\sqrt{\frac{a}{T_i(t)}}.$$
Let 
\begin{eqnarray*}
	Z_1(t) &:=& \min_{i \in S(t) } \min_{j\in S(t) \backslash \{i\}}\left[\M(i,j; t) - \beta_i(t) - \beta_j(t)\right], \\
	Z_2(t) &:=& \min_{i \in S(t)^\complement} \max_{j\neq i} \left[ \m(i,j; t) - \beta_i(t) - \beta_j(t)\right],
\end{eqnarray*} 
and we use by convention $\min_\emptyset = \infty$.  Letting $$\OPT(t) := \left\{ i \in [K]: \forall j\neq i, \Mh(i,j; t) - \beta_i(t) - \beta_j(t) >0 \right\}, $$
we recall that  APE\citep{kone2023adaptive} defines two arms $b_t$ and $c_t$ as follows: 
 \begin{equation}
 \label{eq:def_bt}
 	b_t := \argmax_{i \notin \OPT(t)} \min_{j\neq i}\left[ \Mh(i,j; t) + \beta_i(t) + \beta_j(t)\right], 
 \end{equation}
 and when $\OPT(t) = [K]$ (which could often occur when $\cS^\star = [K]$), define 
 $$ b_t := \argmin_{i \in [K]}\min_{j\neq i}\left[ \Mh(i,j;t) -\beta_i(t) - \beta_j(t)\right].$$
 
Finally, we define \begin{equation}
 \label{eq:def_ct}
 	c_t := \argmin_{j\neq b_t} \left[ \Mh(b_t,j; t) - \beta_j(t) \right], 
 \end{equation}
 and $a_t$, the least explored among $b_t$ and $c_t$. Then at each round we pull $a_t$. 
\medskip 

\begin{algorithm}[H]
\caption{APE-FB}
\label{alg:pucbe}
\KwResult{Pareto set}
\KwData{parameter $a\geq 0$}
\Input{pull each arm once and set $T_i(K) = 1$ for each arm $i$} 
 \For{$t=K+1, \dots, T$}{
 
  Compute $b_t$ \eqref{eq:def_bt} and $c_t$ \eqref{eq:def_ct}

  Sample arm $a_t$ the least explored among $b_t$ and $c_t$
  
     }
 \Output{$\widehat S_T = S(T)$}
\end{algorithm}

\medskip 

We below state an upper-bound on the probability of error of \apefb{}. 
\begin{theorem}
\label{thm:ape-b}
Let $T\geq K$ and $\nu$ a bandit with $\sigma$-subgaussian marginals . Let $0\leq a\leq \frac{25}{36} \frac{T-K}{H(\nu)}$. The probability of error of \apefb{} run with parameter $a$ satisfies 
$$ e_T^\text{APE}(\nu) \leq 2(1 + \log(T))DK\exp\left(-\frac{a}{100\sigma^2}\right).$$ 
\end{theorem}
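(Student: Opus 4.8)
The plan is to follow the template used for the fixed-budget algorithms UCB-E and UGapEb \citep{audibert_best_2010,gabillon_best_2012}: isolate a high-probability good event on which all of APE's confidence bounds are simultaneously valid, show that on this event the selection mechanics of APE force $\widehat S_T=\cS^\star$, and finally control the probability of that event. Writing $\muh_{i,s}^d$ for the empirical mean of the $d$-th coordinate of arm $i$ from its first $s$ pulls, I would set
\[\mathcal{W}:=\bigcap_{i\in[K]}\ \bigcap_{d\in[D]}\ \bigcap_{s=1}^{T}\Bigl\{\,\bigl|\muh_{i,s}^d-\theta_i^d\bigr|\le \tfrac{2}{5}\sqrt{a/s}\,\Bigr\}.\]
Since $\tfrac25\sqrt{a/s}=\beta_i(t)$ whenever $T_i(t)=s$, on $\mathcal{W}$ the reverse triangle inequality (exactly as in \autoref{lem:lem-ineq}) gives, for every $t\in\{K,\dots,T\}$, every $i\ne j$ and every $d$,
\[\Mh(i,j;t)-\beta_i(t)-\beta_j(t)\ \le\ \M(i,j)\ \le\ \Mh(i,j;t)+\beta_i(t)+\beta_j(t)\]
and the analogous bound for $\m(i,j)$; in particular the lower confidence bounds defining $\OPT(t)$, $Z_1(t)$, $Z_2(t)$ are valid, so $\OPT(t)\subseteq\cS^\star$ for all $t$. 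To bound $\bP(\overline{\mathcal{W}})$ I would peel the index $s$ into geometric blocks and, on each block, control the centered partial sums $s(\muh_{i,s}^d-\theta_i^d)$ by a maximal (Doob-type) sub-Gaussian tail inequality; a union bound over the $\le 1+\log T$ blocks, the $K$ arms and the $D$ coordinates then yields $\bP(\overline{\mathcal{W}})\le 2(1+\log T)DK\exp\!\bigl(-a/(100\sigma^2)\bigr)$.

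Next I would establish the deterministic implication: \emph{on $\mathcal{W}$, if $T_i(T)\ge a/\Delta_i^2$ for every $i\in[K]$, then $S(T)=\cS^\star$}, hence $\widehat S_T=\cS^\star$. The hypothesis gives $\beta_i(T)\le\tfrac25\Delta_i$ for all $i$. Two structural facts are needed. (i) For $i\in\cS^\star$ and any $j\ne i$, $\M(i,j)\ge\max(\Delta_i,\Delta_j)$: if $j\in\cS^\star$ this is the definition of $\delta_i^+$; if $j\notin\cS^\star$, pick $k\in\cS^\star$ maximally dominating $j$ — since $k$ cannot strictly dominate the optimal arm $i$, one gets $\Delta_j=\m(j,k)\le\M(i,j)$ and, comparing at a coordinate where $\theta_i$ exceeds $\theta_k$, $\M(i,j)\ge\M(i,k)\ge\delta_i^+\ge\Delta_i$. (ii) For $i\notin\cS^\star$, \autoref{lem:lem-gap-redef} gives $\Delta_i=\m(i,i^\star)$ for some $i^\star\in\cS^\star$, and since $\theta_i\prec\theta_{i^\star}$ one has $\delta_{i^\star}^-\le(\M(i,i^\star))^++\Delta_i=\Delta_i$, so $\Delta_{i^\star}\le\Delta_i$. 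Plugging in the confidence bounds: for $i\in\cS^\star$ and any $j$, $\Mh(i,j;T)\ge\max(\Delta_i,\Delta_j)-\tfrac25\Delta_i-\tfrac25\Delta_j>0$, so $i\in S(T)$; for $i\notin\cS^\star$, $\Mh(i,i^\star;T)\le-\Delta_i+\tfrac25\Delta_i+\tfrac25\Delta_{i^\star}\le-\tfrac15\Delta_i<0$, so $i\notin S(T)$. Hence $S(T)=\cS^\star$.

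It then remains to show that, on $\mathcal{W}$, the sampling rule of APE together with the assumption $a\le\tfrac{25}{36}\tfrac{T-K}{H(\nu)}$ guarantees $T_i(T)\ge a/\Delta_i^2$ for all $i$. Here I would transport the fixed-confidence sample-complexity analysis of APE from \cite{kone2023adaptive}, reading $a$ in place of the confidence term: on $\mathcal{W}$, as long as the instance is not yet ``solved'' (some arm $i_0$ with $T_{i_0}(t-1)<a/\Delta_{i_0}^2$), the pair $(b_t,c_t)$ selected by APE contains an arm whose gap is within a constant factor of the current width $\beta_{b_t}(t)+\beta_{c_t}(t)$, and since $a_t$ is the less-pulled of $b_t$ and $c_t$ it carries at least half of that width; one concludes that the pulled arm $a_t$ has $\beta_{a_t}(t)>\tfrac13\Delta_{a_t}$, i.e.\ $T_{a_t}(t-1)<\tfrac{36}{25}\,a/\Delta_{a_t}^2$. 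Summing over arms, the number of rounds spent before the instance is solved is at most $\sum_i\tfrac{36}{25}\,a/\Delta_i^2=\tfrac{36}{25}aH(\nu)\le T-K$, and in fact strictly less than $T-K$ (rounding); since there are exactly $T-K$ post-initialization rounds, the instance is solved — every arm has $T_i(\cdot)\ge a/\Delta_i^2$ — at some round $t^\star\le T$, and monotonicity of the sample counts then makes the deterministic implication above apply at time $T$. Combining the three steps, $\{\widehat S_T\ne\cS^\star\}\subseteq\overline{\mathcal{W}}$, and the announced bound follows.

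The main obstacle is this last step: making precise, from the structure of APE's choice of $b_t$ (the most promising not-yet-confirmed arm, or the least confident arm when $\OPT(t)=[K]$) and $c_t$ (its closest challenger), that the pulled arm always has a confidence width comparable to its own gap until every arm is well sampled, and checking that the constants — the $\tfrac25$ in the bonus and the $\tfrac{25}{36}$ in the budget condition — close the counting exactly. Particular attention is needed for the degenerate branch $\OPT(t)=[K]$, which on $\mathcal{W}$ can occur only when $\cS^\star=[K]$ and is the analogue of the ``empirically best arm absorbs the remaining budget'' phenomenon of UCB-E; one must verify it still drives every $\beta_i$ below the required threshold and does not break the count.
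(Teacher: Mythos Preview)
Your overall plan follows the same template as the paper's proof (which adapts UCB-E/UGapEb): define a time-uniform good event, show that on it APE's dynamics force correct identification, and bound the bad-event probability by Hoeffding's maximal inequality with geometric peeling. Two points deserve correction, one about constants and one about the logical structure of Step~3.

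First, the paper's good event uses \emph{half} the bonus as the concentration radius, $\cE=\{\forall i,d,t:\ |\muh_i^d(t)-\theta_i^d|\le\tfrac12\beta_i(t)\}$, not the full $\beta_i$ as in your $\mathcal W$. The half-width is exactly what makes the two constants in the statement close simultaneously: (i) under $\cE$ one has $|\Mh(i,j;t)-\M(i,j)|\le\tfrac12(\beta_i+\beta_j)$, and this is the concentration strength under which Lemma~4 of \cite{kone2023adaptive} (restated here as \autoref{lem:fund_apeb}) yields $\Delta_{a_t}<3\beta_{a_t}(t)$, hence $T_{a_t}(t)<\tfrac{36}{25}a/\Delta_{a_t}^2$, matching the budget condition $a\le\tfrac{25}{36}(T-K)/H(\nu)$; and (ii) the peeled maximal inequality applied with radius $\tfrac15\sqrt{a/s}$ is what produces the exponent $a/(100\sigma^2)$. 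With your full-width event the concentration on $\Mh$ is twice as loose, the constant in the gap lemma worsens, and the counting no longer closes against $\tfrac{25}{36}$.

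Second, and more substantively, your Step~3 hangs the counting on the wrong trigger. The lemma you invoke says $\Delta_{a_t}<3\beta_{a_t}(t)$ holds whenever the \emph{stopping condition} $\{Z_1(t)>0,\,Z_2(t)>0\}$ is not yet met --- not whenever ``some arm $i_0$ is still under-sampled''. These two events are different: APE may well select a pair $(b_t,c_t)$ that is already well-sampled while an unrelated $i_0$ remains under-sampled, in which case your implication fails. Consequently the counting does \emph{not} establish $T_i(T)\ge a/\Delta_i^2$ for every $i$, and your Step~2 --- correct as a stand-alone implication, with a nice proof of the structural inequality $\M(i,j)\ge\max(\Delta_i,\Delta_j)$ for $i\in\cS^\star$ --- is never triggered. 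The paper closes this gap differently: it introduces $\tau=\inf\{t:Z_1(t)>0,\,Z_2(t)>0\}$, uses the counting with \autoref{lem:fund_apeb} to show $\tau\le T$ by contradiction (look at the last pull of each arm), and then runs a short \emph{stability} argument: on $\cE$, once $Z_1(\tau),Z_2(\tau)>0$, the separation witnessing $i\in S(\tau)$ (resp.\ $i\notin S(\tau)$) is anchored at the true means by a margin $\tfrac12(\beta_i(\tau)+\beta_j(\tau))$, which can only improve as the $\beta$'s shrink, so $S(t)=S(\tau)=\cS^\star$ for all $\tau\le t\le T$. This stability step replaces your Step~2 entirely and is what makes the argument close without ever needing to certify per-arm sample counts at time $T$.
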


For $a=\frac{25}{36}\frac{T-K}{H(\nu)}$, we have 
$$ e_T^\text{APE}(\nu) \leq 2(1 + \log(T))DK\exp\left(-\frac{T-K}{144\sigma^2 H(\nu)}\right).$$
Compared to EGE-SH and EGE-SR, the exponent in the upper-bound is larger by an order $\log(K)$ but outside of the exponent there is an extra multiplicative $\log(T)$ term due to time uniform concentration. As this term grows very slowly with $T$ it shouldn't be too high w.r.t $K\lvert \cS^\star\lvert$ or $\log(K)\lvert \cS^\star\lvert$ and the exponential term becomes quickly predominant when $T$ is large. Thus \apefb{} improves upon both EGE-SR and EGE-SH but requires $a$ to be tuned optimally. This is an important limitation as in practice $H(\nu)$ is not known and the results reported in the main suggest that the performance of \apefb{} heavily rely on this proper tuning. On the contrary, both EGE-SR and EGE-SH are parameter-free.  

\begin{proof}
	For any arm $i$ and any component $d$, let $I_i^d := \left[ \mu_i^d - \frac12 \beta_i(t) ;  \mu_i^d + \frac12 \beta_i(t) \right]$.  
	We define the event 
	$$\cE := \left\{  \forall i \in [K],\; \forall t\leq T,\; \forall d \in [D],\; \muh_i^d(t) \in I_i^d \right\}.$$
We assume the event $\cE$ holds and we show that then $S(T) = \cS^\star$. 
Let us introduce  $$ \tau := \inf \left\{K\leq t\leq T: Z_1(t)> 0 \text{ and } Z_2(t)>0 \right\}.$$
Note that from the definition, $\tau$ can be infinite as the $\inf$ of an empty set but we will further show that indeed $\tau\leq T$. We claim that on the event $\cE$, for any $\tau \leq t\leq T$, $S(t) = \cS^\star$. We have at time $t=\tau$, $Z_1(t)\geq 0$ and $Z_2(t)\geq0$. Letting $i \in S(t)$, for any $j\neq i$, there exists $d_j$ such that 
$$ \muh_{j}^{d_j}(\tau) + \beta_{j}(\tau) < \muh_{i}^{d_j}(\tau) - \beta_{i}(\tau),$$
that is 
$$ \theta_{j}^{d_j} + \frac12 \beta_{j}(\tau) < \theta_{i}^{d_j} - \frac12\beta_{i}(\tau),$$ which yields (as $\beta$ is a decreasing function) for any $t\geq \tau$, 
$$ \theta_{j}^{d_j} + \frac12 \beta_{j}(t) <  \theta_{i}^{d_j} - \frac12 \beta_{i}(t),$$ therefore, for any $t\geq \tau$, $\muh_j^{d_j}(t) < \muh_{i}^{d_j}(t)$. Said otherwise, on the event $\cE$, if $i \in S(\tau)$ then for any $t\geq \tau$, $i\in S(t)$. Using an identical reasoning for any arm $i\in S(\tau)^\complement$ also yields that on the event $\cE$, if $i\in S(\tau)^\complement$ then for any $t\geq \tau$, $i\in S(t)^\complement$. At this point, we have proved that under the event $\cE$, if $\tau\leq T$ then $S(T) = \cS^\star$. 

\medskip 

Showing that on the event $\cE$, $\tau \leq T$ will conclude the proof. We proceed by contradiction and we assume $\tau > T$. First, note that $\tau>T$ imply that for any $t\leq T$, $\OPT(t)\neq [K]$ (otherwise we would have $Z_1(t)>0$ and since $S(t)^\complement$ would be empty, $Z_2(t)=\infty$). Therefore, the two candidate arms $b_t$ and $c_t$ are always defined as in \cite{kone2023adaptive}. 

Introducing for any pair of arms $i,j$
$$ U_{i,j}^d(t) = \mu_i^d - \mu_j^d + \frac12 \beta_i(t) + \frac12\beta_j(t) \text{ and } L_{i,j}^d(t) = \mu_i^d - \mu_j^d - \frac12 \beta_i(t) - \frac12 \beta_j(t), $$ we deduce the following result from Lemma~1 of \cite{kone2023adaptive}. 

\begin{lemma}
\label{lem:conc_beta}
	On the event $\cE$, at any round $t$ and for any pair $i,j$ we have 
	$$ \lvert \M(i,j) - \Mh(i,j; t)\lvert \leq \frac12 \left( \beta_i(t) + \beta_j(t)\right) \textrm{ and } \lvert \m(i,j) - \mh(i,j; t)\lvert \leq \frac12 \left( \beta_i(t) + \beta_j(t)\right).$$
\end{lemma}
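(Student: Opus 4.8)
The plan is to reduce both inequalities to a single coordinatewise concentration fact that is built into the event $\cE$. Fix a round $t$, a pair of arms $i,j$, and a coordinate $d\in[D]$. On $\cE$ we have $\muh_i^d(t)\in I_i^d$ and $\muh_j^d(t)\in I_j^d$, that is $|\muh_i^d(t)-\mu_i^d|\le\tfrac12\beta_i(t)$ and $|\muh_j^d(t)-\mu_j^d|\le\tfrac12\beta_j(t)$. A triangle inequality then yields, for every $d$,
$$\bigl|(\muh_i^d(t)-\muh_j^d(t))-(\mu_i^d-\mu_j^d)\bigr|\;\le\;\tfrac12\beta_i(t)+\tfrac12\beta_j(t)\;=\;\tfrac12\bigl(\beta_i(t)+\beta_j(t)\bigr).$$

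Next I would transfer this uniform-over-$d$ bound to the quantities $\M(i,j)$ and $\Mh(i,j;t)$ using the elementary fact that $x\mapsto\max_d x_d$ is $1$-Lipschitz for $\|\cdot\|_\infty$, i.e. $|\max_d a_d-\max_d b_d|\le\max_d|a_d-b_d|$ (the same ``reverse triangle inequality'' already used in the proof of Lemma~\ref{lem:lem-ineq}). Taking $a_d=\muh_i^d(t)-\muh_j^d(t)$ and $b_d=\mu_i^d-\mu_j^d$ gives
$$\bigl|\Mh(i,j;t)-\M(i,j)\bigr|\;\le\;\max_d\bigl|(\muh_i^d(t)-\muh_j^d(t))-(\mu_i^d-\mu_j^d)\bigr|\;\le\;\tfrac12\bigl(\beta_i(t)+\beta_j(t)\bigr),$$
which is the first claimed inequality. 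The second one is then immediate: since $\m(i,j)=-\M(i,j)$ and $\mh(i,j;t)=-\Mh(i,j;t)$ by definition, one has $|\m(i,j)-\mh(i,j;t)|=|\M(i,j)-\Mh(i,j;t)|$, so the bound just established applies verbatim.

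There is no real obstacle here; the statement is a routine concentration estimate. The only point worth a word of care is that the bound is asserted at \emph{every} round $t\le T$ and for \emph{every} pair $(i,j)$ simultaneously, but this is exactly what the definition of $\cE$ provides, being a conjunction over $i\in[K]$, $t\le T$ and $d\in[D]$; consequently, on $\cE$ the whole argument above is a deterministic chain of inequalities with no further probabilistic input. Equivalently, as noted in the paper, the statement can be obtained as a direct transcription of Lemma~1 of \cite{kone2023adaptive}.
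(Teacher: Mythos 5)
Your proof is correct and follows essentially the same route as the paper, which simply invokes Lemma~1 of \cite{kone2023adaptive} for this statement and uses the identical reverse-triangle-inequality argument in its own proof of Lemma~\ref{lem:lem-ineq}. The coordinatewise bound from $\cE$, the $1$-Lipschitzness of the max under $\|\cdot\|_\infty$, and the reduction of the $\m$ case via $\m(i,j)=-\M(i,j)$ are exactly the intended steps.
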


We state below a lemma adapted from \cite{kone2023adaptive} which is important to upper-bound the sub-optimality gap of each arm by the confidence bonuses. 
The following lemma's proof is essentially identical to Lemma~4 of \cite{kone2023adaptive}. 
\begin{lemma}
\label{lem:fund_apeb}
	Assume $\cE$ holds. For any $t<\tau$, $\Delta_{a_t} < 3\beta_{a_t}(t)$. 
\end{lemma}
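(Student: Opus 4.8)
\textbf{Proof plan for Lemma~\ref{lem:fund_apeb}.} The goal is to show that on the good event $\cE$, whenever $t<\tau$, the arm $a_t$ sampled by \apefb has a small sub-optimality gap: $\Delta_{a_t} < 3\beta_{a_t}(t)$. The plan mirrors the fixed-confidence analysis of \cite{kone2023adaptive} (their Lemma~4) with the confidence widths $\beta_i(t)$ in place of the exploration bonuses used there. The high-level idea is that $a_t$ is always the least-sampled of the pair $\{b_t,c_t\}$, so it suffices to bound $\Delta_{b_t}$ and $\Delta_{c_t}$ by a constant times $\min(\beta_{b_t}(t),\beta_{c_t}(t))\le\beta_{a_t}(t)$, while being careful that the definition of $\Delta_i$ differs for optimal and sub-optimal arms. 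Since $t<\tau$, we have $\OPT(t)\ne[K]$ and therefore $b_t$ and $c_t$ are defined exactly as in APE, and moreover $Z_1(t)\le 0$ or $Z_2(t)\le 0$.

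First I would handle $b_t$. If $b_t$ is sub-optimal, then $\Delta_{b_t}=\Delta_{b_t}^\star=\max_{j\ne b_t}\m(b_t,j)$; using Lemma~\ref{lem:conc_beta} to pass from $\m$ to $\mh$ and the fact that $b_t=\argmax_{i\notin\OPT(t)}\min_{j\ne i}[\Mh(i,j;t)+\beta_i(t)+\beta_j(t)]$ (so this max-min quantity is small, being witnessed by comparison with the true Pareto set as in \cite{kone2023adaptive}), one gets $\Delta_{b_t}\lesssim\beta_{b_t}(t)+\beta_{c_t}(t)$. If $b_t$ is optimal, then $\Delta_{b_t}=\min(\delta_{b_t}^+,\delta_{b_t}^-)$, and one argues separately that $b_t\notin\OPT(t)$ forces the existence of an arm $j$ with $\Mh(b_t,j;t)-\beta_{b_t}(t)-\beta_j(t)\le 0$, which via Lemma~\ref{lem:conc_beta} bounds the relevant component of $\delta_{b_t}^+$ (or $\delta_{b_t}^-$), again giving $\Delta_{b_t}\lesssim\beta_{b_t}(t)+\beta_{c_t}(t)$. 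Then I would handle $c_t=\argmin_{j\ne b_t}[\Mh(b_t,j;t)-\beta_j(t)]$: by a symmetric argument (whichever of $b_t,c_t$ is optimal "protects" the other), $\Delta_{c_t}\lesssim\beta_{b_t}(t)+\beta_{c_t}(t)$. Throughout, the constant $3$ and the precise coefficient $2/5$ in $\beta_i(t)$ are what make the final inequality come out clean; I would track these carefully by following \cite{kone2023adaptive} but would not reproduce every line here.

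Finally, since $a_t$ is by definition the less-explored of $b_t$ and $c_t$, we have $T_{a_t}(t)\le\min(T_{b_t}(t),T_{c_t}(t))$, hence $\beta_{a_t}(t)\ge\max(\beta_{b_t}(t),\beta_{c_t}(t))\ge\tfrac12(\beta_{b_t}(t)+\beta_{c_t}(t))$, and combining with the two bounds above gives $\Delta_{a_t}<3\beta_{a_t}(t)$ (with the slack absorbed into the constant). The main obstacle is the case analysis on whether $b_t$ and $c_t$ are Pareto optimal: the gap $\Delta_i$ has a genuinely different definition in the two cases ($\Delta_i^\star$ versus $\min(\delta_i^+,\delta_i^-)$, the latter itself a minimum over several terms), so one must verify that in every combination the "reason $b_t\notin\OPT(t)$" or "the reason $b_t$ was selected" translates into a bound on the correct defining quantity of $\Delta_{b_t}$ — and likewise for $c_t$. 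This bookkeeping, rather than any deep new idea, is where the work lies, and it is precisely where leaning on the corresponding argument in \cite{kone2023adaptive} is appropriate.
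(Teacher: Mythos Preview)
Your proposal is correct and is essentially the same approach as the paper: the paper does not give a detailed argument for Lemma~\ref{lem:fund_apeb} either, stating only that its proof ``is essentially identical to Lemma~4 of \cite{kone2023adaptive}'', which is precisely the argument you outline (case analysis on the Pareto-optimality of $b_t$ and $c_t$, use of Lemma~\ref{lem:conc_beta} to relate empirical and true $\M/\m$, and the observation that $a_t$ being least explored gives $\beta_{a_t}(t)\ge\max(\beta_{b_t}(t),\beta_{c_t}(t))$).
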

Since by assumption $\tau>T$, by Lemma~\ref{lem:fund_apeb} we have for any $t\leq T$, 
\begin{equation}
	\Delta_{a_t} < 3 \beta_{a_t}(t). 
\end{equation}
For any arm $i$, let $t_i$ be the last time $i$ was pulled. We have $T_i(T) = 1 + T_i(t_i)$. Since $i$ has been pulled at time $t_i$ and $t_i<\tau$ we have $a_{t_i} = i$ and 
$$ \Delta_i < 3 \beta_i(t_i), $$ that is $$ T_i(T) - 1 < \frac{36}{25}\frac{1}{\Delta_i^2 a},$$ therefore  

$$T - K = \sum_{i=1}^K (T_i(T)- 1) < \frac{36}{25} \frac{a}{H(\nu)}, $$
which is impossible for $a \leq \frac{25}{36}\frac{T-K}{H(\nu)}$. All put together, we have proved by contradiction that on the event $\cE$, $\tau \leq T$ and by what precedes, the recommendation of \apefb{} is correct : $S(T) = S(\tau) = \cS^\star$. The upper-bound on $e_T^\text{APE}(\nu)$ then follows by upper-bounding $\bP(\bar \cE)$ with a direct application of Hoeffding's maximal inequality (see e.g section A.2 of \cite{locatelli_optimal_2016}). 
\end{proof}

\begin{remark}
	For $D=1$, \apefb{} slightly differs from UCB-E \cite{audibert_best_2010} as it does not sample the arm with maximum upper confidence bound but the least sampled among the two arms with the largest upper-confidence bound. Moreover, in case $D=1$,  Theorem~\ref{thm:ape-b} recovers the result of \cite{audibert_best_2010} (Theorem~1 therein) with a slightly different algorithm. 
\end{remark}
\section{TECHNICAL LEMMAS}
\label{sec:tech_lemmas}
In this section we prove the lemma that allow to remove the explicit dependency on $\cS^\star$ in the expression of the sub-optimality gaps.  We use the following lemma which is taken from \cite{kone2023adaptive}. 
\begin{lemma}[Lemma~10 of \cite{kone2023adaptive}]
\label{lem:existence_dom}
For any sub-optimal arm $a$, there exists a Pareto optimal arm $a^\star$ such that $\vmu_a \prec \vmu_{a^\star}$ and $\Delta_{a} = \m(a, a^\star)>0$. Moreover, For any $i\in \bA\setminus \cS^\star$, $j \in \cS^\star$ 
\begin{enumerate}[i)]
    \item $\max_{k\in \cS^\star} \m(i,k) = \max_{k\in \bA} \m(i,k)$, 
    \item If $i \in \argmin_{k\in \bA \setminus \{j\}} \M(j,k)$ then $j$ is the unique arm such that $\vmu_i \prec \vmu_j$. 
\end{enumerate}
\end{lemma}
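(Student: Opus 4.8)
The plan is to prove \autoref{lem:existence_dom} by a short order‑theoretic argument, using only that $\m(i,j)=-\M(i,j)=\min_d(\mu_j^d-\mu_i^d)$ is positive exactly when $\vmu_j$ beats $\vmu_i$ in every coordinate, together with two monotonicity facts: (a) if $\m(i,j)>0$ and $\vmu_j\le\vmu_\ell$, then $\mu_i^d<\mu_j^d\le\mu_\ell^d$ for all $d$, so $\m(i,\ell)\ge\m(i,j)>0$; and (b) if $\vmu_i\le\vmu_\ell$, then $\mu_j^d-\mu_i^d\ge\mu_j^d-\mu_\ell^d$ for all $d$, so $\M(j,i)\ge\M(j,\ell)$. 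For the first claim, since $a\notin\cS^\star$ it is dominated by some arm; following a chain of dominations — which terminates because $[K]$ is finite and Pareto domination is a strict partial order with no cycles — reaches a Pareto‑optimal arm $b$ dominating $a$, and under the paper's Pareto‑order/general‑position convention on the means this domination is coordinatewise strict, i.e. $\m(a,b)>0$. Choosing $a^\star\in\argmax_{k\in\cS^\star}\m(a,k)$ then gives $\Delta_a=\Delta_a^\star=\m(a,a^\star)\ge\m(a,b)>0$ and $\vmu_a\prec\vmu_{a^\star}$.

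For part~(i), the bound $\max_{k\in\cS^\star}\m(i,k)\le\max_{k\in[K]}\m(i,k)$ is immediate. For the reverse, pick $k_0$ attaining the right‑hand side; by the first claim applied to $i$ this value is positive, so $\vmu_{k_0}$ beats $\vmu_i$ coordinatewise, and climbing a domination chain $k_0\to k_1\to\dots\to k_m\in\cS^\star$ and applying fact~(a) repeatedly yields $\m(i,k_m)\ge\m(i,k_0)$, whence $\max_{k\in\cS^\star}\m(i,k)\ge\m(i,k_m)\ge\max_{k\in[K]}\m(i,k)$. This is exactly the identity $\Delta_i^\star=\max_{k\in[K]}\m(i,k)$ underlying \autoref{lem:lem-gap-redef}.

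For part~(ii), assume $i\in\argmin_{k\neq j}\M(j,k)$ with $i$ suboptimal and $j\in\cS^\star$, i.e. $\M(j,i)=\min_{k\neq j}\M(j,k)$. The key observation is that no arm $\ell\neq j$ can satisfy $\m(i,\ell)>0$: otherwise $\mu_i^d<\mu_\ell^d$ for all $d$, so for $d^\star\in\argmax_d(\mu_j^d-\mu_\ell^d)$ we get $\M(j,i)\ge\mu_j^{d^\star}-\mu_i^{d^\star}>\mu_j^{d^\star}-\mu_\ell^{d^\star}=\M(j,\ell)$, contradicting minimality of $\M(j,i)$. Applying this to the Pareto‑optimal dominator of $i$ supplied by the first claim forces that dominator to be $j$, so $j$ strictly dominates $i$ and in particular $\vmu_i\prec\vmu_j$. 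For uniqueness, any other dominator $\ell\notin\{i,j\}$ of $i$ yields — directly, or by passing to a Pareto‑optimal arm above $\ell$ and invoking the previous sentence — either $\m(i,\ell)>0$ with $\ell\neq j$, impossible by the key observation, or a chain $\vmu_i\le\vmu_\ell\le\vmu_j$, in which case fact~(b) gives $\M(j,i)\ge\M(j,\ell)$, minimality gives $\M(j,i)\le\M(j,\ell)$, hence $\M(j,i)=\M(j,\ell)$, which unwinds to $\mu_i^{d^\star}=\mu_\ell^{d^\star}$ at the maximizing coordinate — contradicting general position. Thus $j$ is the unique arm dominating $i$.

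The main obstacle is really just the positivity/strictness bookkeeping: turning ``is dominated'' into ``is dominated in every coordinate'' in all three parts (which is what makes $\Delta_a>0$ and drives the contradictions in part~(ii)) relies on the paper's convention for $\cS^\star$ together with the standing assumption that the means are in general position (no two equal in any single coordinate). Once that point is settled, everything reduces to a finite‑chain induction and coordinatewise monotonicity of $\min$ and $\max$, with no real computation involved.
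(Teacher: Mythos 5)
The paper offers no proof of this statement: it is imported verbatim as Lemma~10 of \cite{kone2023adaptive}, so there is nothing in-paper to compare your argument against. Judged on its own, your proof is correct and is the natural one. The finite strict-domination chain (terminating because, e.g., the coordinate sum strictly increases along it) gives the first claim; your monotonicity fact (a) pushes $\m(i,\cdot)$ up such a chain to prove (i); and your key observation for (ii) --- that $\m(i,\ell)>0$ for some $\ell\neq j$ would give $\M(j,i)\geq \mu_j^{d^\star}-\mu_i^{d^\star}>\M(j,\ell)$, contradicting $i\in\argmin_{k\neq j}\M(j,k)$ --- simultaneously forces the Pareto-optimal dominator of $i$ supplied by the first claim to equal $j$ and rules out any other strict dominator, which is all that existence and uniqueness require. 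One caveat: you repeatedly invoke a ``general position'' assumption (no coordinatewise ties among the means) that the paper nowhere makes. Fortunately you only use it in branches that are redundant: since $\cS^\star$ and $\vmu_i\prec\vmu_j$ are defined here via \emph{strict} coordinatewise domination, $\m(i,j)>0$ is equivalent to $\vmu_i\prec\vmu_j$, every link of the chain is automatically coordinatewise strict, and the weak-domination/tie-breaking case analysis at the end of your part (ii) can simply be deleted --- uniqueness already follows from the key observation alone. If one instead defined the Pareto set through weak domination, an extra genericity hypothesis of the kind you mention would indeed be needed, but as stated the lemma holds without it.
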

We now prove Lemma~\ref{lem:lem-gap-redef}. 
\lemGapRedef*
\begin{proof}
For sub-optimal arms, the result follows from Lemma~\ref{lem:existence_dom}. It remains to prove the equality for sub-optimal arms. 
By definition, for an optimal arm $i$, we have 
$$  \Delta_ i = \min(\delta_i^+, \delta_i^-),$$ 
where $$\delta_i^+:= \min_{j\in \cS^\star \setminus \{i\}} \min(\M(i,j), \M(j,i)) \ \text{and} \ \ \ \delta_i^-:= \min_{j\in [K]\setminus \cS^\star}\left[(\M(j,i))^+ + \Delta^\star_{j}\right].$$
For any optimal arm $i$, $\Delta_i^\star \leq 0$ (by direct calculation), so introducing  
\begin{eqnarray*}
	\delta_i^{-'} := \min(\delta_i^-, \min_{j\in \cS^\star\backslash\{i\}} \M(j,i)),
\end{eqnarray*}
we have 
$$ \delta_i^{-'} = \min_{j\neq i} [\M(j,i)^+ + (\Delta_j^\star)^+]$$
Then, if 
\begin{equation}
\label{eq:cond1appx}
\min_{j\in \cS^\star \setminus \{i\}} \M(i,j) = \min_{j\neq i} \M(i,j),	
\end{equation}
holds, the result simply follows as for any optimal arm $i$,
\begin{eqnarray*}
\Delta_i = \min(\delta_i^+, \delta_i^-) &=& \min(\min_{j\in \cS^\star\backslash\{i\}} \M(i,j), \delta_i^{-'}),\\
&=& \min(\min_{j\neq i} \M(i,j), \delta_i^{-'}),\\
&=& \min_{j\neq i} [\M(i,j) \land (\M(j,i)^+ + (\Delta_j^\star)^+)],\\
&=& \delta_i^\star. 
\end{eqnarray*}
In the sequel, assume \eqref{eq:cond1appx} does not hold, that is assume 
$$ \min_{j\in \cS^\star \setminus \{i\}} \M(i,j) > \min_{j\neq i} \M(i,j).$$
From Lemma \ref{lem:existence_dom} we know that in this case, there exists 
a sub-optimal arm $k$ such that $i$ is the unique arm dominating $k$ and $$\Delta_k^\star = \m(k,i)\quad \text{and}\quad  \min_{j\neq i}\M(i,j) = \M(i,k).$$
Thus, we have 

\begin{eqnarray}
	\min_{j\neq i } \M(i,j) &=& \M(i,k), \\
	&\geq& \m(k,i) = \Delta^\star_k,\\
	&\geq& \delta_i^{-'} \quad \text{(since $i$ dominates $k$, $\M(k, i)^+=0$)}. \label{eq:tech1}
\end{eqnarray}
On the other side, as $\vmu_k \prec \vmu_i$, using the definition of $\Delta_i$ in particular $\delta_i^-$ directly yields 
\begin{eqnarray}
\Delta_i &\leq& \Delta_k^\star = 	\m(k, i)\\ &\leq& \M(i, k) = \min_{j\neq i} \M(i,j)\\
&<& \min_{j\in \cS^\star \backslash\{i\}} \M(i,j)\label{eq:tech2}.
\end{eqnarray}
We recall  that 
\begin{eqnarray*}
\Delta_i = \min(\delta_i^{-'}, \min_{j\in \cS^\star \backslash\{i\}} \M(i,j)),\end{eqnarray*}
which combined with \eqref{eq:tech2} yields 
$$ \Delta_i = \delta_i^{-'}$$
and further combining with \eqref{eq:tech1} yields 
\begin{eqnarray*}
\Delta_i = \delta_i^{-'} &=& \min( \min_{j\neq i } \M(i,j), \delta_i^{-'}) 	\\
&=& \delta_i^\star,
\end{eqnarray*}
which concludes the proof. 
\end{proof}

\section{GEOMETRIC $R$-ROUND ALLOCATION}
\label{sec:alt_alloc}
In this section, we mention an alternative allocation scheme, which could be coupled with \ege{} and we derive an upper-bound on the mis-identification probability of the resulting algorithm. 
\cite{karpov2022collaborative} proposed an $R$-round allocation for fixed-budget BAI. Their allocation scheme follows a geometric grid. In our notation, their allocation is as follows: given $R \in \bN^\star$,
$$ \alpha_0 = 0 \textrm{ and }  \forall r \in \{1, \dots, R\}, \alpha_r = \left\lfloor \frac{T}{R}\cdot \frac{K^{r/R}}{K^{1+1/R}} \right\rfloor $$ and at round $r$,
$t_r := \alpha_r - \alpha_{r-1}$. The arm elimination schedule is 
$$ \forall r\in [R+1], \lambda_r = \left\lfloor \frac{K}{K^{(r-1)/R}}\right\rfloor.$$
Direct calculation shows that as defined, $\bm\lambda$ and $\bm t$ satisfy the conditions \eqref{eq:cond1} and \eqref{eq:cond2}. We denote by EGE-GG the specialization of EGE for the $\bm\lambda, \bm t$ aforementioned. We deduce the following result from Theorem~\ref{thm:main-res}. 
\begin{proposition}
	Let $R\in \bN^\star$. For any $\sigma$-subgaussian multi-variate bandit, the mis-identification probability of EGE-GG satisfies 
	$$ e_T^\text{GG}(\nu) \leq 2(K-1) \lvert \cS^\star\lvert R D\exp\left( - \frac{T}{288\sigma^2 RK^{1/R} H_2(\nu)}\right).$$ 
\end{proposition}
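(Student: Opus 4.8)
The plan is to derive the proposition as a direct corollary of Theorem~\ref{thm:main-res}, exactly as was done for Corollary~\ref{thm:sr_sh}. The only real work is to lower-bound the complexity quantity $\widetilde T^{R,\bm t,\bm\lambda}(\nu) = \min_{r\in[R]}\left(\sum_{s=1}^r t_s\right)\Delta_{(\lambda_{r+1}+1)}^2$ for the specific geometric schedule, and then to check that the constant in front of the exponent matches. First I would record that with the geometric allocation $\sum_{s=1}^r t_s = \alpha_r = \left\lfloor \frac{T}{R}\cdot\frac{K^{r/R}}{K^{1+1/R}}\right\rfloor$, so (assuming $T$ large enough that the floor loses at most a constant factor, or more cleanly using $\alpha_r \geq \frac{T}{R}K^{(r-1)/R}/K^{1/R} \cdot$ up to a harmless adjustment absorbed in the constant) we get $n_r := \alpha_r \gtrsim \frac{T}{RK^{1/R}}\cdot\frac{K^{r/R}}{K}$, while the elimination schedule gives $\lambda_{r+1} = \left\lfloor K^{1-r/R}\right\rfloor \leq K^{1-r/R}$, i.e. $K^{r/R} \leq K/\lambda_{r+1}$, hence $\lambda_{r+1}K^{r/R} \leq K$ — wait, I want the other direction. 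The point is $K^{r/R}/K \geq 1/(2\lambda_{r+1})$ roughly (since $\lambda_{r+1} \approx K^{1-r/R}$ up to the floor, which costs at most a factor $2$), so $n_r \gtrsim \frac{T}{2RK^{1/R}}\cdot\frac{1}{\lambda_{r+1}+1}$.

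Next I would plug this into $\widetilde T^{R,\bm t,\bm\lambda}(\nu)$:
\[
\widetilde T^{R,\bm t,\bm\lambda}(\nu) = \min_{r\in[R]} n_r \Delta_{(\lambda_{r+1}+1)}^2 \;\geq\; \frac{T}{2RK^{1/R}}\min_{r\in[R]}\frac{\Delta_{(\lambda_{r+1}+1)}^2}{\lambda_{r+1}+1} \;=\; \frac{T}{2RK^{1/R}}\cdot\frac{1}{\max_{r\in[R]}(\lambda_{r+1}+1)\Delta_{(\lambda_{r+1}+1)}^{-2}}.
\]
Since $\lambda_{r+1}+1$ ranges over a subset of $\{1,\dots,K\}$ (for $r\in[R]$, $\lambda_{r+1}$ ranges from roughly $K^{1-1/R}$ down to $0$ or $1$), the denominator is at most $\max_{i\in[K]} i\Delta_{(i)}^{-2} = H_2(\nu)$, giving $\widetilde T^{R,\bm t,\bm\lambda}(\nu) \geq \frac{T}{2RK^{1/R}H_2(\nu)}$. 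Applying Theorem~\ref{thm:main-res} with $c \to 1/6$ (so that $1/(4\sigma^2)$ times $(1/6)^2$ produces the factor $1/(144\sigma^2)$, and the extra $2$ from the bound on $\widetilde T$ gives the $288$), we obtain
\[
e_T^{\text{GG}}(\nu) \leq 2(K-1)|\cS^\star| R D \exp\!\left(-\frac{\widetilde T^{R,\bm t,\bm\lambda}(\nu)}{144\sigma^2}\right) \leq 2(K-1)|\cS^\star| R D \exp\!\left(-\frac{T}{288\sigma^2 RK^{1/R}H_2(\nu)}\right),
\]
which is the claimed inequality.

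The main obstacle — and the only place requiring genuine care rather than bookkeeping — is the handling of the two floor functions in the definition of $\alpha_r$ and $\lambda_r$. One must verify that $\lfloor K^{1-(r-1)/R}\rfloor + 1 \leq 2K^{1-(r-1)/R}$ (true whenever $K^{1-(r-1)/R}\geq 1$, i.e. always, for $r \leq R$) so that $(\lambda_{r+1}+1) \leq 2\lambda_{r+1}^{\text{ideal}}$ where $\lambda_{r+1}^{\text{ideal}} = K^{1-r/R}$, and correspondingly that $\alpha_r \geq \frac{1}{2}\cdot\frac{T}{R}\cdot\frac{K^{r/R}}{K^{1+1/R}}$ once $T$ is large relative to $RK^{1+1/R}$ (or, to avoid a size-of-$T$ caveat, one can simply note that for $T < RK^{1+1/R}$ the stated bound is trivial since the exponent is $O(1)$ and the prefactor already exceeds $1$). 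These two factors of $2$ combine with the factor $2$ already present in the EGE-SH-style argument to account for the constant $288$ rather than $144$; I would state this reconciliation explicitly but not belabor the arithmetic. Everything else is immediate from Theorem~\ref{thm:main-res} and the definitions, just as in Appendix~\ref{subsec:cor-proof}.
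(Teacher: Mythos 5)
Your proposal is correct and follows essentially the same route as the paper's proof: the paper likewise writes $\widetilde T^{R,\bm t,\bm\lambda}(\nu)=\min_r \alpha_r(\lambda_{r+1}+1)\frac{\Delta_{(\lambda_{r+1}+1)}^2}{\lambda_{r+1}+1}$, lower-bounds $\alpha_r(\lambda_{r+1}+1)\geq \frac{T}{2RK^{1/R}}$ (under the mild assumption $T\geq 2RK$, which plays the role of your ``otherwise the bound is trivial'' caveat), bounds the remaining maximum by $H_2(\nu)$, and invokes Theorem~\ref{thm:main-res}. Your bookkeeping of where the single factor of $2$ turning $144$ into $288$ comes from is slightly muddled in the prose, but the displayed chain of inequalities you write is the correct one.
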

\begin{proof}
We assume $T\geq 2RK$. We have 
	\begin{eqnarray*}
		\widetilde T^{R,\bm t^{\textrm{GG}}, \bm\lambda^{\textrm{{GG}}}}(\nu)  &:=& \min_{r\in [R]} \left(\sum_{s=1}^r t_s\right)\Delta_{(\lambda_{r+1} +1)}^2,\\
		&=& \min_{r\in [R]} [\alpha_r \Delta_{(\lambda_{r+1} +1)}^2], \\
		&=& \min_{r \in [R] } \left[\alpha_r(\lambda_{r+1} +1) \frac{\Delta_{(\lambda_{r+1} +1)}^2}{(\lambda_{r+1} +1)}\right],\\
		&\geq& \frac{T}{2RK^{1/R}} \frac{1}{\max_{r\in [R]} [(\lambda_{r+1} +1)\Delta_{(\lambda_{r+1} +1)}^{-2}] }
			\\
			&\geq& \frac{T}{2RK^{1/R}} \frac{1}{H_2(\nu)}. 
			\end{eqnarray*}
	The results follows by Theorem~\ref{thm:main-res}. 
\end{proof}
Similar derivation can be done for any allocation scheme that satisfy 
\eqref{eq:cond1} and \eqref{eq:cond2} resulting in different instanciations of \ege{}.       
\section{IMPLEMENTATION DETAILS AND ADDITIONAL EXPERIMENTS}
\label{sec:supp_mat}
In this section, we give additional details about the experiments and we report additional experimental results.
\subsection{Implementation details and computational complexity}
\label{sec:implementation}
We discuss implementation details and give additional information on the datasets for reproducibility. 
\subsubsection{Implementation details} 
\paragraph{Setup} We have implemented the algorithms mainly in \texttt{C++17} interfaced with \texttt{python 3.10} through the \texttt{cython3} package. The experiments are run on an ARM64 8GB RAM/8 core/256GB disk storage computer. 

\paragraph{Datasets}: The COV-BOOST datasets can be found in Appendix~I of \cite{kone2023adaptive}. A link to download the  SNW dataset is given in \cite{zuluaga_active_2013}. We plot below 2 of the 4 synthetic instances used in the main paper. The means of the instance in the fourth experiment in dimension 10 are given as a \texttt{numpy} data file in the supplementary material. 

\begin{figure*}[hb]
\centering
 \begin{minipage}{\textwidth}
      \centering 
      \begin{minipage}{0.3\linewidth}
          \begin{figure}[H]
              \includegraphics[width=\linewidth]{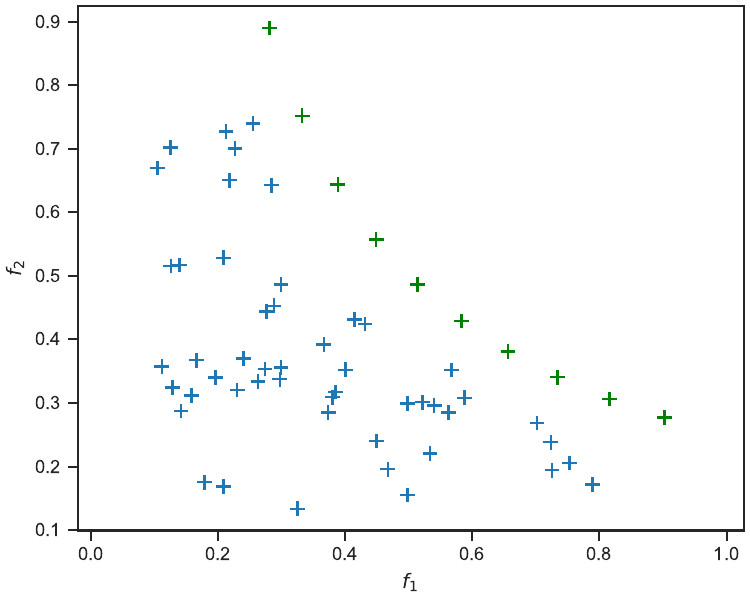}
              \caption{Synthetic Experiment 1: Group of  arms on a convex Pareto set.}
          \end{figure}
      \end{minipage}
      \hspace{0.5cm}
            \begin{minipage}{0.3\linewidth}
          \begin{figure}[H]
              \includegraphics[width=\linewidth]{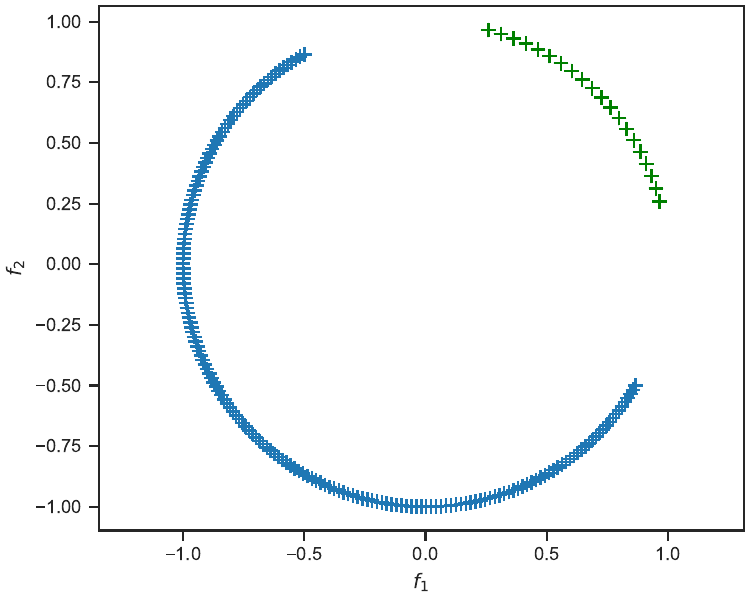}
              \caption{Synthetic Experiment 3: $K=200$ arms on the unit circle.}
          \end{figure}
      \end{minipage}
 \hspace{0.5cm} 
     \begin{minipage}{0.3\linewidth}
     \vspace{-0.85cm}
     \begin{table}[H]
	\centering
	\begin{tabular}{|c|c|c|c|}
		\hline
 & $K$ & $D$ & $\lvert \cS^\star \lvert$\\
\hline\hline 
\!\!COV-BOOST\!\! & 20 & 3 & \!\!2\!\!\\\hline \hline 
SNW& 206&2&5 \\\hline\hline 
\!\!\!\!Exp. 1\!\!\!\! & 60 & 2 & 10
\\\hline\hline 
\!\!\!\!Exp. 2\!\!\!\! & 10 & 2 & \!\!2\!\!\\\hline\hline 
\!\!\!\!Exp. 3\!\!\!\! & 200& 2& \!\!20\!\!\\\hline \hline 
\!\!\!\!Exp. 4\!\!\!\! & 50&10&\!\!18\!\!\\\hline 
	\end{tabular}
	\caption{Summary of the bandit instances used in the main paper.}
		\label{tab:datasets}
\end{table}

                \end{minipage}
      \end{minipage}
   \end{figure*}

We summarize in Table \ref{tab:datasets} the instances used in the main paper.

\subsubsection{Computational complexity}
\paragraph{Time and memory complexity}
The time and memory complexity of our implementations are dominated by the computation and the storage of $\M(i,j; r)$ and $\m(i,j;r)$ at each round $r$ for all the pairs of active arms. The time complexity of and $R$-rounds implementation of \ege{} is $\cO(RK^2D)$ and the memory complexity is $\cO(K^2)$.

\paragraph{Runtime and scaling} We report below the average runtime of our algorithms for the experiments reported in the main paper. 
We report on Fig.\ref{fig:exec-time10} and Fig.\ref{fig:exec-time10} the runtime versus the number of arms for EGE-SR/SH with varying $K$ and $T=1000$. We average the runtime over 2000 runs. 
We observe that both algorithms can be used for applications on large scale datasets with a reasonable runtime.  
\begin{figure*}[ht]
\centering
 \begin{minipage}{0.95\textwidth}
      \centering 
      \begin{minipage}{0.47\linewidth}
          \begin{figure}[H]
              \includegraphics[width=\linewidth]{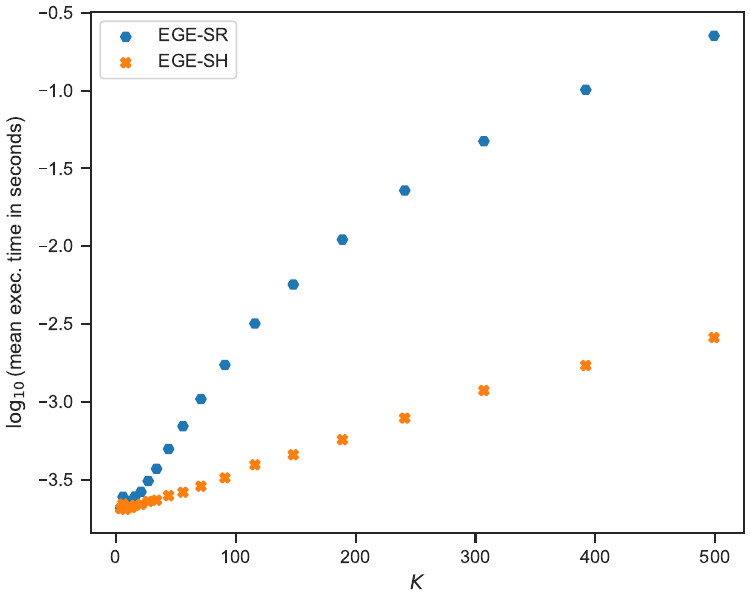}
              \caption{$\log_{10}$ of the mean execution time in seconds averaged over $2000$ trials for an instance with $D=2.$}
              \label{fig:exec-time2}
          \end{figure}
      \end{minipage}
 \hspace{0.5cm} 
            \begin{minipage}{0.47\linewidth}
          \begin{figure}[H]
              \includegraphics[width=\linewidth]{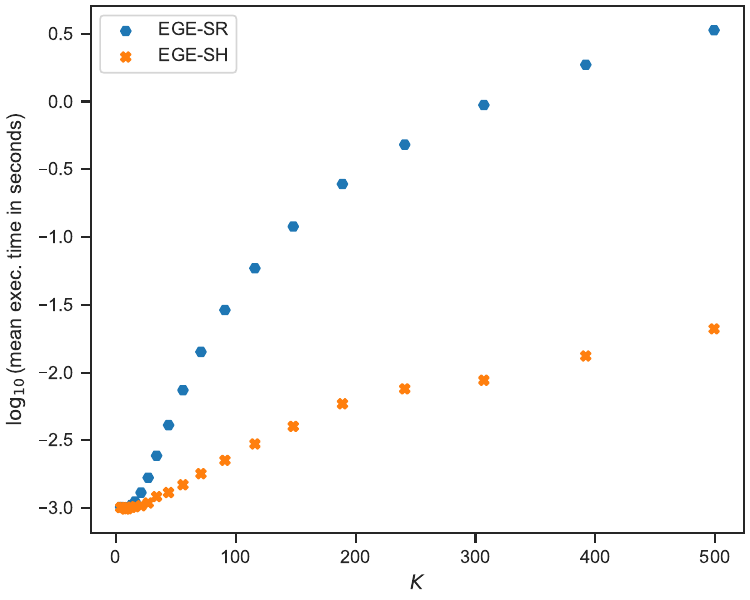}
              \caption{$\log_{10}$ of the mean execution time in seconds averaged over $2000$ trials for an instance with $D=20.$}
              \label{fig:exec-time10}
          \end{figure}
      \end{minipage}
      \end{minipage}
   \end{figure*}

\subsection{Additional experiments}
\label{sec:more_exp}
We report results of additional experiments in particular for the $k$-relaxation. We try different values of $k$ and we compute the loss or mis-identification error averaged over the trials. We plot the average loss versus the budget for each value of $k$. For additional experiments on PSI, the protocol is as described in the main paper. 
\subsubsection{Experiments on PSI-$k$}
We report the experimental result of \egesrk{} on two bandit instances with a large number of arms. For each bandit instance we test different values of $k$ and we report $\log_{10}$ of the loss averaged over 4000 independent trials. Recall that given $k$ and $T$ the PSI-$k$ loss is 
\begin{eqnarray*}
	\cL(\widehat S_T, k) := \begin{cases}
		\ind\{\widehat S_T \subset \cS^\star\} & \text{if} \quad  \lvert \widehat S_T \lvert = k, \\
		\ind\{\widehat S_T = \cS^\star\} & \text{else}. 
	\end{cases}
\end{eqnarray*}
\begin{figure*}[ht]
\centering
 \begin{minipage}{0.95\textwidth}
      \centering 
      \begin{minipage}{0.47\linewidth}
          \begin{figure}[H]
              \includegraphics[width=\linewidth]{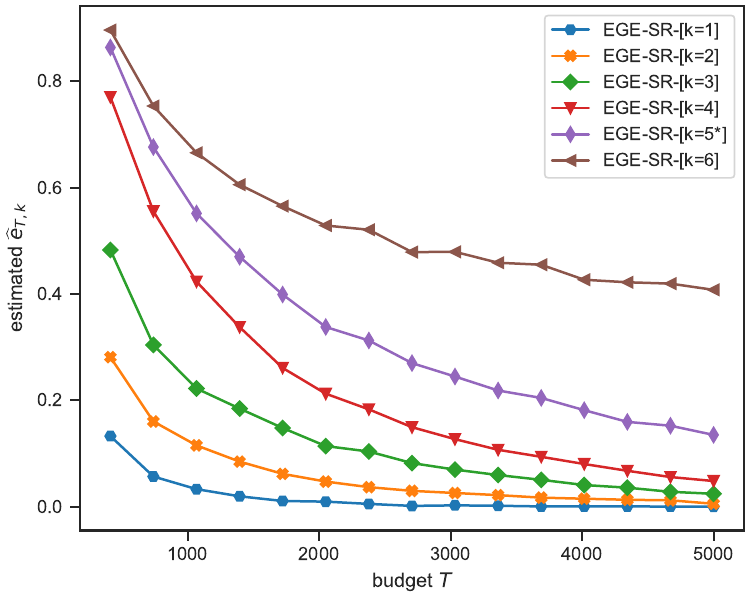}
              \caption{Estimated PSI-$k$ loss for different values of $k$ on the SNW dataset.}
              \label{fig:srk-fig-snw}
          \end{figure}
      \end{minipage}
 \hspace{0.5cm} 
            \begin{minipage}{0.47\linewidth}
          \begin{figure}[H]
              \includegraphics[width=\linewidth]{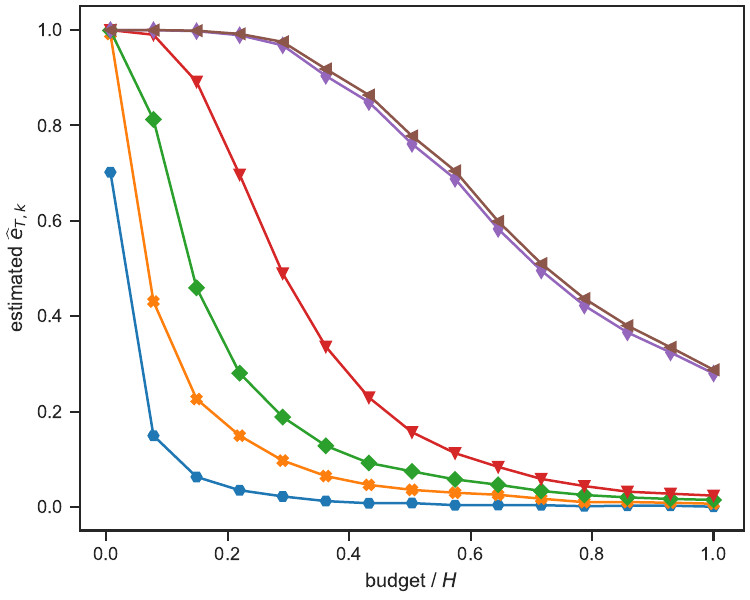}
              \caption{Estimated PSI-$k$ loss for different values of $k$ on Exp.1 (convex Pareto set)}
              \label{fig:srk-fig-2}
          \end{figure}
      \end{minipage}
      \end{minipage}
   \end{figure*}
   
We observe (Fig. \ref{fig:srk-fig-snw} and Fig.\ref{fig:srk-fig-2}) that the loss (the probability of error of \egesrk{}) decreases exponentially fast with the budget. The asterisk indicates the true size of the Pareto set of the instance. We remark that the smaller $k$ the smaller the loss, which is expected as the quantity $H_2(\nu)^{(k)}$ (Theorem~\ref{thm:main-amp}) increases with $k$. We also note that for $k=\lvert \cS^\star\lvert$, the loss can be smaller than when $k>\lvert \cS^\star\lvert$. An equivalent remark (in terms of sample complexity) has been made by \cite{kone2023adaptive} for the PSI-$k$ problem in fixed-confidence. Indeed for $k=\lvert \cS^\star\lvert$ the PSI-$k$ problem is not exactly PSI. As in PSI-$k$ we stop as soon as $k$ optimal arms have been found, in PSI, on the same run, we would continue until all the arms have been classified; thus we could make mistakes on theses additional steps. Finally, note that the expected loss of PSI-$k$ is the same as PSI for any $k>\lvert \cS^\star\lvert$. 

\newcommand{\hv}{\text{HV}}
\paragraph{Hyper-volume metric}: The hyper-volume metric (see e.g \cite{daulton}) measures the region dominated by a set of points. Given a reference point $\bm r$ and a set $S\subset \bR^D$ the hyper-volume of $S$ is 
$$ \text{HV}(S) := \lambda\left(\bigcup_{\bm x \in S} [\bm r, \bm x]\right),$$ 
where $\lambda$ is the Lebesgue measure on $\bR^D$ and $[\bm r, \bm x]$ is the hyper-rectangle $[r^1, x^1]\times \dots \times [r^D, x^D]$. If $S'\subset S$ then $\hv(S)\geq \hv(S')$. More importantly, the hyper-volume of $S$ is equal to the hyper-volume of its Pareto set. We use this indicator to measure the quality of the set returned by \egesrk{}. We report the average value of $$ \alpha_{\hv}:= \frac{\hv(\widehat S_T)}{\hv(\cS^\star)}$$ over the trials for different values of $k$ and $T$ fixed. The larger this hyper-volume fraction the more a set $\widehat S_T\subset \cS^\star$ covers $\cS^\star$ in term of dominated region and thus is representative of $\cS^\star$. Note that $S = [K]$ also has maximal hyper-volume indicator, therefore we need to couple $\alpha_{\hv}$ with $e_{T, k}(\nu)$ (reported in Fig.\ref{fig:sorting_net} in the main and Fig.\ref{fig:srk-fig-snw-hpv}) to properly interpret the quality of the returned set. 

We observe on Fig.\ref{fig:srk-fig-snw-hpv} that the estimated value of $\bE[\alpha_\hv]$ is nearly $0.85$ for $k=1$ and it naturally increases with $k$. In Fig.\ref{fig:srk-fig-2-hpv}, for $k=1$, the arm returned by \egesrk{} covers nearly half of the region dominated by the Pareto set. These observations suggest that the arms with the highest hyper-volume contribution are easier to identify as optimal hence they will be the first added to $B_r$. 
 \begin{figure*}[ht]
\centering
 \begin{minipage}{0.95\textwidth}
      \centering 
      \begin{minipage}{0.47\linewidth}
          \begin{figure}[H]
              \includegraphics[width=\linewidth]{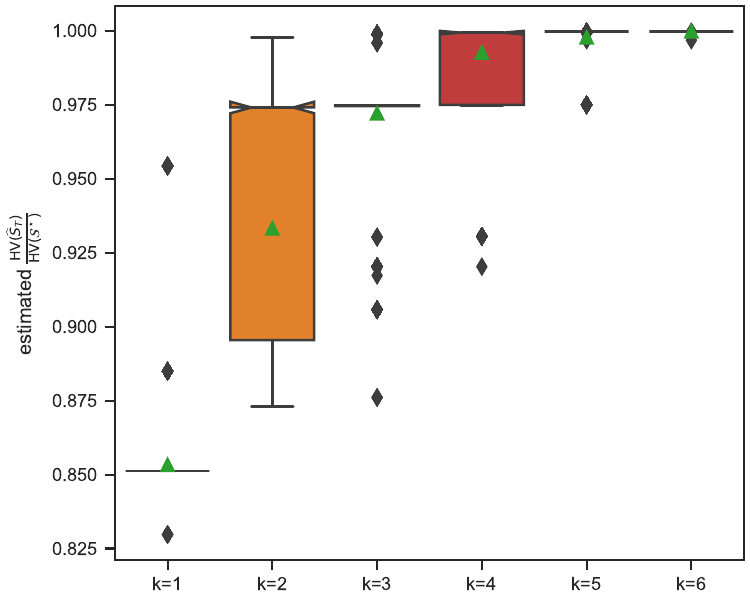}
              \caption{Hyper-volume fraction of the returned set for $T=5000$ on the SNW dataset.}
              \label{fig:srk-fig-snw-hpv}
          \end{figure}
      \end{minipage}
 \hspace{0.5cm} 
            \begin{minipage}{0.47\linewidth}
          \begin{figure}[H]
              \includegraphics[width=\linewidth]{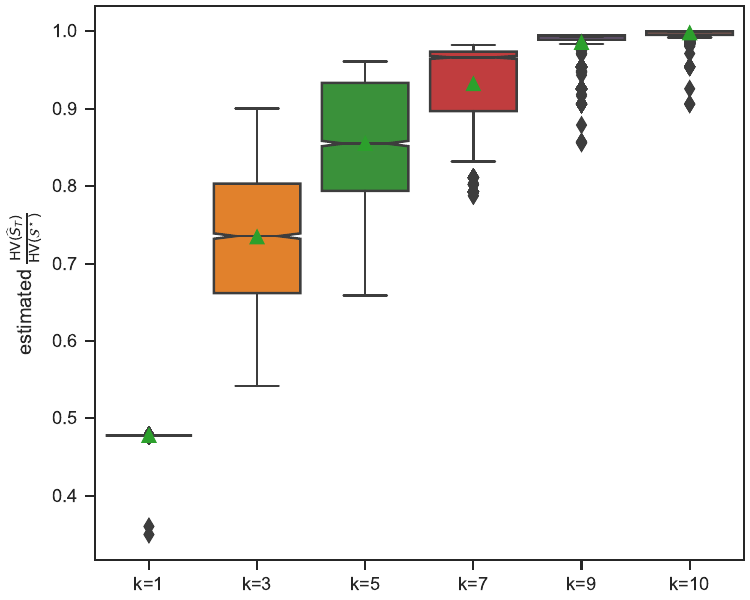}
              \caption{Hyper-volume fraction of the returned set for $T=H$ on Exp.1 (convex Pareto set)}
              \label{fig:srk-fig-2-hpv}
          \end{figure}
      \end{minipage}
      \end{minipage}
   \end{figure*}
\subsubsection{Additional experiments on PSI}
We report additional experimental results on 4 synthetic instances which complements our results reported in the main paper. The new instances are described below. 

{\bfseries Experiment 5: 2 clusters of arms.}  We set $K=20$ and we choose $(\vmu_1, \dots, \vmu_{10}) \sim \mathcal{U}\left([0.2, 0.4]^2\right)^{\otimes 10}$ and $(\vmu_{11}, \dots, \vmu_{20})\sim \mathcal{U}\left([0.5, 0.7]^2\right)^{\otimes 10}$. There are 4 optimal arms for this instance. 

{\bfseries Experiment 6: All the arms are optimal} $K=10, D=2$ and for any arm $i$, $\mu_i^1= 0.75 - 0.65^i, \mu_i^2 = 0.25+0.65^i$

{\bfseries Experiment 7: All the arms have the same sub-optimality gap.} $K=22, D=2$ and we choose $\bm\Theta$ to have $\Delta_1=\dots=\Delta_{22}$. Unlike single-objective bandit, such instances can be generated with all arms being different. We choose for $i=1,\dots 8$, $\vmu_i := (0.3+ c_i, 0.8 -c_i)^\T$ and $c_i = (i-1)*c$. For $i=9,\dots, 15$, $\vmu_i := (0.25 + c_{i-8}, 0.7-c_{i-8})^\T$ and for $i=16, \dots, 22$, $\vmu_i:=\vmu_{i-7} - (0, -0.05)^\T$.

{\bfseries Experiment 8: Geometric progression with a single optimal arm.}  We set $K=5, D=2$ and for any $i\in \{1, \dots, 5\}$, $\mu_i^1 = \mu_i^2 = 0.75 - 0.25^i$. For this instance there is a unique optimal arm.

\begin{figure*}[]
\centering
 \begin{minipage}{\textwidth}
      \centering 
      \begin{minipage}{0.23\linewidth}
          \begin{figure}[H]
              \includegraphics[width=\linewidth]{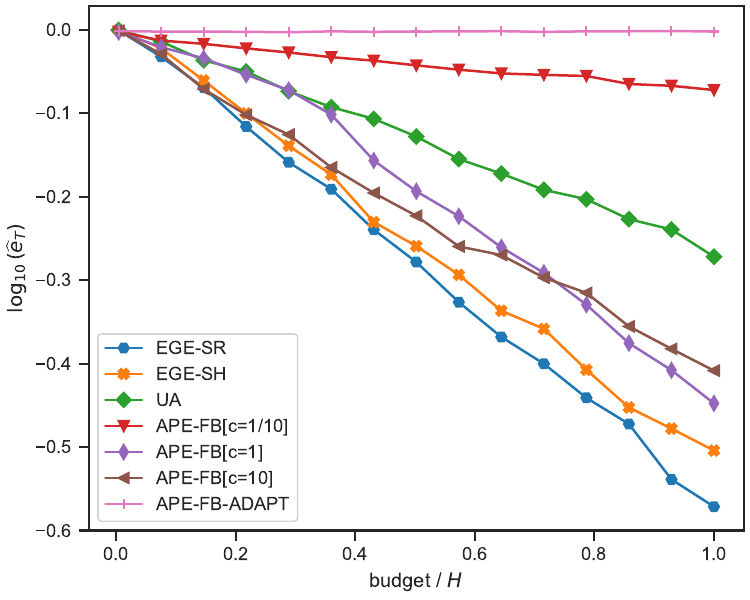}
              \caption{Synthetic Experiment 5: 2 clusters of arms}
              \label{fig:sup-psi-5}
          \end{figure}
      \end{minipage}
 \hspace{0.125cm} 
     \begin{minipage}{0.23\linewidth}
          \begin{figure}[H]
        \includegraphics[width=\linewidth]{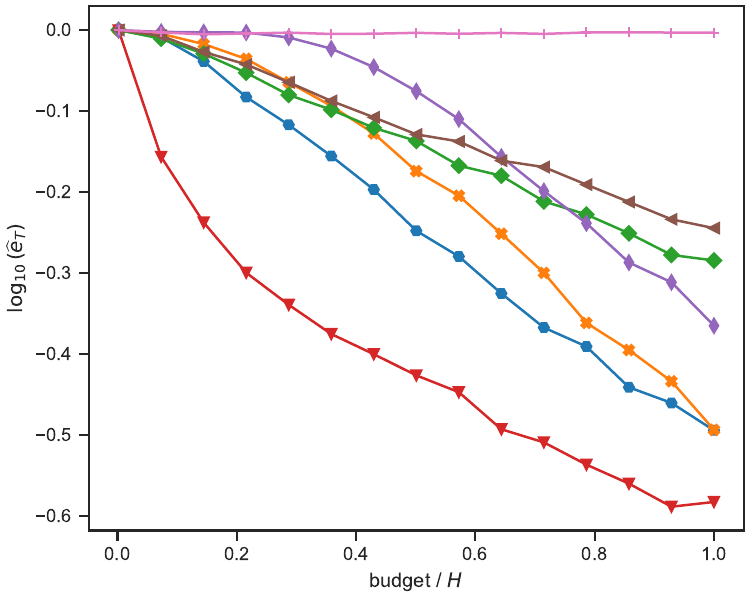}
              \caption{Synthetic Experiment 6: All the arms are optimal.}
              \label{fig:sup-psi-6}
          \end{figure}
      \end{minipage}
 \hspace{0.125cm}
            \begin{minipage}{0.23\linewidth}
          \begin{figure}[H]
              \includegraphics[width=\linewidth]{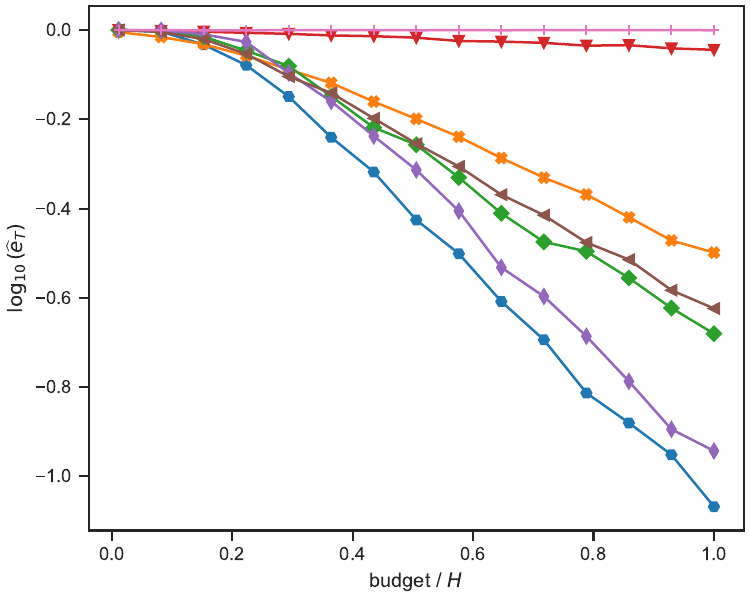}
              \caption{Synthetic Experiment 7: The same gap: $\Delta_i=\Delta$ for each arm}
              \label{fig:sup-psi-7}
          \end{figure}
      \end{minipage}
\hspace{0.125cm}
            \begin{minipage}{0.23\linewidth}
          \begin{figure}[H]
              \includegraphics[width=\linewidth]{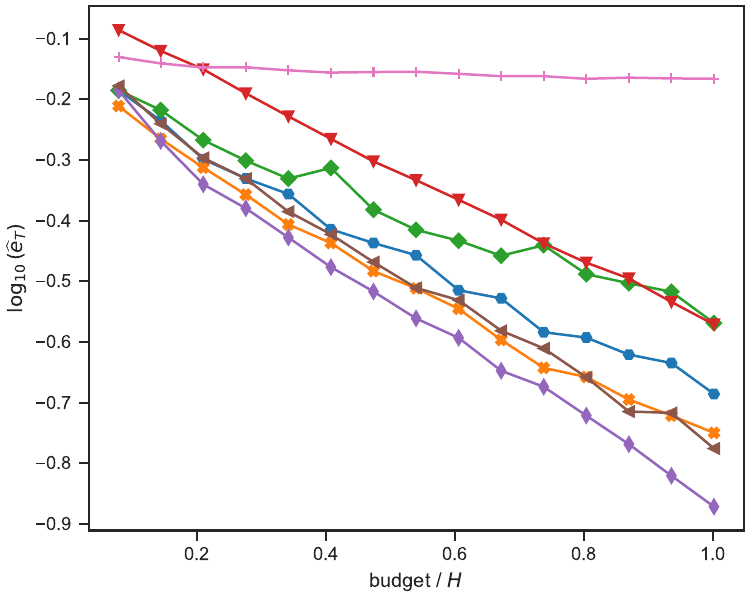}
              \caption{Synthetic Experiment 8: Geometric progression}
                  \label{fig:sup-psi-8}
          \end{figure}
      \end{minipage}
      \end{minipage}
   \end{figure*}

The experiments (Fig.\ref{fig:sup-psi-5} to  Fig.\ref{fig:sup-psi-8}) confirm the superior performance of \ege{} on general instance types. We note however that on experiment 7, where all the arms have the same sub-optimality gaps EGE-SH is slightly outperformed by Uniform allocation but  both are largely outperformed by EGE-SR and \apefb{}. This is not surprising as on such instances, the exponential decay rate of the mis-identification error of uniform allocation is smaller than that of EGE-SH and the "aggressive'' geometrical allocation of SH makes it allocate way less samples to arms that are discarded in early stages. Indeed when $\Delta_1 = \dots = \Delta_K$ we have 
 $K \Delta_{(1)}^{-2} = H_1(\nu) = H_2(\nu)$ thus 
  the result of Tab.\ref{tab:summary} becomes 
 $$ e_T^\text{UA}(\nu) \leq  2(K-1)\lvert\cS^\star\lvert D\exp\left(-\frac{T}{144\sigma^2 H(\nu)}\right),$$
 which is even tighter than \apefb{} when $\log(T)$ is larger than $\lvert \cS^\star\lvert$. For BAI (PSI with $D=1$), it is known \citep{audibert_best_2010} that on instances where the gaps are all the same, the uniform allocation is optimal. For example, by taking $\theta_1 = 0.7$ and $\theta_i = 0.4$ for $i=2,\dots, 8$ and a Bernoulli bandit, the gaps are all the same on this instance and one can observe empirically that the Uniform allocation slightly outperforms Sequential~Halving on this instance. 

\end{document}